\documentclass[11pt,letterpaper]{article}

\usepackage{amsmath,amssymb,amsthm}
\usepackage{xcolor}
\usepackage{xspace}
\usepackage[most]{tcolorbox}
\usepackage{deepthink}

\usepackage[nameinlink]{cleveref}
\usepackage[
  backend=biber,
  style=alphabetic,
  maxbibnames=100,
  minbibnames=100,
  maxcitenames=2,
  mincitenames=2
]{biblatex}
\RequirePackage{fancyhdr}
\RequirePackage{algorithm}
\title{MCLR: Improving Conditional Modeling via Inter-Class Likelihood-Ratio Maximization and Unifying Classifier-Free Guidance with Alignment Objectives}

\newcommand{\corrauth}{\textsuperscript{\ddag}}
\long\def\red#1 {\bgroup\color{red}#1 \egroup}

\authorblock{
  
  {\textbf{Xiang Li}}\textsuperscript{1,}\corrauth,
  {\textbf{Yixuan Jia}}\textsuperscript{1},
  {\textbf{Xiao Li}}\textsuperscript{1},
  {\textbf{Jeffrey A. Fessler}}\textsuperscript{1},
  {\textbf{Rongrong Wang}}\textsuperscript{2},
  {\textbf{Qing Qu}\textsuperscript{1}}
}

\affiliation{
  University of Michigan \textsuperscript{1}\quad $\cdot$ \quad Michigan State University\textsuperscript{2}
}

\authornote{
  \corrauth\ Corresponding author
}

\abstracttext{
\noindent Diffusion models achieve strong performance in generative modeling, but their success often relies heavily on classifier-free guidance (CFG), an inference-time heuristic that modifies the sampling trajectory. In theory, diffusion models trained with standard denoising score matching (DSM) should recover the target data distribution, raising two fundamental questions:
(i) why is inference-time guidance necessary in practice,
and (ii) can its underlying effect be internalized into a principled training objective? In this work, we argue that a key limitation of standard DSM is insufficient inter-class separation. To address this issue, we propose \textbf{MCLR}, an alignment objective that explicitly \textbf{m}aximizes inter-\textbf{c}lass \textbf{l}ikelihood-\textbf{r}atios during training. Fine-tuning diffusion models with MCLR induces CFG-like improvements under standard sampling, substantially improving guidance-free conditional generation and narrowing the gap to inference-time CFG. Beyond these empirical benefits, we show theoretically that the CFG-guided score is exactly the optimal solution to a sample-adaptive weighted MCLR objective. This result connects CFG to alignment-based objectives, providing a mechanistic interpretation of CFG as an implicit inference-time contrastive alignment procedure.
}

\keywords{Diffusion Models,
Classifier-Free Guidance,
Contrastive Fine-Tuning, RL Alignment Algorithms}

\date{\today}
\correspondence{\href{mailto:forkobe@umich.edu}{forkobe@umich.edu}}
\resources{\href{https://github.com/Morefre/MCLR-Contrastive-Alignment-for-Improving-Conditional-Modeling/tree/main}{Code Repository}}

\headerlogo{Deepthink_landscape_do_not_delete_compressed.png}{https://deepthink-umich.github.io}

\begin{document}

\makeDeepthinkHeader
\vspace{-0.2in}


\begin{figure*}[h]
    \centering
    \includegraphics[width=0.5\linewidth]{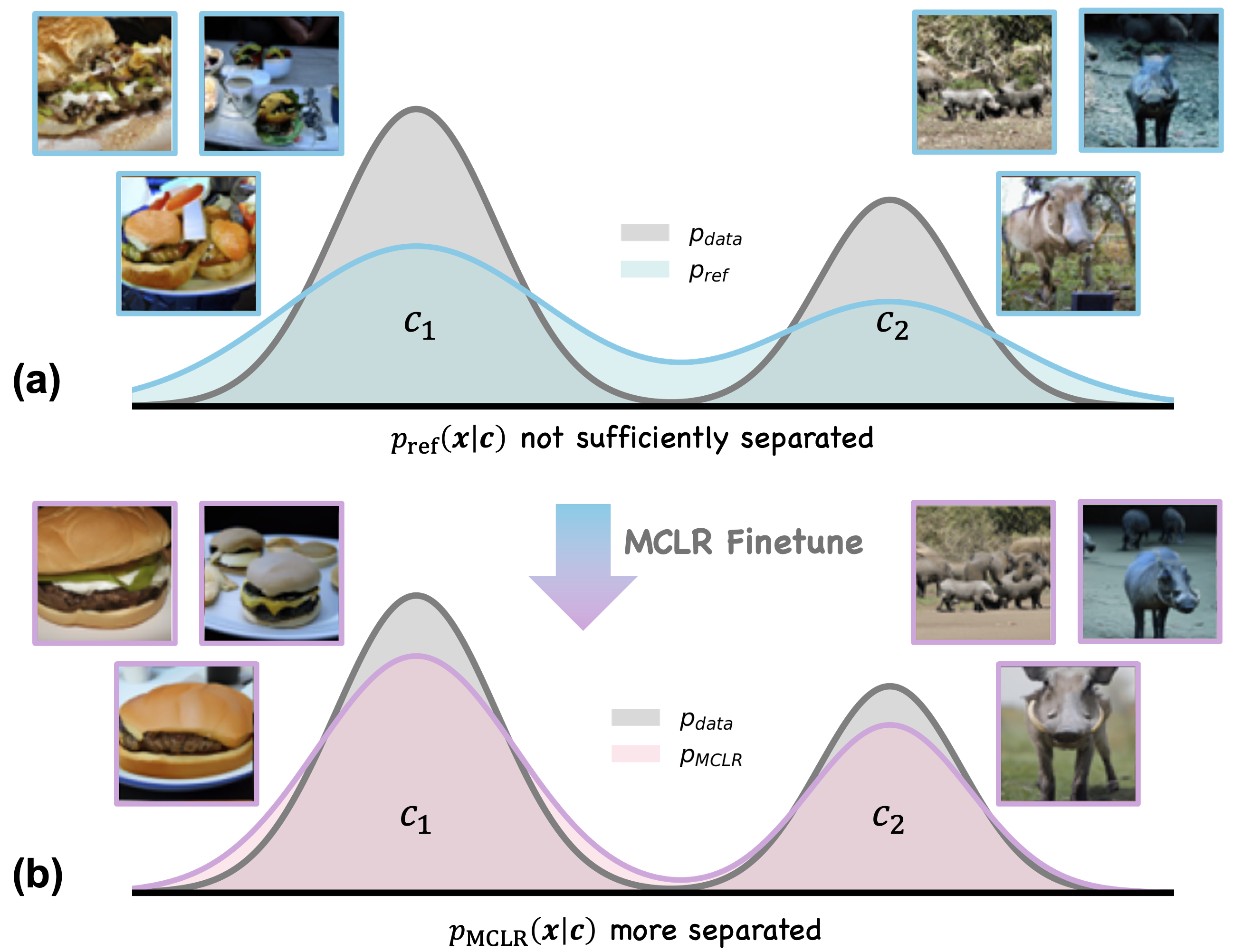}
   \caption{\textbf{Conceptual Illustration of MCLR.}
(a) Samples generated from two classes using the same initial noises exhibit high visual similarity despite different conditioning labels, indicating insufficient separation of the learned conditional distributions. (b) MCLR mitigates this issue by encouraging class separation, resulting in generations with more distinct class-specific features.}
\vspace{-0.1in}
\label{fig:conceptual figure}
\end{figure*}

\newpage
\tableofcontents

\section{Introduction}
Diffusion models~\cite{ho2020denoising,songscore,karras2022elucidating,lipmanflow} have become the dominant paradigm for high-fidelity generative modeling, enabling state-of-the-art visual generation systems~\cite{rombach2022high,saharia2022photorealistic,ramesh2022hierarchical,esser2024scaling}. These models generate samples by reversing a forward noising process using a learned score function, typically trained via denoising score matching~\cite{6795935}.


Although the reverse diffusion sampling process is theoretically guaranteed to recover the target distribution~\cite{anderson1982reverse}, in practice it often yields samples of noticeably inferior quality: conditional generation frequently appears visually incoherent or insufficiently faithful to the intended class or prompt~\cite{bradley2024classifier}. In fact, nearly all high-quality generation by diffusion models relies heavily on classifier-free guidance (CFG)~\cite{ho2022classifier}, an inference-time modification of the reverse sampling process that injects an additional guidance term—the difference between conditional and unconditional scores. While CFG substantially improves sample quality, reducing FID scores by up to $75\%$ in well-established works~\cite{peebles2023scalable, yurepresentation}, its empirical necessity exposes a gap between the theoretical optimality of DSM and its practical behavior. This raises a central question:
\begin{center}
\begin{tcolorbox}[
  colback=gray!18,
  colframe=black,
  arc=4mm,
  boxrule=0.8pt,
  width=0.92\linewidth
]
\centering
\vspace{0.3em}
\emph{\textbf{Can we modify the DSM training objective in a principled way to achieve high-quality conditional generation under standard reverse sampling?}}
\vspace{0.3em}
\end{tcolorbox}
\end{center}
Recent observation~\cite{li2026towards} suggests that standard conditional models suffer from insufficient inter-class separation: generated samples are less distinguishable across classes than real data, indicating that class-dependent structures are not fully captured by diffusion models. Motivated by this insight, we propose \textbf{MCLR}, a principled alignment objective that explicitly prompts inter-class separability by \textbf{M}aximizing the inter-\textbf{C}lass log-\textbf{L}ikelihood \textbf{R}atio. By encouraging the model to amplify density differences between a target class and other classes, MCLR strengthens class-specific structures in the learned score function.
Empirically, models fine-tuned with MCLR exhibit CFG-like improvements under standard sampling, substantially improving guidance-free conditional generation and narrowing the gap to inference-time CFG.


Beyond the empirical benefits, we provide a theoretical result showing that the CFG-guided score coincides exactly with the optimizer of a weighted MCLR objective. This establishes a formal equivalence between CFG and alignment-based training objectives, revealing CFG as an implicit inference-time contrastive alignment algorithm.

The core principle of MCLR—leveraging inter-class contrastive signals to improve conditional modeling—can also be realized with broader contrastive learning approaches such as \textbf{Direct Preference Optimization (DPO)}~\cite{rafailov2023direct}. To assess the uniqueness of MCLR and for completeness, we adapt DPO to conditional generation by treating samples from the target class $\mb c$ as preferred and samples from other classes as non-preferred. We refer to this formulation as \textbf{C}onditional \textbf{C}ontrastive DPO (\textbf{CC-DPO}). We show that CC-DPO induces a “gamma-powered” distribution~\cite{bradley2024classifier}, reshaping the base model via the density ratio $\frac{p(\mb x \mid \mb c)}{p(\mb x)}$. Moreover, we establish a previously unexplored connection between CC-DPO and a recently proposed algorithm, \textbf{C}onditional \textbf{C}ontrastive Alignment (CCA)~\cite{chentoward}, demonstrating that both objectives admit the same optimal solution. Comprehensive experiments reveal that MCLR consistently outperforms these alternatives across diverse models and datasets.

\paragraph{Summary of Contributions.} Our main contributions are as follows:
\begin{itemize}[leftmargin=*]
\item \textbf{A Principled Alignment Objective for Conditional Modeling.}
We propose \textbf{MCLR}, a theoretically grounded fine-tuning objective that explicitly maximizes inter-class log-likelihood ratios to improve conditional generative modeling. Across diverse models and datasets, MCLR achieves substantial improvements in sample fidelity under standard reverse sampling, consistently outperforming training-time contrastive alternatives such as CC-DPO and CCA.

\item \textbf{Theoretical Equivalence between CFG and Alignment Objectives.}
We prove that the classifier-free guidance (CFG)–induced score coincides exactly with the optimizer of a weighted MCLR objective. This establishes a formal connection between CFG and alignment-based training, providing a mechanistic interpretation of CFG as an implicit inference-time alignment algorithm.

\item \textbf{Understanding Contrastive Alternatives.}
We provide both theoretical and empirical analyses of contrastive alternatives such as DPO and CCA. When adapted to conditional generation, we show that DPO induces a gamma-powered density transformation equivalent to that of CCA---an equivalence that, to our knowledge, has not been previously established.

\end{itemize}





\section{Preliminaries}

\newcommand{\der}{\mathrm{d}} 

\subsection{Basics of Diffusion Models}
\label{subsection: basiscs of diffusion models}
Let $p_\text{data}(\mb x)$ denote the ground-truth data distribution. Diffusion models construct a forward noising process that gradually perturbs $p_\text{data}$ into a simple prior distribution using a stochastic differential equation (SDE):
\begin{align}
    \label{forward process}
    \der {\mb x} = \mb f(\mb x,t) \,\der t+g(t) \,\der {\mb w},
\end{align}
where $\mb f(\cdot,t)$ is the drift coefficient, $g(t)$ is the diffusion coefficient, and $\mb w$ denotes the standard Brownian motion. Let $p_t(\mb x)$ be the marginal distribution of $\mb x(t)$, and $p_{0t}(\mb x_t|\mb x)$ the transition density from $\mb x(0)$ to $\mb x(t)$. For sufficiently large $T$, the distribution $p_T(\mb x)$ becomes indistinguishable from a tractable prior $\pi(\mb x)$, e.g., an isotropic Gaussian. The SDE~\eqref{forward process} admits a reverse-time probability-flow ODE~\cite{songscore}:
\begin{align}
    \der \mb x &= \bigl[\mb f(\mb x,t)-\frac{1}{2}g^2(t) \nabla_{\mb x}\log p_t(\mb x)\bigr]
    \der t.\label{reverse ODE}
\end{align}
Sampling from the reverse ODE requires access to the score function $\nabla_{\mb x}\log p_t(\mb x)$, which can be approximated using a deep network $s_{\mb\theta}(\mb x,t)$ trained via the denoising score matching (DSM) objective: 
\begin{align}
    \mathcal{J}_{\text{DSM}}(\mb\theta;w(\cdot))
    &:= \frac{1}{2}\int_{0}^{T}
    \mathbb{E}_{p(\mb x),\,p_{0t}(\mb x_t| \mb x)}
    \Big[w(t)
    \big\|
    \nabla_{\mb x_t}\log p_{0t}(\mb x_t| \mb x)
    -\mb s_{\mb\theta}(\mb x_t,t)
    \big\|_2^2
    \Big]\; \der t,
\label{denoising score matching}
\end{align}
where $w(t)$ is a positive weighting function. For conditional diffusion models, the score network takes a conditional embedding $\mb c$ as input, and the DSM objective naturally extends to the conditional setting by taking the expectation over class labels and class-conditional data distributions: 
\begin{align}
\mathcal{J}_{\text{DSM}}(\mb\theta,\mb c;w(\cdot)):=\frac{1}{2}\int_{0}^T\mathbb{E}_{\mb c, p(\mb x|\mb c),p_{0t}(\mb x_t|\mb x)}[w(t)\|\nabla_{\mb x_t}\log p_{0t}(\mb x_t|\mb x)-\mb s_{\mb\theta}(\mb x_t,t,\mb c)\|_2^2]\der t.
\end{align}
In this work, we focus on conditional diffusion models, for which the reverse ODE in~\eqref{reverse ODE} becomes:
\begin{align}
    \der \mb x = \bigl[ f(\mb x,t)-\frac{1}{2}g^2(t) \nabla_{\mb x}\log p_t(\mb x|\mb c)\bigr] \der t.
    \label{reverse conditional ODE}
\end{align}

\subsection{Evidence Lower Bound for Diffusion Models}
\label{ELBO for diffusion models}
Let $p_{\mb\theta}^\text{ode}(\mb x)$ denote the distributions induced by the reverse ODE~\eqref{reverse ODE}. Theorem 2 of~\cite{song2021maximum} shows that, under certain regularity conditions, the log-likelihood satisfies:
\begin{align}
    \underbrace{\mathbb{E}_{p(\mb x)}[\log p_{\mb\theta}^\text{ode}(\mb x)]}
    _{\text{\makebox[0pt]{Maximum Likelihood Estimation}}}
    = -\mathcal{J}_\text{DSM}(\mb\theta;g^2(\cdot)) + C, 
    \label{MLE equals DSM}
\end{align}
where $C$ is a constant independent of $\mb\theta$. \Cref{MLE equals DSM} is analogous to the evidence lower bound (ELBO) in variational autoencoders, revealing a fundamental connection between DSM and maximum likelihood estimation (MLE). This connection enables likelihood-based training in diffusion models~\cite{mardanivariational,wallace2024diffusion, zheng2025direct}.

\subsection{Classifier-Free Guidance}
Although the reverse ODE in~\eqref{reverse conditional ODE} is theoretically guaranteed to sample from the target conditional distribution, its practical generation quality is often unsatisfactory. In practice, high-quality conditional generation requires modifying the standard reverse process with an additional guidance term, known as classifier-free guidance (CFG)~\cite{ho2022classifier}, which leads to the perturbed reverse ODE:
\begin{align}
    \der \mb x = \bigl[\mb f(\mb x,t)-\frac{1}{2}g^2(t) (\nabla_{\mb x}\log p_t(\mb x|\mb c) 
    +\gamma (\underbrace{\nabla_{\mb x}\log p_t(\mb x|\mb c)-\nabla_{\mb x}\log p_t(\mb x)}_{\text{CFG guidance}})\bigr] \der t
    \label{CFG+ODE},
\end{align}
where $\gamma$ controls the guidance strength. Intuitively, CFG sharpens the class- or conditional-specific structure by amplifying the difference between conditional and unconditional scores. However, the CFG-perturbed reverse sampling process~\eqref{CFG+ODE} does not, in general, correspond to any known forward process~\cite{bradley2024classifier}. Despite recent progress towards understanding CFG~\cite{wutheoretical,chidambaramdoes,bradley2024classifier,pavasovic2025classifier,li2026towards,liprovable,jin2025stage,yangelucidating,ventura2026emergence}, its underlying mechanisms remain only partially understood. This post-hoc modification of the sampling procedure motivates the search for principled alternatives that reproduce CFG-like improvements while preserving theoretical consistency, which may, in turn, shed light on the mechanisms underlying CFG itself.

\subsection{Direct Preference Optimization}
Direct preference optimization (DPO) is a widely used approach for aligning pretrained language or diffusion models with human preferences~\cite{rafailov2023direct, wallace2024diffusion}. Under the Bradley-Terry (BT) model~\cite{bradley1952rank}, the probability that a sample $\mb x_w$ is preferred over $\mb x_l$ given a condition $\mb c$ is:
\begin{align}
    p(\mb x_w\succ \mb x_l|\mb c)=\mathrm{Sigmoid}(r(\mb x_w|\mb c)-r(\mb x_l|\mb c)),\label{BT-Model}
\end{align} 
where $\mathrm{Sigmoid}(x):=1/(1+\exp(-x))$ denotes the Sigmoid function, $\mb x_w$ and $\mb x_l$ denote the preferred and non-preferred samples, respectively, and $r(\mb x|\mb c)$ represents the underlying reward function reflecting human preference. DPO parameterizes this reward as $r_{\mb\theta}(\mb x|\mb c):= \beta\log p_{\mb\theta}(\mb x|\mb c)-\beta\log p_\text{ref}(\mb x|\mb c)$ and estimates $\mb\theta$ via maximum likelihood estimation on the BT model~\eqref{BT-Model}:
    \begin{align}
    \max_{\mb\theta} \;&\mathbb{E}_{(\mb c,\mb x_w,\mb x_l)\sim S}\left[\log  p(\mb x_w\succ\mb x_l|\mb c)\right] 
    := \mathbb{E}_{(\mb c,\mb x_w,\mb x_l)\sim S}\bigl[\log\mathrm{Sigmoid}(\beta \log\frac{p_{\mb\theta}(\mb x_w|\mb c)}{p_\text{ref}(\mb x_w|\mb c)}
    -\beta \log\frac{p_{\mb\theta}(\mb x_l|\mb c)}{p_\text{ref}(\mb x_l|\mb c)})\bigr],
    \label{DPO objective main text}
\end{align}
where $p_\text{ref}$ denotes the pretrained base (reference) model, $\beta$ controls the strength of the KL regularization between $p_{\mb\theta}$ and $p_\text{ref}$,  and $S$ represents the preference dataset. As we show later, DPO can be naturally adapted for enhancing conditional modeling in visual generative models, providing a contrastive mechanism that parallels our MCLR objective. 

\section{Method}

This section first demonstrates that diffusion models learn conditional distributions that lack sufficient class distinctiveness (\cref{subsec:lack of class-specificity}). To remedy this issue, we propose MCLR (\cref{sec: maximum likelihood-ratio training,subsec: theoretical analysis on MCLR,Approximating log-likelihood with ELBO and the caveat}). For completeness, we also study a contrastive alternative obtained by adapting DPO to conditional generation~(\cref{adapting DPO for Improved conditional modeling}).
\subsection{A Practical Limitation of DSM: Weak Class Specificity}
\label{subsec:lack of class-specificity}
In theory, if the score functions are learned accurately, standard reverse diffusion sampling should produce samples from the target conditional distribution. In practice, however, conditional sampling often fails to exhibit strong class-specific structure~\cite{li2026towards}. A common failure mode is that the conditional generations are weakly distinguishable across classes: when starting from the same initial noise, samples generated under different class conditions frequently share similar global layouts, while class-discriminative features are attenuated or missing, as shown in~\Cref{fig:conceptual figure}(a) and~\Cref{fig:imagenet512 qualitative teaser}. This behavior suggests that the learned conditional distributions are insufficiently distinguishable from one another.
\Cref{fig:conceptual figure} illustrates this phenomenon conceptually,
where the learned class-conditional distributions of a trained base model $p_\text{ref}$ exhibit substantially less separation than the ground-truth data distribution $p_{\text{data}}$.
\begin{figure*}[h]
    \centering
    \includegraphics[width=0.99\linewidth]{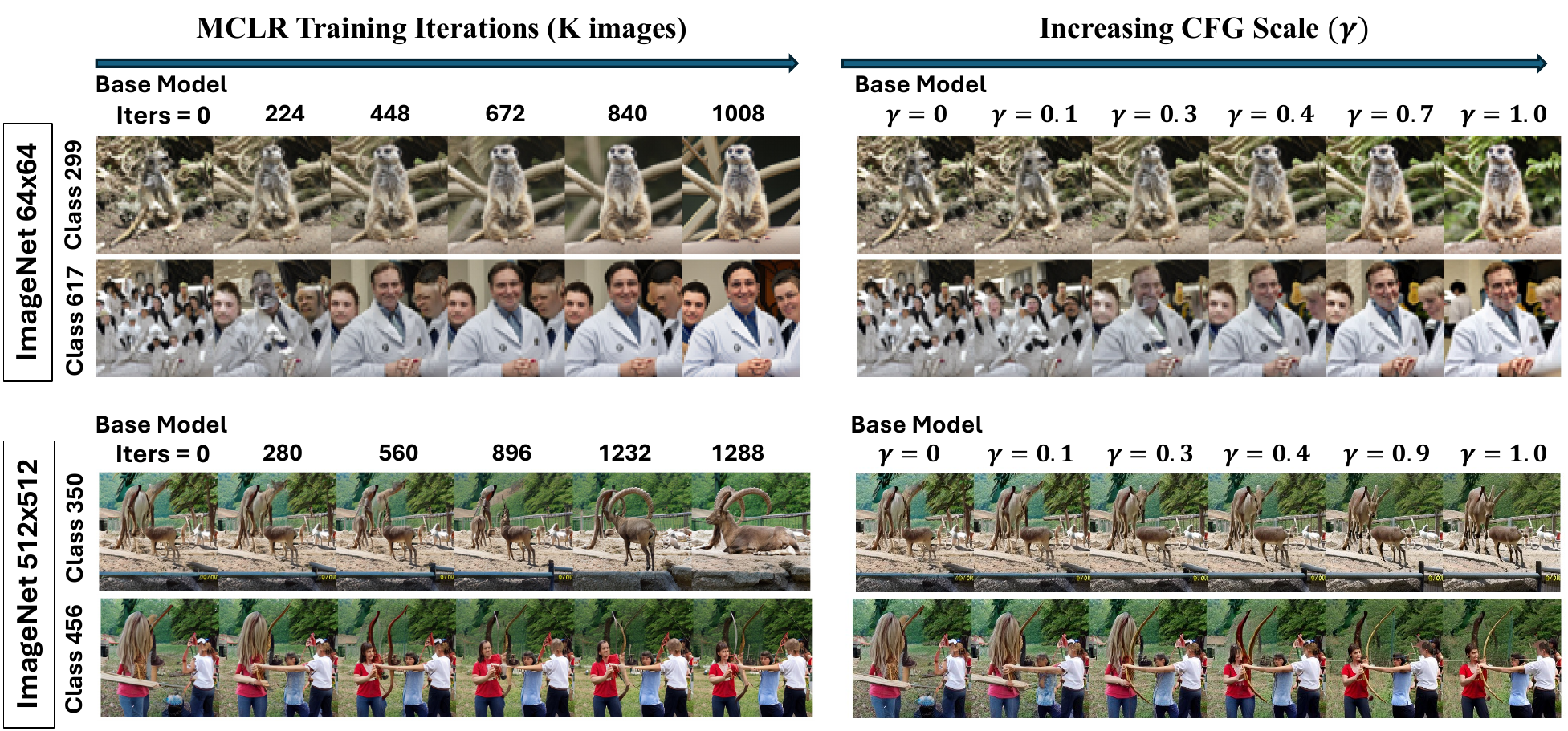}
   \caption{\textbf{Effects of MCLR vs.~CFG.}
We visualize the progressive emergence of class-specific structure under MCLR training and CFG. MCLR training induces effects analogous to increasing guidance strength in CFG, both substantially improving class-specific patterns.
For each image block, samples from different classes are generated from the same initial noises.}
\label{fig:imagenet512 qualitative teaser}
\end{figure*}

\subsection{Maximum Inter-Class Likelihood-Ratio Training}
\label{sec: maximum likelihood-ratio training}
Motivated by the preceding discussion, we propose improving conditional modeling by explicitly encouraging inter-class separation in conditional models, as shown conceptually in~\Cref{fig:conceptual figure}(b). Let $p_{\mb\theta}(\mb x|\mb c)$ denote the model's conditional distribution for class condition $\mb c$, where $\mb\theta$ is the model parameter. We consider the following objective:
\begin{align}
    \max_{\mb\theta}\mathbb{E}_{\mb c, p(\mb x|\mb c)}\log p_{\mb\theta}(\mb x|\mb c)
    +\frac{\eta}{2}\,\mathbb{E}_{\mb c, \tilde{\mb c},\mb x\sim p(\cdot|\mb c),\mb y\sim p(\mb \cdot|\tilde{\mb c})}
    \left[\log \frac{p_{\mb\theta}(\mb x|\mb c)}{p_{\mb\theta}(\mb x|\tilde{\mb c})}
    +\log \frac{p_{\mb\theta}(\mb y|\tilde{\mb c})}{p_{\mb\theta}(\mb y| \mb c)}\right],
    \label{MLE with Likelihood ratio constraint symmetric}
\end{align}
where $\mb c$ and $\tilde{\mb c}$ are two randomly sampled classes, and $\eta$ controls the regularization strength. Compared to standard MLE (equivalently, DSM under appropriate weighting) in~\eqref{MLE equals DSM}, eq.~\eqref{MLE with Likelihood ratio constraint symmetric} introduces an additional regularizer, which we refer to as MCLR.

Specifically, given a sample $\mb x$ (or $\mb y$) drawn from class $\mb c$ (or $\tilde{\mb c}$), MCLR explicitly encourages its corresponding log-likelihood under the true class $\log p_{\mb\theta}(\mb x|\mb c)$ (or $\log p_{\mb\theta}(\mb y|\tilde{\mb c})$) to be higher relative to its log-likelihood $\log p_{\mb\theta}(\mb x|\tilde{\mb c})$ (or $\log p_{\mb\theta}(\mb y|\mb c)$) under a mismatched class $\tilde{\mb c}$ (or $\mb c$). In doing so, MCLR drives the model to increase the inter-class likelihood ratio, thereby pushing $p_{\mb\theta}(\mb x|\mb c)$ to concentrate more probability mass in regions where the true class is favored over competing classes. Such regions typically correspond to samples with more pronounced class-specific features. 

Intuitively, MCLR encourages the model to fully exploit the label-conditioned information. Standard conditional model training simply feeds the conditional (class) label $\mb c$ alongside the data to the model, optimizes the MLE (or DSM) objective, and relies on deep networks to automatically discover and utilize the conditional structure. If labels are under-exploited, the conditional distributions can collapse and become weakly distinguishable, yielding $p_{\mb\theta}(\mb x|\mb c)\approx p_{\mb\theta}(\mb x|\tilde{\mb c})$ regardless of the true class of $\mb x$. Maximizing the log-likelihood ratio $\log \frac{p_{\mb\theta}(\mb x|\mb c)}{p_{\mb\theta}(\mb x|\tilde{\mb c})}$ directly penalizes this collapse and forces the model to discriminate between different conditions (classes) by leveraging the information encoded in the labels. 

Applying the linearity of expectation to~\eqref{MLE with Likelihood ratio constraint symmetric} and simplifying leads to the following equivalent form that we use throughout, unless otherwise stated: 
\begin{align}
\label{MLE with Likelihood ratio constraint}
    \max_{\mb\theta}\;
    \mathbb{E}_{\mb c,\,p(\mb x\mid\mb c)}
    \big[\log p_{\mb\theta}(\mb x|\mb c)\big]+\eta\,
    \underbrace{
    \mathbb{E}_{\mb c,\, \tilde{\mb c},\,\mb x\sim p(\cdot|\mb c),\mb y\sim p(\cdot|\tilde{\mb c})}
    \Big[
    \log \frac{p_{\mb\theta}(\mb x|\mb c)}{p_{\mb\theta}(\mb y|\mb c)}
    \Big]
    }_{\text{MCLR Regularization}}.
\end{align}
It can also be equivalently written as:

\begin{align}
\label{MLE with Likelihood ratio constraint form 2 simplified}
    \max_{\mb\theta}\;
    \mathbb{E}_{\mb c,\,p(\mb x\mid\mb c)}
    \big[\log p_{\mb\theta}(\mb x|\mb c)\big]+\eta\,
    \underbrace{
    \mathbb{E}_{\mb c,\,\tilde{\mb c},\,\mb x\sim p(\mb x|\mb c)}
    \Big[
    \log \frac{p_{\mb\theta}(\mb x|\mb c)}{p_{\mb\theta}(\mb x|\tilde{\mb c})}
    \Big]
    }_{\text{MCLR Regularization (Form II)}}.
\end{align}

\paragraph{Fine-tuning with MCLR.} When a pretrained (suboptimal) model $p_{\text{ref}}(\mb x)$ that lacks class specificity is available, we can fine-tune it using MCLR in combination with KL regularization:
\begin{align}
    \max_{\mb\theta}-\mathbb{E}_{\mb c}\left[D_\text{KL}\bigl(p_\text{ref}(\mb x|\mb c)\|p_{\mb\theta}(\mb x|\mb c)\bigl)\right] +\eta\,
    \mathbb{E}_{\mb c,\,\tilde{\mb c},\mb x\sim p(\cdot|\mb c),\mb y\sim p(\cdot|\tilde{\mb c})}
    \Big[
    \log \frac{p_{\mb\theta}(\mb x|\mb c)}{p_{\mb\theta}(\mb y|\mb c)}
    \Big].
    \label{KL+MCLR objective main text}
\end{align}
The next subsection analyzes the optimal solution induced by~\eqref{MLE with Likelihood ratio constraint} and~\eqref{KL+MCLR objective main text}, providing insight into MCLR's effects.

\subsection{Theoretical Analysis of MCLR} 
\label{subsec: theoretical analysis on MCLR}
We begin by analyzing the optimization problem~\eqref{MLE with Likelihood ratio constraint}. Define $h(\mb x|\mb c):=p(\mb x|\mb c)+\eta\, \bigl(p(\mb x|\mb c)-p(\mb x)\bigr)$. We make the following assumptions.
\begin{assum}
    \label{MCLR assumption 1}
    The function $h(\mb x|\mb c)$ has compact support; that is, 
\begin{align}
\operatorname{supp} h(\mb x|\mb c)\;\subseteq\;K, \qquad |K|<\infty,
\end{align}
where $|K|$ denotes the volume of set $K$.
\end{assum}

\begin{assum}
\label{MCLR assumption 2}
For $\forall \mb x\in K$, 
\begin{align}
p_{\mb\theta}(\mb x|\mb c)\geq \delta>0. 
\end{align}
\end{assum}
Assumption~\ref{MCLR assumption 1} is mild in practice, as image data typically occupies a bounded pixel range. Assumption~\ref{MCLR assumption 2} is a regularity assumption that ensures the log-likelihood is well-defined, avoiding the singularity that occurs when evaluating $\log p_{\mb\theta}(\mb x|\mb c)$ at zero density. Since $p_{\mb\theta}(\mb x|\mb c)$ integrates to one over $K$, we necessarily have $\delta\leq\frac{1}{|K|}$. In the following, we further assume $\delta<\frac{1}{|K|}$. 

\begin{theorem}
\label{MCLR theorem}
Based on the two assumptions above, the optimal solution to~\eqref{MLE with Likelihood ratio constraint} is:    
\begin{align}
    p_{\mb\theta^*}(\mb x|\mb c)\;=\;
    \begin{cases}
        \max \bigl\{\frac{h(\mb x|\mb c)}{Z(\mb c)},\delta\bigl\}, \;\mb x\in K, \\
        0, \;\mb x\notin K,
    \end{cases}
    \label{main text optimal solution under MCLR}
\end{align}
where $Z(\mb c)$ is the normalizing constant.
\end{theorem}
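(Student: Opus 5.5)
Rewriting the objective in terms of $h$ turns the problem into a weighted-log maximization over densities with a pointwise floor, solved pointwise by a Lagrangian/KKT argument.

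\textbf{Reduction.} Fix $\mb c$ and decouple across classes, treating $p_{\mb\theta}(\cdot|\mb c)$ as a free density $q$ subject to $\int q=1$ and the floor $q\ge\delta$ on $K$ from Assumption~\ref{MCLR assumption 2}. Use Form II \eqref{MLE with Likelihood ratio constraint form 2 simplified} and linearity of expectation. The term $-\eta\,\mathbb{E}_{\tilde{\mb c}}\mathbb{E}_{\mb c,\mb x\sim p(\cdot|\mb c)}\log p_{\mb\theta}(\mb x|\tilde{\mb c})$ has $\mb x$ distributed according to the marginal $p(\mb x)$. Relabeling $\tilde{\mb c}\leftrightarrow\mb c$ (they are i.i.d.), the objective becomes
\begin{align*}
\mathbb{E}_{\mb c}\int \bigl[(1+\eta)p(\mb x|\mb c)-\eta p(\mb x)\bigr]\log q_{\mb c}(\mb x)\,\der\mb x=\mathbb{E}_{\mb c}\int h(\mb x|\mb c)\log q_{\mb c}(\mb x)\,\der\mb x .
\end{align*}
By Assumption~\ref{MCLR assumption 1}, $h$ vanishes off $K$. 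Any mass of $q$ outside $K$ does not help the objective and only lowers $q$ on $K$. Since $\log$ is increasing and $h$ may be positive on $K$, it is never worse to move that mass into $K$, which gives $q=0$ off $K$. Note that $h$ may be negative where $p(\mb x)>p(\mb x|\mb c)$; this is exactly why the floor $\delta$ is needed.

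\textbf{Pointwise KKT.} The problem is now to maximize $\int_K h\log q$ subject to $\int_K q=1$ and $q\ge\delta$. It is concave in $q$, since $h\log q$ is concave where $h\ge0$ and convex where $h<0$. This is the main obstacle: where $h<0$ the integrand is increasing-concave-violating, so concavity fails.

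I would handle it by noting that wherever $h\le 0$ the integrand $h\log q$ is nonincreasing in $q$, so the optimum puts $q=\delta$ there. Concretely, take any feasible $q$ with $q>\delta$ on $\{h\le0\}$. Lowering $q$ to $\delta$ on that set and redistributing the freed mass onto $\{h>0\}$ weakly increases both parts of the objective. On the set $\{h>0\}$ the problem is strictly concave. The Lagrangian there is $\int h\log q-\lambda(\int q-1)+\int\mu(q-\delta)$, with stationarity condition
\begin{align*}
\frac{h}{q}=\lambda-\mu,\qquad \mu\ge0,\qquad \mu(q-\delta)=0 .
\end{align*}
This gives $q=\max\{h/\lambda,\delta\}$, which also covers the points with $h\le0$ since there $h/\lambda\le0<\delta$. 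Setting $Z(\mb c)=\lambda$ yields \eqref{main text optimal solution under MCLR}.

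\textbf{Existence and verification.} It remains to show a unique $Z>0$ exists with $\int_K\max\{h/Z,\delta\}=1$. The map $Z\mapsto\int_K\max\{h/Z,\delta\}$ is continuous and nonincreasing. It tends to $\delta|K|<1$ as $Z\to\infty$; here the strict assumption $\delta<1/|K|$ is used. It tends to $\infty$ as $Z\to0$, since $\int h=1>0$ forces $\{h>0\}$ to have positive measure. The intermediate value theorem therefore gives such a $Z$.

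Finally, verify global optimality directly. For any feasible $q$, compare with $q^*$ using $\log q-\log q^*\le (q-q^*)/q^*$ on $\{h>0\}$. Then
\begin{align*}
\int h(\log q-\log q^*)\le \int Z\,\mathbb{1}_{\{q^*>\delta\}}(q-q^*)+\int_{\{q^*=\delta\}}h(\log q-\log\delta).
\end{align*}
On $\{q^*=\delta\}$ we have $h\le Z\delta$, so the second integrand is at most $Z(q-\delta)$, using $\log(q/\delta)\le (q-\delta)/\delta$ and $q\ge\delta$. This holds for both signs of $h$. Summing the two bounds gives at most $Z\int(q-q^*)=0$. This inequality check is the careful step; the rest is bookkeeping.
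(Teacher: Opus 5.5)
Your proposal is correct and follows essentially the same route as the paper's proof: you reduce per class to maximizing $\int_K h\log q$, put zero mass off $K$ and the floor $\delta$ on $\{h\le 0\}$, apply pointwise KKT on $\{h>0\}$ to get $\max\{h/\lambda,\delta\}$, and use the intermediate value theorem for the multiplier. Your closing inequality $h(\log q-\log q^*)\le Z(q-q^*)$ is a useful addition, since it certifies global optimality directly (and even subsumes the mass-shifting steps) rather than relying, as the paper does, on Slater's condition for an infinite-dimensional KKT argument. Two small slips to fix: the sentence saying the problem ``is concave in $q$'' should say it is \emph{not} concave, because the integrand is convex where $h<0$; and your IVT argument only gives existence of $Z$, so uniqueness needs strict decrease, which holds at any root because $\{h>Z\delta\}$ must have positive measure there.
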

\Cref{sec: proof for MCLR main} provides the proof. Intuitively, the optimal conditional distribution $p_{\mb\theta^*}(\mb x|\mb c)$ induced by MCLR is proportional to $h(\mb x|\mb c)$ whenever $h(\mb x|\mb c)>Z(\mb c)\delta$, while being clipped to the floor $\delta$ elsewhere. In the limit $\delta\rightarrow 0$, the floor disappears and the optimal distribution approaches:
\begin{align}
    p_{\mb\theta^*}(\mb x|\mb c)=\frac{h^+(\mb x|\mb c)}{\int_{\mb x}h^+(\mb x|\mb c)d\mb x},
    \label{optimal MCLR solution in the limit}
\end{align}
where $h^+(\mb x|\mb c):=\max\bigl\{h(\mb x|\mb c),0\bigl\}$, such that the negative part of $h(\mb x|\mb c)$ is truncated to zero and then normalized to form a valid distribution.
One can interpret~\eqref{main text optimal solution under MCLR} and~\eqref{optimal MCLR solution in the limit} as the "sum-of-difference" distribution: MCLR reshapes $p(\mb x|\mb c)$ by adding to it the difference between $p(\mb x|\mb c)$ and $p(\mb x)$, such that the regions where $p(\mb x|\mb c)>p(\mb x)$ (i.e., samples with strong class-specific features) are amplified, and the regions where $p(\mb x|\mb c)<p(\mb x)$ (i.e., ambiguous samples lying near class boundaries) are suppressed.

\newcommand{\pref} {\ensuremath{p_{\mathrm{ref}}}\xspace}
For the fine-tuning objective in~\eqref{KL+MCLR objective main text}, it is easy to show that the optimal solution admits the same form as~\eqref{main text optimal solution under MCLR}, but in this case $h(\mb x|\mb c)= \pref(\mb x|\mb c)+\eta\, (p(\mb x|\mb c)-p(\mb x))$. This directly leads to the following corollary (\Cref{sec: proof for fineutning} provides the proof).
\begin{cor}
    If the base model \pref satisfies the mixture error model:
\begin{align}
\pref(\mb x|\mb c)=(1-\eta)\,p(\mb x|\mb c)+\eta \,p(\mb x),
\label{mixture error model assumption}
\end{align}
where $\eta\in[0,1]$, then finetuning $\pref(\mb x|\mb c)$ with MCLR regularization~\eqref{KL+MCLR objective main text} recovers the ground truth conditional distribution $p(\mb x|\mb c)$.
\label{corollary MCLR finetuning}
\end{cor}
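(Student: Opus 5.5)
The plan is to reduce the fine-tuning problem~\eqref{KL+MCLR objective main text} to the same pointwise variational problem solved in \Cref{MCLR theorem}, and then substitute the mixture error model.

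\textbf{Step 1: rewrite the fine-tuning objective.} Expanding the KL term gives
\begin{align*}
-D_\text{KL}\bigl(\pref(\cdot|\mb c)\,\|\,p_{\mb\theta}(\cdot|\mb c)\bigr)=\int \pref(\mb x|\mb c)\log p_{\mb\theta}(\mb x|\mb c)\,\der\mb x+\text{const}.
\end{align*}
For the MCLR term, the numerator contributes $\int p(\mb x|\mb c)\log p_{\mb\theta}(\mb x|\mb c)\,\der\mb x$. Since $\mb y\sim p(\cdot|\tilde{\mb c})$ with $\tilde{\mb c}$ random, the denominator contributes $-\int p(\mb x)\log p_{\mb\theta}(\mb x|\mb c)\,\der\mb x$, by marginalizing over $\tilde{\mb c}$. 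So, for each fixed $\mb c$, the objective up to constants is
\begin{align*}
\int h(\mb x|\mb c)\log p_{\mb\theta}(\mb x|\mb c)\,\der\mb x,\qquad h(\mb x|\mb c)=\pref(\mb x|\mb c)+\eta\bigl(p(\mb x|\mb c)-p(\mb x)\bigr).
\end{align*}
This is exactly the functional from the proof of \Cref{MCLR theorem}, with $p(\mb x|\mb c)$ replaced by $\pref(\mb x|\mb c)$ in the leading term.

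\textbf{Step 2: reuse the argument of \Cref{MCLR theorem}.} That argument only uses that $h$ integrates to $1$, has compact support $K$, and that $p_{\mb\theta}\ge\delta$ on $K$. The first property still holds, since $\int h = 1+\eta(1-1)=1$. The other two we assume by analogy with Assumptions~\ref{MCLR assumption 1}--\ref{MCLR assumption 2}. Therefore the optimizer is $\max\{h/Z(\mb c),\delta\}$ on $K$ and $0$ off $K$.

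\textbf{Step 3: plug in the mixture model~\eqref{mixture error model assumption}.} Substituting gives
\begin{align*}
h(\mb x|\mb c)=(1-\eta)p(\mb x|\mb c)+\eta p(\mb x)+\eta p(\mb x|\mb c)-\eta p(\mb x)=p(\mb x|\mb c).
\end{align*}
Hence $h$ is itself a nonnegative density and $Z(\mb c)=1$.

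\textbf{Step 4: remove the floor.} Because $h\ge 0$, no truncation of negative mass is needed. The clipping at $\delta$ becomes inactive in the limit $\delta\to0$, as in~\eqref{optimal MCLR solution in the limit}; alternatively, it is inactive whenever $p(\mb x|\mb c)\ge\delta$ on $K$. Either way the optimizer is $p(\mb x|\mb c)$, which is the claimed recovery.

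\textbf{Main obstacle.} The main subtlety is this floor. The exact statement of \Cref{MCLR theorem} carries $\delta$ and a normalizer $Z(\mb c)$, and with $\delta>0$ these need not reduce cleanly to $p(\mb x|\mb c)$. I would state the result in the $\delta\to0$ sense, or add the mild condition $p(\mb x|\mb c)\ge\delta$ on $K$. Under that condition $\max\{p/Z,\delta\}$ with $Z=1$ is self-consistent, and by the uniqueness implied by strict concavity of $\log$ it is the solution.

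As a cross-check, Gibbs' inequality gives the result directly: when $h=p(\mb x|\mb c)$ is a valid density, $\int h\log p_{\mb\theta}$ is maximized uniquely at $p_{\mb\theta}=h$.
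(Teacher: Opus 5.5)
Your proposal is correct. Steps 1 and 3 are exactly the paper's reduction and substitution. You work with the Form I objective and marginalize over $\tilde{\mathbf{c}}$, which yields the same per-class functional $\int h\log p_{\boldsymbol\theta}(\mathbf{x}|\mathbf{c})\,d\mathbf{x}$ as the paper's Form II bookkeeping.

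The difference is how you finish. The paper's proof is your ``cross-check.'' With $h=p(\mathbf{x}|\mathbf{c})$, the functional equals $-D_{\mathrm{KL}}\bigl(p(\cdot|\mathbf{c})\,\|\,p_{\boldsymbol\theta}(\cdot|\mathbf{c})\bigr)$ up to a constant, so Gibbs' inequality gives the unique maximizer directly. The paper explicitly notes that Assumptions~\ref{MCLR assumption 1} and~\ref{MCLR assumption 2} are not needed for this.

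Your main route instead imports those assumptions through \Cref{MCLR theorem} and then has to remove the $\delta$-floor. That floor was only introduced to make the problem well-posed when $h$ takes negative values, since there sending $p_{\boldsymbol\theta}\to0$ drives the objective to $+\infty$. Under the mixture model, $h\ge0$ is a genuine density, so the problem without the floor is already well-posed and your ``main obstacle'' disappears. The direct argument therefore gives exact recovery with no compact support, no floor, and no $\delta\to0$ limit.

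What your route adds is a consistency check against the general solution of \Cref{MCLR theorem}. It also yields the correct observation that if one keeps the floor constraint, exact recovery holds only when $p(\mathbf{x}|\mathbf{c})\ge\delta$ on $K$; otherwise $p(\mathbf{x}|\mathbf{c})$ is not even feasible.

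A minor point on Step 2: the argument of \Cref{MCLR theorem} never uses $\int h=1$. The property that actually matters here is $h\ge0$.
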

The mixture error model~\eqref{mixture error model assumption} posits that the base model suffers from cross-class "leakage"; specifically, the learned conditional density $\pref(\mb x|\mb c)$ is a convex combination of the ground-truth conditional distribution $p(\mb x|\mb c)$ and the unconditional counterpart $p(\mb x)$, which corresponds to a weighted average over all class-conditional distributions. MCLR counteracts this leakage by adding the contrastive density difference $p(\mb x|\mb c)-p(\mb x)$ to the base model, thereby suppressing the influence of competing classes. Although the error structure of practical diffusion models is likely more complex than this simple mixture model, the model provides a useful intuition for why MCLR can mitigate insufficient class separation. Notably, similar mixture-based error models have been employed in the design of guidance methods~\cite{koulischer2026feedback}.
\subsection{Adapting DPO for Improved Conditional Modeling}
\label{adapting DPO for Improved conditional modeling}
The core principle behind MCLR is to improve conditional modeling by encouraging the model to exploit class-dependent structures through inter-class contrast. This idea can also be instantiated via other contrastive objectives such as DPO. Intuitively, given a condition (or prompt) $\mb c$ and a pair consisting of a human-preferred sample $\mb x_w$ and a non-preferred sample $\mb x_l$, the DPO objective~\eqref{DPO objective main text} increases the relative density assigned by the fine-tuned model $p_{\mb\theta}(\mb x|\mb c)$ to $\mb x_w$, while decreasing its density on $\mb x_l$, relative to the base model $p_\text{ref}(\mb x| \mb c)$. To adapt DPO for improving class specificity, we treat samples from the target $\mb c$ as preferred ($\mb x_w$) and samples from other randomly selected classes as non-preferred ($\mb x_l$). This leads to the following objective: 
\begin{align}
    \min_{\mb\theta}\; 
    -\mathbb{E}_{\mb c,\tilde{\mb c},\, \mb x_w\sim p(\mb x|\mb c),\,\mb x_l\sim p(\mb x|\tilde{\mb c})}
    \log\mathrm{Sigmoid}\left(
    \beta \log\frac{p_{\mb\theta}(\mb x_w|\mb c)}{p_{\text{ref}}(\mb x_w|\mb c)}
    -\beta \log\frac{p_{\mb\theta}(\mb x_l|\mb c)}{p_{\text{ref}}(\mb x_l|\mb c)}
    \right).    
    \label{DPO objective-conditional}
\end{align}
We refer to~\eqref{DPO objective-conditional} as Conditional Contrastive DPO (CC-DPO). Similar to MCLR, CC-DPO objective also admits a closed-form solution, as stated in the following theorem (\Cref{sec: proof of CC-DPO optimal solution} provides the proof).
\begin{theorem}
\label{CC-DPO theorem 2}
Under certain regularity conditions, the optimal solution to~\eqref{DPO objective-conditional} is:
\begin{align}
    p_{\mb\theta^*}(\mb x|\mb c)=\frac{1}{\tilde{Z}(\mb c)}p_\text{ref}(\mb x|\mb c)\left(\frac{p(\mb x|\mb c)}{p(\mb x)}\right)^{\frac{1}{\beta}},
    \label{optimal CC-DPO solution}
\end{align}
where $\tilde{Z}(\mb c)=\int_{\mb x}p_\text{ref}(\mb x|\mb c)\left(\frac{p(\mb x|\mb c)}{p(\mb x)}\right)^{\frac{1}{\beta}}d\mb x$ is the normalizing constant. 
\end{theorem}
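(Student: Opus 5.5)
We prove \Cref{CC-DPO theorem 2} by reparameterizing in terms of an implicit reward. The plan is to show that, over the class-conditional problem, the CC-DPO loss~\eqref{DPO objective-conditional} is a Bradley--Terry maximum-likelihood problem whose population optimum is attained by the Bayes log-odds. We then translate that reward back into a density.

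\textbf{Step 1: reparameterize.} For a fixed condition $\mb c$, we set
\begin{align*}
r_{\mb\theta}(\mb x|\mb c):=\beta\log\frac{p_{\mb\theta}(\mb x|\mb c)}{p_\text{ref}(\mb x|\mb c)}.
\end{align*}
The loss then becomes
\begin{align*}
-\mathbb{E}_{\mb c,\tilde{\mb c},\mb x_w\sim p(\cdot|\mb c),\mb x_l\sim p(\cdot|\tilde{\mb c})}\log\mathrm{Sigmoid}\bigl(r_{\mb\theta}(\mb x_w|\mb c)-r_{\mb\theta}(\mb x_l|\mb c)\bigr).
\end{align*}
Since $\tilde{\mb c}$ is drawn independently from the label marginal, the negative sample satisfies $\mb x_l\sim\sum_{\tilde{\mb c}}p(\tilde{\mb c})p(\cdot|\tilde{\mb c})=p(\mb x)$. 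So for each $\mb c$ the inner problem contrasts a positive $\mb x_w\sim p(\cdot|\mb c)$ against a negative $\mb x_l\sim p(\cdot)$. This is where we use the convention that $\tilde{\mb c}$ may coincide with $\mb c$. If it were excluded, $p(\mb x)$ would be replaced by the leave-one-out mixture, and this is one of the regularity caveats.

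\textbf{Step 2: identify the optimal reward.} Let $\Delta(\mb x,\mb x'):=r(\mb x|\mb c)-r(\mb x'|\mb c)$. Symmetrizing the pair $(\mb x_w,\mb x_l)\leftrightarrow(\mb x_l,\mb x_w)$ and using $\log\mathrm{Sigmoid}(-u)=\log(1-\mathrm{Sigmoid}(u))$, the loss over unordered pairs $\{\mb x,\mb x'\}$ is a weighted cross-entropy. The weight is $q(\mb x,\mb x')=p(\mb x|\mb c)p(\mb x')+p(\mb x'|\mb c)p(\mb x)$, and the target probability for the ordered pair $(\mb x,\mb x')$ is
\begin{align*}
\pi(\mb x,\mb x')=\frac{p(\mb x|\mb c)p(\mb x')}{q(\mb x,\mb x')}.
\end{align*}
Pointwise in the pair, the cross-entropy is minimized by $\mathrm{Sigmoid}(\Delta)=\pi$, that is,
\begin{align*}
\Delta(\mb x,\mb x')=\log\frac{p(\mb x|\mb c)}{p(\mb x)}-\log\frac{p(\mb x'|\mb c)}{p(\mb x')}.
\end{align*}
This difference structure is realized by $r^*(\mb x|\mb c)=\log\frac{p(\mb x|\mb c)}{p(\mb x)}+C(\mb c)$. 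Conversely, any reward attaining the pointwise optimum on a pair-support set of full measure must equal $r^*$ up to an $\mb x$-independent constant. Hence $r^*$ is the global minimizer, unique modulo constants. The regularity conditions are needed here: the supports must overlap so that $p(\mb x)>0$ wherever $p(\mb x|\mb c)>0$, and the model class must be rich enough to realize $r^*$.

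\textbf{Step 3: invert.} Setting $\beta\log\frac{p_{\mb\theta^*}}{p_\text{ref}}=r^*$ gives
\begin{align*}
p_{\mb\theta^*}(\mb x|\mb c)=p_\text{ref}(\mb x|\mb c)\Bigl(\frac{p(\mb x|\mb c)}{p(\mb x)}\Bigr)^{1/\beta}e^{C(\mb c)/\beta}.
\end{align*}
Requiring $p_{\mb\theta^*}(\cdot|\mb c)$ to integrate to one fixes $e^{-C(\mb c)/\beta}=\tilde Z(\mb c)$, which yields~\eqref{optimal CC-DPO solution}. For this to be a valid density we need $\tilde Z(\mb c)<\infty$, which is another regularity condition.

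\textbf{Main obstacle.} The delicate step is Step 2: it requires arguing rigorously that pointwise minimization over pairs is achievable by a single function of $\mb x$, and that this minimizer is unique up to constants. I would handle this with a Gibbs-inequality argument. Write the loss minus its value at $r^*$ as $\mathbb{E}_{q}\bigl[\mathrm{KL}\bigl(\mathrm{Bern}(\pi)\,\|\,\mathrm{Bern}(\mathrm{Sigmoid}(\Delta))\bigr)\bigr]\ge0$. Equality holds iff $\Delta=\Delta^*$ holds $q$-a.e., and fixing a reference point $\mb x'$ then pins $r$ down up to a constant.
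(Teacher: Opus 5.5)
Your proposal is correct and follows essentially the same route as the paper: reduce CC-DPO to Bradley--Terry reward modeling with negatives drawn from $p(\mb x)$, solve the pairwise problem to get $r^*(\mb x|\mb c)=\log\frac{p(\mb x|\mb c)}{p(\mb x)}+\text{const}$, and map back to a normalized $p_\text{ref}(\mb x|\mb c)\exp(r^*/\beta)$. You invert the reparameterization directly and add a Gibbs-type uniqueness argument, whereas the paper uses the closed form of the KL-regularized RL problem and pairwise concavity. One small slip: the support condition you list is automatic, since $p(\mb x)=\mathbb{E}_{\mb c}[p(\mb x|\mb c)]\ge p(\mb c)\,p(\mb x|\mb c)$; a finite optimal reward actually needs the reverse, $p(\mb x|\mb c)>0$ wherever $p(\mb x)>0$, which is why the paper assumes full support of both and handles the degenerate cases separately in its remark.
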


\paragraph{Comparison between MCLR and CC-DPO.}

Despite sharing the same inter-class contrastive mechanism, MCLR and CC-DPO differ in how they modify the base model. Unlike MCLR's additive modification via the density difference $p(\mb x|\mb c)-p(\mb x)$, CC-DPO reweights the base model multiplicatively by a density ratio term $\left(\frac{p(\mb x|\mb c)}{p(\mb x)}\right)^{\frac{1}{\beta}}$, thereby amplifying regions where $p(\mb x|\mb c)>p(\mb x)$ while suppressing regions where $p(\mb x|\mb c)<p(\mb x)$. 
This distinction has important theoretical consequences. In particular, the multiplicative form adopted by CC-DPO can be overly aggressive.
Consider a point $\tilde{\mb x}$ such that $p(\tilde{\mb x}) \approx 0$ while $p(\tilde{\mb x} |\mb c) > 0$, a situation that naturally arises when $\mb c$ are minority classes.
In this case, the ratio $\bigl(\tfrac{p(\tilde{\mb x} | \mb c)}{p(\tilde{\mb x})}\bigr)^{1/\beta}$ becomes ill-conditioned, strongly amplifying the density at $\tilde{\mb x}$ and potentially driving the learned conditional distribution toward degenerate or unstable solutions.
By contrast, the optimal solution~\eqref{main text optimal solution under MCLR} induced by MCLR remains well-behaved.

\paragraph{Equivalence between CC-DPO and CCA.}
The optimal CC-DPO solution~\eqref{optimal CC-DPO solution} is known as the \emph{gamma-powered} distribution.
This distribution was initially conjectured to characterize the effect of classifier-free guidance (CFG)~\cite{ho2022classifier}, but was later shown not to correspond to the true CFG dynamics~\cite{karras2024guiding, bradley2024classifier}.
Interestingly, the same gamma-powered distribution also arises as the optimal solution of Conditional Contrastive Alignment (CCA)~\cite{chentoward}, a recently proposed method for autoregressive models (see~\cref{sec: theoretical analysis on CCA}).
Our analysis therefore establishes a previously unrecognized equivalence between CC-DPO and CCA at the level of their induced optimal distributions.
Empirically, as \Cref{subsec:hyperparameter for CCA} demonstrates, CC-DPO matches or outperforms CCA while requiring fewer hyperparameters, making it simpler to deploy in practice.

\subsection{Approximating Log-Likelihood with ELBO}
\label{Approximating log-likelihood with ELBO and the caveat}
Both MCLR~\eqref{MLE with Likelihood ratio constraint} and CC-DPO~\eqref{DPO objective-conditional} require access to the log-likelihood. While exact log-likelihood evaluation is tractable for autoregressive models~\cite{tian2024visual}, it is generally computationally infeasible for diffusion models. We therefore approximate the log-likelihood using ELBO~\eqref{MLE equals DSM}, so that the MCLR objective in~\eqref{MLE with Likelihood ratio constraint} becomes equivalent to~\eqref{Using ELBO for likelihood in MCLR main text}: 
\begin{equation}
\begin{aligned}
 \min_{\boldsymbol{\theta}}&\;
\mathcal{J}_{\text{DSM}}(\boldsymbol{\theta},\mb c; g^2(\cdot)) \\ &+ \eta\,
\mathbb{E}_{\substack{\boldsymbol{c}, \tilde{\boldsymbol{c}},\,t\sim \mathcal{U}[0,T]\\
                               p(\boldsymbol{x}|\boldsymbol{c}),\,p_{0t}(\boldsymbol{x}_t|\boldsymbol{x})\\
                               p(\mb y|\tilde{\mb c}),\,p_{0t}(\mb y_t|\mb y)}}
\Bigl[
g^2(t)
\Bigl(
\|\nabla_{\boldsymbol{x}_t}\log p_{0t}(\boldsymbol{x}_t\mid\boldsymbol{x})
- \boldsymbol{s}_{\boldsymbol{\theta}}(\boldsymbol{x}_t,t,\boldsymbol{c})\|_2^2
-
\|\nabla_{\boldsymbol{y}_t}\log p_{0t}(\boldsymbol{y}_t\mid\boldsymbol{y})
- \boldsymbol{s}_{\boldsymbol{\theta}}(\boldsymbol{y}_t,t,\mb c)\|_2^2
\Bigr)
\Bigr],
\end{aligned}
\label{Using ELBO for likelihood in MCLR main text}
\end{equation}
which can be estimated with Monte Carlo sampling. Although the ELBO holds exactly only under a uniform time schedule and the specific weighting $g^2(t)$, following standard practice in the diffusion model literature, we treat these terms as tunable design choices. Specifically, we adopt a customized time sampling distribution $p(t)$ and replace $g^2(t)$ with a chosen weighting function $w(t)$.
\paragraph{Interpreting MCLR from Denoising Perspective.}
Moreover, according to the equivalence between score function and optimal MMSE denoiser, the score network can be parameterized as $\mb s_{\mb\theta}(\mb x,t,\mb c)=\frac{\mathcal{D}_{\mb\theta}(\mb x;\sigma(t),\mb c)-\mb x}{\sigma^2(t)}$, where $\sigma(t)$ is the standard deviation of additive noise at time $t$ (see~\cref{sec: ELBO denoiser form} for details).
As a result, the MCLR regularization with a customized training time schedule and adaptive weighting becomes equivalent to:
\begin{align}
    \label{denoiser perspective}
    \mathbb{E}_{\substack{\boldsymbol{c}, \tilde{\boldsymbol{c}},\,t\sim p(t)\\
                               p(\boldsymbol{x}|\boldsymbol{c}),\,p_{0t}(\boldsymbol{x}_t|\boldsymbol{x})\\
                               p(\mb y|\tilde{\mb c}),\,p_{0t}(\mb y_t|\mb y)
                               }}
   \Bigl[
     w(t)\bigl(
       \|\mb x-\mathcal{D}_{\mb\theta}(\mb x_t;\sigma(t),\mb c)\|_2^2
       \;-  \;
       \|\mb y-\mathcal{D}_{\mb\theta}(\mb y_t;\sigma(t),\mb c)\|_2^2
     \bigr)
   \Bigr].
\end{align}
Eq.~\eqref{denoiser perspective} provides a denoising-perspective interpretation of MCLR: it encourages the target-condition denoiser $\mathcal{D}(\cdot;\sigma(t),\mb c)$ to achieve lower reconstruction error on samples of the target condition $\mb c$ than on samples of mismatched-condition $\tilde{\mb c}$. We use this formulation in practical implementation.

\paragraph{Remarks on the ELBO Approximation.} Despite a standard practice in the literature, when replacing log-likelihood with ELBO, the resulting objective~\eqref{Using ELBO for likelihood in MCLR main text} does not necessarily correspond to the original likelihood formulation~\eqref{MLE with Likelihood ratio constraint}, since the ELBO does not enforce the regularity conditions required for the parameterized score function to define a valid score field. Consequently, ELBO-approximated MCLR should be viewed as an approximate likelihood-ratio maximization procedure for diffusion models. This issue does not arise for autoregressive models, where likelihoods are available exactly.

\section{CFG as an Alignment Algorithm: A Mechanistic Interpretation}
\label{theoretical equivalence of CFG and Alignment Algorithm main text}
We now connect classifier-free guidance (CFG) to MCLR developed above. In particular, we show that the CFG-guided score in~\eqref{CFG+ODE} arises as the unique minimizer of a sample-adaptive weighted MCLR objective, ELBO-approximated MCLR objective. Since MCLR can be interpreted as a contrastive alignment objective, structurally analogous to methods such as DPO. The result below therefore provides a formal characterization of CFG as an inference-time procedure that performs contrastive alignment between the conditional distribution and the uncondtional mixture. 

\subsection{Formal Equivalence between CFG and Weighted MCLR}

We establish the following equivalence.
\Cref{proof of theorem 3} provides the proof.


\begin{theorem}
\label{theorem: CFG equivalent to weighted mclr}
For any time sampling distribution $p(t)$ and weighting function $w(t)$, the CFG-guided score
\[
\mb s_\text{cfg}(\mb x_t,t,\mb c)
:=
\nabla_{\mb x_t}\log p_t(\mb x_t|\mb c)
+
\eta\,\big(
\nabla_{\mb x_t}\log p_t(\mb x_t|\mb c)
-
\nabla_{\mb x_t}\log p_t(\mb x_t)
\big)
\]
is the unique minimizer of a sample-adaptive weighted ELBO-approximated MCLR objective:
\begin{equation}
\begin{aligned}
 \mb s_\text{cfg}(\cdot)
=
\argmin_{\mb s_{\mb\theta}(\cdot)}\;
&\;\mathbb{E}_{\mb c,t\sim p(t),\,\mb x\sim p(\mb x|\mb c),\,\mb x_t\sim p_{0t}(\mb x_t|\mb x)}
\Bigl[
w(t)\|
\nabla_{\mb x_t}\log p_{0t}(\mb x_t|\mb x)
-
\mb s_{\mb\theta}(\mb x_t,t,\mb c)
\|_2^2
\Bigr]
\\
&+ \eta\,
\mathbb{E}_{\substack{\boldsymbol{c},\mb{\tilde{\mb c}},t\sim p(t),\mb x\sim p(\cdot|\mb c),\mb y\sim p(\cdot|\tilde{\mb c})\\
p_{0t}(\boldsymbol{x}_t|\boldsymbol{x}), p_{0t}(\boldsymbol{y}_t|\boldsymbol{y})}}
\Biggl[
w(t)
\Bigl(
\|\nabla_{\boldsymbol{x}_t}\log p_{0t}(\boldsymbol{x}_t\mid\boldsymbol{x})
- \boldsymbol{s}_{\boldsymbol{\theta}}(\boldsymbol{x}_t,t,\boldsymbol{c})\|_2^2
\\
&\quad\quad\quad -\textcolor{red}{\frac{p_t(\mb y_t|\mb c)}{p_t(\mb y_t)}}
\|\nabla_{\boldsymbol{y}_t}\log p_{0t}(\boldsymbol{y}_t\mid\boldsymbol{y})
- \boldsymbol{s}_{\boldsymbol{\theta}}(\boldsymbol{y}_t,t,\boldsymbol{c})\|_2^2
\Bigr)
\Biggr],
\end{aligned}
\label{CFG equivalent MCLR form 1}
\end{equation}
where $p_t(\mb y_t):=\mathbb{E}_{\mb c}[p_t(\mb y_t|\mb c)]$.
\end{theorem}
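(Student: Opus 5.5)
The plan is to treat the objective in~\eqref{CFG equivalent MCLR form 1} as a functional of $\mb s_{\mb\theta}(\cdot,t,\mb c)$ and minimize it pointwise. For each fixed $(t,\mb c)$ and each point $\mb z$, I will gather every term that involves the value $\mb s_{\mb\theta}(\mb z,t,\mb c)$. The objective will then collapse to a single weighted quadratic in that value, which can be minimized in closed form.

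\textbf{Step 1: rewrite the positive terms.} The first DSM term and the positive part of the MCLR term both sample $\mb x\sim p(\cdot|\mb c)$ and $\mb x_t\sim p_{0t}(\cdot|\mb x)$. Together they contribute $(1+\eta)\,w(t)\,\mathbb{E}\|\nabla\log p_{0t}(\mb x_t|\mb x)-\mb s_{\mb\theta}(\mb x_t,t,\mb c)\|^2$. I will expand the square and use the standard DSM identity $\mathbb{E}_{\mb x|\mb x_t,\mb c}[\nabla_{\mb x_t}\log p_{0t}(\mb x_t|\mb x)]=\nabla_{\mb x_t}\log p_t(\mb x_t|\mb c)$. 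Up to constants independent of $\mb\theta$, this gives
\[
(1+\eta)\,w(t)\int p_t(\mb z|\mb c)\,\|\mb s_{\mb\theta}(\mb z,t,\mb c)-\nabla\log p_t(\mb z|\mb c)\|^2\,\der\mb z .
\]

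\textbf{Step 2: rewrite the negative term.} In the negative term, $\tilde{\mb c}$ is drawn independently of $\mb c$. Marginalizing over $\tilde{\mb c}$ and $\mb y$ therefore makes $\mb y_t$ distributed as $p_t(\mb y_t)=\mathbb{E}_{\tilde{\mb c}}[p_t(\mb y_t|\tilde{\mb c})]$. Multiplying by the adaptive weight $p_t(\mb y_t|\mb c)/p_t(\mb y_t)$ turns this density into $p_t(\mb y_t|\mb c)$. Conditioning on $\mb y_t$, the inner expectation of $\nabla\log p_{0t}(\mb y_t|\mb y)$ is taken under the posterior given $\mb y_t$ alone, not given $\mb c$, so it equals the unconditional score $\nabla\log p_t(\mb y_t)$. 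Up to constants, the negative term then becomes
\[
-\eta\,w(t)\int p_t(\mb z|\mb c)\,\|\mb s_{\mb\theta}(\mb z,t,\mb c)-\nabla\log p_t(\mb z)\|^2\,\der\mb z .
\]

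\textbf{Step 3: combine and minimize.} Adding the two integrands gives $p_t(\mb z|\mb c)w(t)$ times $(1+\eta)\|\mb s-\mb a\|^2-\eta\|\mb s-\mb b\|^2$, where $\mb a=\nabla\log p_t(\mb z|\mb c)$ and $\mb b=\nabla\log p_t(\mb z)$. The quadratic coefficient is $(1+\eta)-\eta=1>0$, so the combined expression is strictly convex in $\mb s$. After completing the square it equals $\|\mb s-((1+\eta)\mb a-\eta\mb b)\|^2$ plus a constant. Its unique minimizer is therefore $\mb a+\eta(\mb a-\mb b)=\mb s_\text{cfg}$, holding wherever $p_t(\mb z|\mb c)w(t)p(t)>0$. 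Uniqueness holds almost everywhere on that support, which gives the claim for any choice of $p(t)$ and $w(t)$.

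\textbf{Main obstacle.} The delicate point is Step 2, which needs two things. First, the sample-adaptive weight must exactly convert the mixture density $p_t(\mb y_t)$ into $p_t(\mb y_t|\mb c)$. Second, the conditional expectation of the transition score must yield the \emph{unconditional} score. This relies on $\mb y$ being drawn from the marginal over $\tilde{\mb c}$, independent of $\mb c$. The cross term has to be handled carefully through the tower property on $(\mb y,\mb y_t)$ rather than on $\tilde{\mb c}$. I also need integrability assumptions so that the discarded constants, such as $\mathbb{E}\|\nabla\log p_{0t}\|^2$, are finite; these would be stated as regularity conditions.
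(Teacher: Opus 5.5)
Your proposal is correct and follows essentially the same route as the paper's proof. Both decompose the objective per $(t,\mb c)$ and use Bayes' rule, so that the adaptive weight turns the marginal $p_t(\mb y_t)$ into $p_t(\mb y_t|\mb c)$ while the posterior remains the unconditional $p(\mb y|\mb y_t)$. Both then apply the transition-score conditional-expectation identity (the paper's Lemma~\ref{lemma1}) to each term and complete the square, which yields $\mb s_\text{cfg}$. The only cosmetic difference is that the paper's appendix starts from the Form~II restatement and converts it to Form~I, whereas you work directly with the Form~I objective as stated.
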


The MCLR regularization term in~\eqref{CFG equivalent MCLR form 1} has the same contrastive structure as the standard MCLR objective in~\eqref{Using ELBO for likelihood in MCLR main text} and its denoising form~\eqref{denoiser perspective}: it encourages the denoiser associated with the target condition $\mb c$ to achieve lower denoising error on samples from the target condition than on samples drwan from mismatched conditions. The key difference from standard MCLR is the sample adaptive weight $\frac{p_t(\mb y_t|\mb c)}{p_t(\mb y_t)}$. Since this weight is nonnegative, the inter-class contrastive nature of MCLR is preserved.

Since a larger weight indicates that a mismatched noisy sample $\mb y_t$ is more
likely to be confused with samples from the target condition $\mb c$, the adaptive weight makes the CFG-equivalent objective~\eqref{CFG equivalent MCLR form 1} more selective over negative samples: 
samples $\mb y_t$ from mismatched conditions that have high relative density under the target condition $\mb c$ receive larger weights, while samples that are already easily distinguishable from the target condition (those with small $\frac{p_t(\mb y_t|\mb c)}{p_t(\mb y_t)}$) receive smaller weights. 
Thus, the adaptive weight can be interpreted as a form of hard-negative mining, emphasizing ambiguous or class-confusable samples that provide stronger contrastive signal. 

This perspective clarifies the relationship between MCLR and CFG. 
Standard MCLR uses a uniform contrastive penalty over mismatched samples, whereas CFG corresponds to a sample-adaptive weighted version of MCLR. 
Therefore, CFG can be interpreted as an \textbf{inference-time} contrastive
alignment procedure, while MCLR provides a \textbf{training-time} mechanism for
partially internalizing this effect into the model. In practice, we implement the standard, uniformly weighted MCLR objective
rather than the sample-adaptive version. Nevertheless, as shown empirically, standard MCLR induces similar qualitative
effects and substantially narrows the gap between unguided sampling and CFG.
\begin{figure}[t]
    \centering
    \includegraphics[width=1\linewidth]{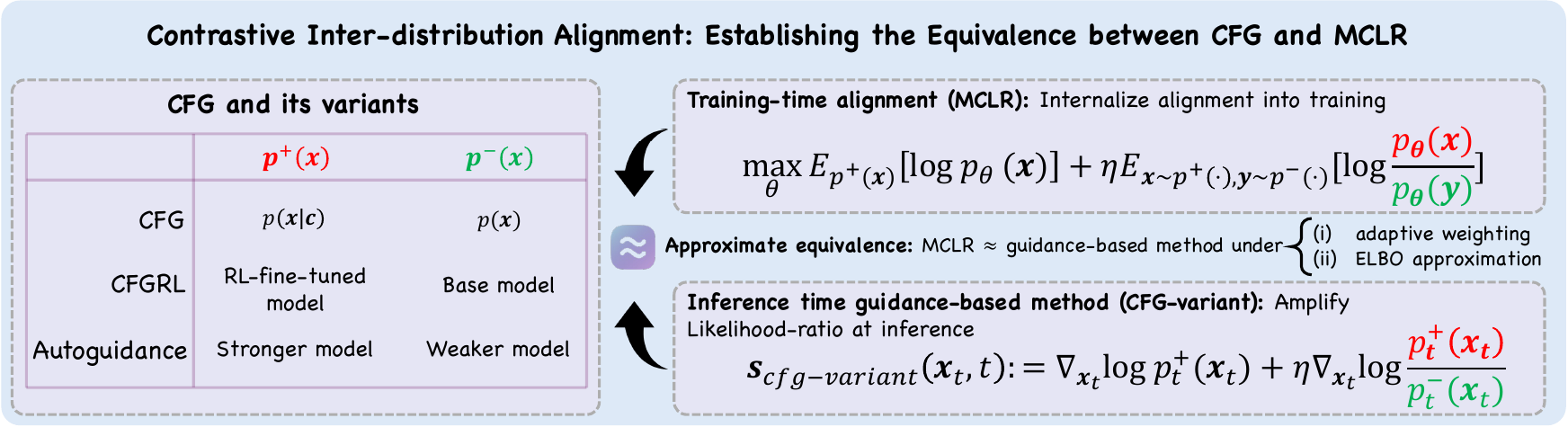}
    \caption{\textbf{A Unified Framework Connecting CFG Variants with Contrastive Alignment.}
    CFG-based methods can be interpreted as implicitly optimizing a contrastive alignment objective between two distributions $p^+(\mb x)$ and $p^-(\mb x)$ at inference time.}
    \label{fig:MCLR CFG Equivalence figure}
\end{figure}


Lastly, we note that the MCLR regularization term in~\eqref{CFG equivalent MCLR form 1} also admits an equivalent form:
\begin{equation}
\begin{aligned}
\mathbb{E}_{\substack{\boldsymbol{c},\tilde{\mb c},t\sim p(t),\\ \mb x\sim p(\cdot|\mb c),
p_{0t}(\boldsymbol{x}_t|\boldsymbol{x})}}
\Biggl[
& w(t)
\Bigl(
\|\nabla_{\boldsymbol{x}_t}\log p_{0t}(\boldsymbol{x}_t\mid\boldsymbol{x})
- \boldsymbol{s}_{\boldsymbol{\theta}}(\boldsymbol{x}_t,t,\boldsymbol{c})\|_2^2
\\
& -\textcolor{red}{\frac{p_t(\mb x_t|\tilde{\mb c})}{p_t(\mb x_t)}}
\|\nabla_{\boldsymbol{x}_t}\log p_{0t}(\boldsymbol{x}_t\mid\boldsymbol{x})
- \boldsymbol{s}_{\boldsymbol{\theta}}(\boldsymbol{x}_t,t,\tilde{\boldsymbol{c}})\|_2^2
\Bigr)
\Biggr],
    \label{CFG equivalent MCLR form 2}
\end{aligned}
\end{equation}
which corresponds to Form II in~\eqref{MLE with Likelihood ratio constraint form 2 simplified}.

\subsection{Understanding CFG-Variants through the Alignment Lens}

The equivalence above provides a unified perspective for interpreting CFG variants. By modifying the source distributions from which positive and negative samples are drawn in the contrastive objective, one recovers several existing guidance mechanisms as special cases.

For example in~\eqref{CFG equivalent MCLR form 1}, if $\mb x$ is sampled from a stronger model and $\mb y$ is sampled from a weaker model, the resulting optimal solution recovers the score of Autoguidance~\cite{karras2024guiding}. Similarly, if $\mb x$ is sampled from a reinforcement-learning fine-tuned model and $\mb y$ is sampled from the base model, the induced solution corresponds to the score used in CFGRL~\cite{frans2025diffusion}. Thus, MCLR provides a unified alignment interpretation for a broad class of guidance algorithms.
\Cref{fig:MCLR CFG Equivalence figure} summarizes this unified framework
and
\Cref{appendix:Extensions: CFG Variants under the Alignment Framework}
provides detailed generalization and discussion.

\section{Related Work}
Our work relates to recent advances in conditional generative modeling and guidance in diffusion models. We provide a more comprehensive discussion in~\cref{More related works}.
\subsection{Visual Generation without Guidance.} Several recent works aim to induce CFG-like behavior by modifying the training objective, rather than applying classifier-free guidance (CFG) at inference time~\cite{chentoward,chenvisual,tang2025diffusion}.
Among them, Conditional Contrastive Alignment (CCA)~\cite{chentoward} is the most closely related to our work. CCA learns a gamma-powered distribution via Noise Contrastive Estimation~\cite{gutmann2010noise}. 
In contrast, MCLR provides stronger empirical performance in diffusion models and achieves cometitive performance in autoregressive settings. Moreover, we reveal a previously unexplored theoretical equivalence between DPO and CCA.

Another related approach is Guidance-Free Training (GFT)~\cite{chenvisual,tang2025diffusion}, which aims to reproduce CFG-induced score functions through a modified denoising score matching (DSM) objective. While GFT mimics the functional form of CFG during training, our MCLR formulation instead reveals the underlying contrastive likelihood-ratio structure implicit in CFG, offering a clearer mechanistic interpretation. 

Several additional works~\cite{yan2024training,lee2025aligning,kadkhodaie2024feature,yun2025no} employ class-wise contrastive objectives to improve conditional generation. However, these methods mainly focus on empirical improvements and lack a formal characterization of the underlying theoretical properties.

Finally, Direct Discriminative Optimization (DDO)~\cite{zheng2025direct} contrasts real and synthetic samples rather than class-conditional distributions, and serves as a baseline in our experiments. Although MCLR, as a unified framework (discussed in~\cref{appendix:Extensions: CFG Variants under the Alignment Framework}), can be extended to real-synthetic contrastive settings, we leave this direction for future investigation.

\subsection{Inference-Time Alignment via Guidance.} 
Recent works~\cite{frans2025diffusion,jin2025inference,cheng2025diffusion,jiang2026rethinking} observe that CFG-style inference-time guidance can produce effects resembling those of training-time alignment methods, including reinforcement learning-based approaches. These studies provide empirical evidence that guidance may implicitly induce alignment-like behavior. 

However, existing analyses are largely heuristic and often rely on unrealistic assumptions such as the guided score corresponds to that of the gamma-powered distribution~\eqref{optimal CC-DPO solution}, an assumption that has been proven incorrect~\cite{karras2024guiding,bradley2024classifier}. As a result, a rigorous theoretical connection between CFG and alignment objectives remains incomplete.

By establishing the equivalence between CFG and a weighted MCLR objective, our work provides a formal mechanistic interpretation of CFG as an inference-time contrastive alignment algorithm. 
\section{Experimental Results}
\label{experiment results main text}
In this section, we empirically evaluate the effectiveness of the proposed method.
Our experiments demonstrate that:
(i) MCLR substantially improves conditional generation quality and outperforms existing training-time baselines; and
(ii) MCLR achieves performance comparable to CFG, exhibiting a similar fidelity--diversity trade-off and producing similar qualitative effects. Due to space constraints, we present only a subset of the results here and defer a more comprehensive evaluation to~\cref{additional results appendix,ablation study appendix}.


\subsection{Experimental Setups}

\paragraph{Practical Implementation.}
In our experiments, we focus on fine-tuning pretrained models. The KL-regularized objective in~\eqref{KL+MCLR objective main text} requires sampling from the base model, which incurs significant computational overhead. Although one can combine DSM with MCLR as in~\eqref{MLE with Likelihood ratio constraint}, we empirically observe that fine-tuning diffusion models using~\eqref{denoiser perspective} alone is sufficient to exhibit strong performance.
Therefore, throughout the main text, we fine-tune diffusion models using only~\eqref{denoiser perspective}. This simplified formulation achieves strong empirical performance while reducing hyperparameter sensitivity, as MCLR introduces only a single hyperparameter, i.e., the learning rate. Nevertheless, including the DSM or KL losses can improve training stability, discussed in~\cref{effect of DSM Regularization}.


The theoretical equivalence between MCLR and CFG requires an adaptive weighting scheme as in~\eqref{CFG equivalent MCLR form 1}. While this weight could in principle be approximated via an ELBO approximation or computed with an extra classifier, we instead adopt the standard (without adaptive weighting) MCLR objective. Despite this simplification, we find that standard MCLR achieves comparable quantitative performance to CFG and produces similar qualitative effects.

\paragraph{Datasets, Models, and Baselines.}We evaluate MCLR alongside several baselines, including CCA, CC-DPO, DDO, and CFG, on both diffusion and visual autoregressive models. For diffusion models, we fine-tune pretrained EDM2 models~\cite{karras2024analyzing} on ImageNet-64$\times$64 and ImageNet-512$\times$512, and SiT (with REPA) model~\cite{yurepresentation} on ImageNet-256$\times$256. For visual autoregressive models, we fine-tune VAR-d24 model~\cite{tian2024visual} on ImageNet-256$\times$256.
\paragraph{Evaluation Metrics.}We evaluate generative performance using \text{Fr\'echet Distance} (FD)~\cite{heusel2017gans}, Precision and Recall~\cite{kynkaanniemi2019improved}, and Inception score (IS)~\cite{salimans2016improved}. FD measures distributional alignment, IS favors class-discriminative samples with high prediction confidence, Precision reflects sample fidelity and Recall measures diversity. FD, Precision, and Recall can be computed using either Inception or DINOv2 features.

For diffusion models, we primarily report FD computed with DINOv2 features (FD\textsubscript{DINOv2}). While FID (Inception-based FD) is widely used in the diffusion literature, we find that for strong pretrained model such as EDM2 which already achieves strong FID (1.5-2), it can be insensitive to perceptually meaningful improvements. In particular, although both CFG and MCLR lead to visually pronounced quality improvements, the FID score often does not improve and may even degrade. In contrast, FD\textsubscript{DINOv2} consistently captures these improvements, aligning with prior findings that it correlates more strongly with human evaluations~\cite{stein2023exposing}. For base models that are less strong including VAR and SiT, both FID and FD\textsubscript{DINOv2} improve consistently. 

Precision and Recall exhibit consistent trends across feature extractors; we therefore report results computed with Inception features in the main text and defer DINOv2-based results to~\cref{additional results appendix}.
\subsection{Overall Algorithmic Behavior}
\paragraph{Progressive Class Separation and Fidelity--Diversity Trade-off.} We first analyze the training dynamics induced by MCLR and their effect on conditional generation. Qualitatively, as shown in~\Cref{fig:imagenet512 qualitative teaser,fig:class_207_qua,fig:class_343_qua,fig:class_513_qua,fig:class_515_qua,fig:class_617_qua,fig:additional_64_qua_1,fig:additional_64_qua_2,fig:additional_64_qua_3,fig:additional_512_qua_1,fig:additional_512_qua_2,fig:additional_512_qua_3,fig:additional_512_qua_4}, at early stages of training, images generated from the same initial noises share similar global structures across different class conditions, indicating weak class-conditional modeling by the base model. As training proceeds, generated images gradually develop distinct class-specific structures, reflecting increasing inter-class separation and improved conditional modeling.

\begin{figure*}[t]
    \centering
    \includegraphics[width=1\linewidth]{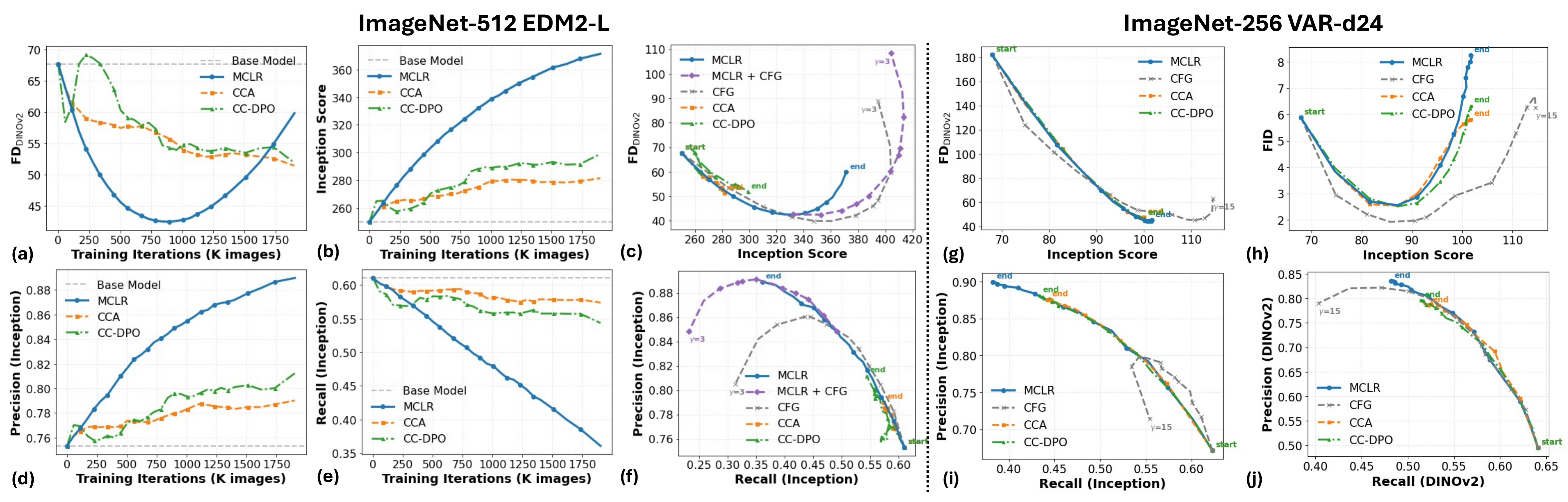}%
\caption{\textbf{Quantitative Results for EDM2-L and VAR-d24.}
(a), (b), (d), and (e) show the evolution of FD, Inception Score, Precision, and Recall, respectively, as functions of training iterations.
(c), (g), and (h) illustrate the FD--IS trade-offs, while (f), (i), and (j) depict the Precision--Recall trade-offs.
For EDM2 models, we evaluate classifier-free guidance (CFG) scales $\gamma \in \{0.1, 0.2, 0.3, 0.4, 0.5, 0.7, 0.9, 1, 1.5, 2.0, 3.0\}$. For Var-d24 model, $\gamma \in \{0.5, 0.8, 1.1, 1.5, 1.7, 2.0, 2.5, 3.0, 4.0, 5.0, 7.0, 10.0, 15.0\}$.
\textbf{Start} denotes the performance of the base model, while \textbf{End} denotes the model obtained after a fixed finetuning duration.}
    \label{fig:imagenet512 quant}
\end{figure*}

This behavior is also reflected quantitatively. As shown in~\Cref{fig:imagenet512 quant}(b,d), both Inception Score (IS) and Precision increase steadily during training, indicating that the generated samples become more class-discriminative and visually faithful.
However, excessive training reduces within-class diversity, as reflected by a decrease in Recall (~\Cref{fig:imagenet512 quant}(e)). As a consequence, FD\textsubscript{DINOv2} initially improves as conditional modeling strengthens, but later deteriorates when diversity decreases, as shown in~\Cref{fig:imagenet512 quant}(a), revealing a characteristic fidelity--diversity trade-off.

This behavior closely mirrors the effect of increasing the guidance scale in CFG, where stronger guidance improves class fidelity at the cost of diversity.
Similar trade-offs are observed for other contrastive alignment objectives such as CC-DPO and CCA. Following prior works, we therefore report results at the checkpoint achieving the best FD\textsubscript{DINOv2} score in~\cref{tab:results} for each algorithm.
\begin{table}[t]
  \caption{
  Quantitative results for MCLR, CFG and training-time baselines. For each algorithm, the metrics are reported at the model checkpoint achieving the best case FD\textsubscript{DINOv2}.
  Precision and Recall are computed using Inception features. The best and second-best results are highlighted in \textbf{bold} and \underline{underline}, respectively.}
  \centering
  \footnotesize
  \setlength{\tabcolsep}{10pt}
  \begin{tabular}{lccccc}
    \toprule
    Method & NFE
    & FD$_{\text{DINOv2}}\!\downarrow$
    & Prec.$\uparrow$
    & Rec.$\uparrow$
    & IS$\uparrow$ \\
    \midrule



    \multicolumn{6}{c}{\textbf{ImageNet (64$\times$64)}} \\

    \cmidrule(lr){1-6}
    EDM2-S               & 63 & 95.20  & 0.705 & \underline{0.614} & 60.43 \\
    +CFG                 &126 & \textbf{43.75}  & \textbf{0.800} & 0.565 & \textbf{127.40} \\
    +DDO                 & 63 & 72.97  & 0.689 & \textbf{0.642} & 65.84 \\
    +CCA            & 63 & 62.36  & 0.762 & 0.557 & 76.13 \\
    +CC-DPO              & 63 & 60.98  & 0.784 & 0.536 & 86.11 \\
    \textbf{+MCLR (Ours)}& 63 & \underline{52.69} & \textbf{0.800} & 0.505 & \underline{90.68} \\

    \midrule
    \multicolumn{6}{c}{\textbf{ImageNet (256$\times$256)}} \\
    \midrule
    VAR-d24 &10 &182.12 &0.672 &\textbf{0.623}& 67.92 \\
    +CFG    &20 &\underline{45.08}  &0.798 &\underline{0.542} &\underline{100.70}\\
    +CCA    &10 &46.82 &0.873 &0.448 &98.92\\
    +CC-DPO &10 &46.63 &\underline{0.881} &0.433 &100.12\\
    \textbf{+MCLR (Ours)}  &10 &\textbf{44.31}  &\textbf{0.893} &0.404 &\textbf{100.84}\\
    \midrule
    SiT-XL/2+REPA &50 &184.36 &0.603 &\textbf{0.683}& 142.24 \\
    +CFG    &100 &50.23  &\underline{0.833} &0.506 &\textbf{385.87}\\
    +CCA    &50 &60.20 &0.769 &\underline{0.551} &254.56\\
    +CC-DPO &50 &\underline{47.20} &\textbf{0.839} &0.451 &\underline{329.83}\\
    \textbf{+MCLR (Ours)}  &50 &\textbf{45.96}  &0.825 &0.488 &311.00\\
    \midrule
    \multicolumn{6}{c}{\textbf{ImageNet (512$\times$512)}} \\
    \midrule
    EDM2-L               & 63 & 67.70  & 0.753 & \underline{0.610} & 250.07 \\
    +CFG                 &126 & \textbf{39.86}  & \underline{0.844} & 0.512 & \textbf{360.30} \\
    +DDO                 & 63 &  49.47  & 0.737 &\textbf{0.652}  & 268.72\\
    +CCA            & 63 &  51.45  & 0.790 & 0.574 &     281.45                           \\
    +CC-DPO              & 63 & 51.92  & 0.812 & 0.544 & 298.89 \\
    \textbf{+MCLR (Ours)}& 63 & \underline{42.50}  & \textbf{0.849} & 0.492 & \underline{332.02} \\

    \bottomrule
  \end{tabular}
  \label{tab:results}
\end{table}

\paragraph{MCLR Outperforms Training-time Baselines including CC-DPO and CCA.} 
\textit{(i) Diffusion Models.} For diffusion models, MCLR achieves better best-case FD\textsubscript{DINOv2} scores than CCA, CC-DPO, and DDO on ImageNet (see~\cref{tab:results}). Moreover, as shown in~\Cref{fig:imagenet512 quant} (c,f), MCLR traverses a significantly wider fidelity--diversity trade-off region steadily with a faster training speed. In contrast, CCA and CC-DPO converge early at suboptimal local minima and exhibit zigzag optimization trajectories, as evidenced by slowly improving or stalled learning curves. This behavior suggests that these objectives are more difficult to optimize in diffusion models.

\textit{(ii) Autoregressive Models.} 
In contrast to diffusion models, MCLR, CC-DPO, and CCA achieve comparable performance improvements on VAR-d24, substantially improving both FD\textsubscript{DINOv2} and FID (see~\Cref{fig:imagenet512 quant}(g,h)). We hypothesize that this difference arises from how likelihood is computed in each framework.
In diffusion models, the likelihood is only approximated, and CC-DPO and CCA additionally require estimating the base-model likelihood, introducing extra variance during training.
In contrast, autoregressive models provide exact likelihoods, resulting in lower-variance optimization and more stable behavior.
Despite this, \Cref{fig:imagenet512 quant}(i,j) shows that MCLR consistently achieves higher precision in later training stages.

\paragraph{MCLR Substantially Narrows the Gap to CFG.} Overall, CFG achieves strong FD\textsubscript{DINOv2} and Inception Scores, but MCLR substantially narrows the gap under standard guidance-free sampling. For example, on EDM2-L, MCLR obtains 42.50 FD\textsubscript{DINOv2} compared with CFG
s 39.86. On VAR and SiT models, MCLR achieves a better best case FD\textsubscript{DINOv2}.




When evaluated using Precision--Recall, MCLR exhibits competitive performance relative to CFG on EDM2 models. Specifically, as shown in~\Cref{fig:imagenet512 quant}(f), MCLR matches CFG in the high-recall regime corresponding to early training stages, and attains a substantially higher best-case precision in the high-precision regime at later training stages, where CFG begins to produce images with oversaturated colors.

Furthermore, applying CFG on top of an MCLR-fine-tuned model further narrows the performance gap between the two methods, yielding higher best-case Inception Score and Precision on both EDM2-S and EDM2-L models (see~\Cref{fig:imagenet512 quant}(c,f) and~\Cref{fig:ImageNet64 EDM2 quant}(c,f,i)). 



\paragraph{MCLR Exhibits Similar Qualitative Effect as CFG.}
Qualitatively, MCLR and CFG produce highly similar visual effects, both substantially enhancing class-specific structures in the generated images, as shown in~\Cref{fig:imagenet512 qualitative teaser,fig:class_207_qua,fig:class_343_qua,fig:class_513_qua,fig:class_515_qua,fig:class_617_qua,fig:additional_64_qua_1,fig:additional_64_qua_2,fig:additional_64_qua_3,fig:additional_512_qua_1,fig:additional_512_qua_2,fig:additional_512_qua_3,fig:additional_512_qua_4}. These observations are consistent with our theoretical analysis in~\cref{theoretical equivalence of CFG and Alignment Algorithm main text}, which interprets CFG as an implicit contrastive alignment method.
In particular, both MCLR and CFG improve conditional modeling by leveraging inter-class contrastive signals.
The key distinction lies in how this mechanism is applied: MCLR internalizes it during training, whereas CFG introduces it at inference time.

\paragraph{Equivalence between CCA and CC-DPO.}

Finally, we observe that CC-DPO consistently matches or outperforms CCA on both EDM2 and VAR models.
This empirical observation aligns with our theoretical analysis in~\cref{adapting DPO for Improved conditional modeling}, which establishes the equivalence between these two objectives.

\section{Discussion and Conclusions}

This work introduced MCLR, a training-time objective that improves conditional modeling in visual generative models by explicitly encouraging inter-class separation. Through extensive experiments, we demonstrate that MCLR consistently outperforms existing training-time baselines and achieves effects similar to classifier-free guidance (CFG), substantially improving the visual fidelity of conditional generation without requiring inference-time guidance and therefore enabling faster inference.

Beyond empirical improvements, our analysis reveals a close connection between MCLR and CFG. In particular, we show that the CFG-guided score corresponds to the optimal solution of a weighted MCLR objective, providing a mechanistic interpretation of CFG as an implicit contrastive alignment algorithm. 
\paragraph{Limitations and Future Directions.}
As a training-time method, MCLR produces a fixed model after training. Although it exhibits a fidelity–diversity trade-off similar to that of CFG, it lacks the flexibility of inference-time guidance to dynamically adjust this trade-off between generation quality and diversity through customized guidance strength. Moreover, CFG typically achieves stronger best-case performance in terms of FD and Inception Score.

Bridging the gap between training-time objectives and inference-time guidance remains an important direction for future work. More fundamentally, these observations raise a broader question:

\begin{center}
\begin{tcolorbox}[
  colback=gray!18,
  colframe=black,
  arc=4mm,
  boxrule=0.8pt,
  width=0.92\linewidth
]
\centering
\vspace{0.3em}

\emph{\textbf{Should alignment be performed at training time, or is inference-time guidance a more effective paradigm?}}

\vspace{0.3em}
\end{tcolorbox}
\end{center}
Recent studies~\cite{frans2025diffusion,jin2025inference,cheng2025diffusion,jiang2026rethinking} suggest that inference-time guidance can, in certain settings, outperform training-based alignment algorithms. These observations point toward a promising research direction: developing alignment algorithms that operate directly at inference time. Exploring this possibility may lead to a broader shift from traditional training-time alignment toward inference-time alignment paradigms.

Lastly, the lack of inter-class separation is only one practical limitation
of the standard DSM; systematically identifying and addressing additional limitations remains an
important future direction.

\section*{Acknowledgment}
We acknowledge funding support from NSF CCF-2212066, NSF CCF-
2212326, NSF IIS 2402950, and ONR N000142512339. This research used the Delta advanced computing and data resource which is supported by the National Science Foundation (award OAC 2005572) and the State of Illinois. Delta is a joint effort of the University of Illinois Urbana-Champaign and its National Center for Supercomputing Applications\cite{boerner2023access}.

\printbibliography

\appendix
\newpage
\appendix
\begin{center}
{\LARGE \bf Appendices}
\end{center}\vspace{-0.15in}
\par\noindent\rule{\textwidth}{1pt}

\section{Theoretical Analysis of MCLR}

\subsection{Equivalent Forms of MCLR Objective}
By the linearity of expectation, the MCLR objective~\eqref{MLE with Likelihood ratio constraint symmetric} admits the following two equivalent forms:
\begin{align}
\label{MLE with Likelihood ratio constraint in appendix}
    \max_{\mb\theta}\;
    \mathbb{E}_{\mb c,\,p(\mb x\mid\mb c)}
    \big[\log p_{\mb\theta}(\mb x|\mb c)\big]+\eta\,
    \underbrace{
    \mathbb{E}_{\mb c,\, \tilde{\mb c},\,\mb x\sim p(\cdot|\mb c),\mb y\sim p(\cdot|\tilde{\mb c})}
    \Big[
    \log \frac{p_{\mb\theta}(\mb x|\mb c)}{p_{\mb\theta}(\mb y|\mb c)}
    \Big]
    }_{\text{MCLR Regularization (Form I)}},\\
    \Leftrightarrow 
    \max_{\mb\theta}\;
    \mathbb{E}_{\mb c,\,p(\mb x\mid\mb c)}
    \big[\log p_{\mb\theta}(\mb x|\mb c)\big]+\eta\,
    \mathbb{E}_{\mb c,\,\mb x\sim p(\cdot|\mb c),\mb y\sim p(\cdot)}
    \Big[
    \log \frac{p_{\mb\theta}(\mb x|\mb c)}{p_{\mb\theta}(\mb y|\mb c)}
    \Big],
\end{align}
where $p(\cdot):=\mathbb{E}_{\mb c}[p(\cdot|\mb c)]$ denotes the unconditional distribution, and 
\begin{align}
\label{MLE with Likelihood ratio constraint in the appendix !!!}
    \max_{\mb\theta}\;
    \mathbb{E}_{\mb c,\,p(\mb x\mid\mb c)}
    \big[\log p_{\mb\theta}(\mb x|\mb c)\big]+\eta\,
    \underbrace{
    \mathbb{E}_{\mb c,\,\tilde{\mb c},\,\mb x\sim p(\mb x|\mb c)}
    \Big[
    \log \frac{p_{\mb\theta}(\mb x|\mb c)}{p_{\mb\theta}(\mb x|\tilde{\mb c})}
    \Big]
    }_{\text{MCLR Regularization (Form II)}}.
\end{align}
Form I compares the likelihood of a class-matched sample $\mb x\sim p(\cdot|\mb c)$ against that of a mismatched sample $\mb y\sim p(\cdot|\tilde{\mb c})$ under the same conditional model $p_{\mb\theta}(\cdot|\mb c)$. In contrast, Form II evaluates the same sample $\mb x$ under two different conditional models, encouraging it to have higher likelihood under the correct class $\mb c$ than under a mismatched class $\tilde{\mb c}$.

These two formulations are equivalent. As shown in the proof of Theorem~\ref{MCLR theorem}, starting from Form II in~\eqref{MCLR form II starting point}, one can recover Form I in~\eqref{independently solve for kth conditional distribution}. We therefore use the two forms interchangeably throughout the analysis. The main text presents MCLR using Form I, while the appendix derives most results from Form II.



\subsection{Main Theorem}
\label{sec: proof for MCLR main}
In this section, we provide the proof for~\Cref{MCLR theorem}. We first restate the assumptions and theorem. 

\noindent\textbf{Assumption~\ref{MCLR assumption 1}.} \textit{The function $h(\mb x|\mb c):=p(\mb x|\mb c)+\eta\,(p(\mb x|\mb c)-p(\mb x))$ has compact support; that is, 
\begin{align}
\operatorname{supp} h(\mb x|\mb c)\;\subseteq\;K, \qquad |K|<\infty.   
\end{align}}

\noindent\textbf{Assumption~\ref{MCLR assumption 2}.}
\textit{For $\forall \mb x\in K$, 
\begin{align}
p_{\mb\theta}(\mb x|\mb c)\geq \delta>0, \qquad \delta<\frac{1}{|K|}. 
\end{align}}

\noindent\textbf{Theorem~\ref{MCLR theorem}.}
\textit{Under the two assumptions, the optimal solution to~\eqref{MLE with Likelihood ratio constraint} is:    
\begin{align}
    p_{\mb\theta^*}(\mb x|\mb c)\;=\;
    \begin{cases}
        \max \bigl\{\frac{h(\mb x|\mb c)}{Z(\mb c)},\delta\bigl\}, \;\mb x\in K, \\
        0, \;\mb x\notin K,
    \end{cases}
\end{align}
where $Z(\mb c)$ is the normalizing constant.}

\begin{proof}
Without loss of generality, we consider discrete classes in this proof; the theorem extends straightforwardly to continuous classes. Suppose there are $M$ classes $\{\mb c_i\}_{i=1}^M$, each class has prior probability $p(\mb c_i)$ and $\sum_{i=1}^Mp(\mb c_i)=1$, then the training objective~\eqref{MLE with Likelihood ratio constraint form 2 simplified} takes the following form:

\begin{align}
 \max_{\mb\theta}\mathcal{L}(\mb\theta):=\mathbb{E}_{\mb c, p(\mb x|\mb c)}\left[\log p_{\mb\theta}(\mb x|\mb c)\right]+\eta\mathbb{E}_{\mb c, \tilde{\mb c},\mb x\sim p(\mb x|\mb c)}\left[\log \frac{p_{\mb\theta}(\mb x|\mb c)}{p_{\mb\theta}(\mb x|\tilde{\mb c})}\right]\label{MCLR form II starting point}\\
 =\max_{\mb\theta}\sum_{i=1}^Mp(\mb c_i)\mathbb{E}_{p(\mb x|\mb c_i)}\left[\log p_{\mb\theta}(\mb x|\mb c_i)\right]+\eta\sum_{i=1}^M p(\mb c_i)\sum_{j=1}^M p(\mb c_j)\mathbb{E}_{\mb x\sim p(\mb x|\mb c_i)}\left[\log p_{\mb\theta}(\mb x|\mb c_i)\right]\\ -\eta\sum_{i=1}^M p(\mb c_i)\sum_{j=1}^M p(\mb c_j)\mathbb{E}_{\mb x\sim p(\mb x|\mb c_i)}\left[\log p_{\mb\theta}(\mb x|\mb c_j)\right].
\end{align}
Note that we can decompose overall objective $\mathcal{L}(\mb\theta)$ as:
\begin{align}
    \mathcal{L}(\mb\theta) = \sum_{k=1}^M \mathcal{L}_{k}(\mb\theta),
\end{align}
where $\mathcal{L}_k(\mb\theta)$ is the amount contributed by $p_{\mb\theta}(\mb x|\mb c_k)$:
\begin{align}
    \mathcal{L}_k(\mb\theta)=p(\mb c_k)\mathbb{E}_{p(\mb x|\mb c_k)}\left[\log p_{\mb\theta}(\mb x|\mb c_k)\right] + \eta p(\mb c_k)\mathbb{E}_{p(\mb x|\mb c_k)}\left[\log p_{\mb\theta}(\mb x|\mb c_k)\right] \\
    -\eta p(\mb c_k)\sum_{i=1}^M p(\mb c_i)\mathbb{E}_{p(\mb x|\mb c_i)}\left[\log p_{\mb\theta}(\mb x|\mb c_k)\right]\\
    =p(\mb c_k)\mathbb{E}_{p(\mb x|\mb c_k)}\left[\log p_{\mb\theta}(\mb x|\mb c_k)\right] + \eta p(\mb c_k)\mathbb{E}_{p(\mb x|\mb c_k)}\left[\log p_{\mb\theta}(\mb x|\mb c_k)\right]
    -\eta p(\mb c_k)\mathbb{E}_{p(\mb x)}\left[\log p_{\mb\theta}(\mb x|\mb c_k)\right].
\end{align}
Note that we can optimize $L_k(\mb \theta)$ for each $k\in\{1,\ldots,M\}$ to get the optimal conditional distribution $p(\mb x|\mb c_k)$ independently:
\begin{align}
    \max_{\mb\theta} \mathcal{L}_k(\mb\theta)\Leftrightarrow \max_{p_{\mb\theta}(\cdot|\mb c_k)} \mathbb{E}_{p(\mb x|\mb c_k)}\left[\log p_{\mb\theta}(\mb x|\mb c_k)\right] + \eta \mathbb{E}_{p(\mb x|\mb c_k)}\left[\log p_{\mb\theta}(\mb x|\mb c_k)\right]
    -\eta \mathbb{E}_{p(\mb x)}\left[\log p_{\mb\theta}(\mb x|\mb c_k)\right]\label{independently solve for kth conditional distribution}\\
    =\max_{p_{\mb\theta}(\cdot|\mb c_k)}\int_{K}\log p_{\mb\theta}(\mb x|\mb c_k)\left(p(\mb x|\mb c_k)+\eta(p(\mb x|\mb c_k)-p(\mb x))\right)\der\mb x.
    \label{pseudo KL for train from scratch}
\end{align}
In what follows, we drop the subscript $k$, so that the optimization problem becomes:
\begin{align}
    \max_{p_{\mb\theta}(\cdot|\mb c)}\int_{K}\log p_{\mb\theta}(\mb x|\mb c)\left(p(\mb x|\mb c)+\eta(p(\mb x|\mb c)-p(\mb x))\right)\der\mb x\\
    =\max_{p_{\mb\theta}(\cdot|\mb c)}\int_{K}\log p_{\mb\theta}(\mb x|\mb c)h(\mb x|\mb c)\der\mb x
    \label{pseudo KL}
\end{align}
Note that~\eqref{pseudo KL} shares the similar form as KL divergence, but $h(\mb x|\mb c)$ is not a valid probability distribution, since there could exists $\mb x$ such that $h(\mb x|\mb c)<0$. In this case, setting $p_{\mb\theta}(\mb x|\mb c)=0$ makes~\eqref{pseudo KL} approaches $+\infty$, hence the optimization problem does not have an attainable optimum. To make the optimization problem well-posed, we impose Assumption~\ref{MCLR assumption 2}. 

Define the sets
\begin{align}
    K_{+}(\mb c) = \{\mb x\in K: h(\mb x|\mb c)>0\},\qquad K_{-}(\mb c)=\{\mb x\in K: h(\mb x|\mb c)\leq 0\}.
\end{align}
Under Assumption~\ref{MCLR assumption 2}, it is straightforward to show the optimal distribution $p_{\mb\theta^*}(\mb x|\mb c)$ must attain the lower bound $\delta$ for $\forall\mb x\in K_{-}(\mb c)$; otherwise increasing the density will decrease the objective~\eqref{pseudo KL}. Furthermore for $\mb x\in K^c$, where $h(\mb x|\mb c)=0$, the optimal density must satisfy $p_{\mb\theta^*}(\mb x|\mb c)=0$ for almost every $\mb x\in K^c$; otherwise, probability mass could be shifted from $K^c$ to $K_+(\mb c)$to further increase the objective~\eqref{pseudo KL}. Therefore, the remaining optimization concerns the density on $K^+(\mb c)$, which is the optimal solution to the following constrained optimization problem:
\begin{align}
    \min_{p_{\mb\theta}(\cdot|\mb c)}\int_{K_+(\mb c)}&-\log p_{\mb\theta}(\mb x|\mb c)h(\mb x|\mb c)\der\mb x \\
    \text{s.t.}\;& -p_{\mb\theta}(\mb x|\mb c)+\delta\leq 0, \;\forall \mb x\in K_+(\mb c)\\
    &\int_{K_+(\mb c)}p_{\mb\theta}(\mb x|\mb c)\der\mb x-m(\mb c)=0,
\end{align}
where
\begin{align}
    m(\mb c) = 1-\int_{K_-(\mb c)}\delta \der\mb x=1-\delta|K_{-}(\mb c)|.
\end{align}
Since $\delta<\frac{1}{|K|}$ by assumption, we have $0<m(\mb c)\leq 1$.

Note that this optimization problem is convex in $p_{\mb\theta}(\mb x|\mb c)$. Treating $p_{\mb\theta}(\mb x|\mb c)$ for each $\mb x$ as optimization variables, the constraints are affien and Slater's condition holds. Therefore strong duality applies, and the optimal solution can be characterized by the KKT conditions~\cite{boyd2004convex}.

Define the Lagrangian as:
\begin{align}
    \mathcal{L}(p_{\mb\theta}(\cdot|\mb c),\lambda,u(\cdot))=\int_{K_+(\mb c)}&-\log p_{\mb\theta}(\mb x|\mb c)h(\mb x|\mb c)\der\mb x+\lambda\bigl(\int_{K_+(\mb c)}p_{\mb\theta}(\mb x|\mb c)\der\mb x-m(\mb c)\bigl)\\&+\int_{K_+(\mb c)}u(\mb x)(-p_{\mb\theta}(\mb x|\mb c)+\delta)\der\mb x,
\end{align}
where $\lambda$ and $\mu(\cdot)$ are the dual variables. 

For each $\mb x\in K_+(\mb c)$, treating $p_{\mb\theta^*}(\mb x|\mb c)$ as a pointwise optimization variable, let $p_{\mb\theta^*}(\mb x|\mb c)$, $\lambda^*$ and $u^*(\mb x)$  be the corresponding optimal primal and dual variables. Applying stationary condition of the KKT conditions, we have:
\begin{align}
    \nabla_{p_{\mb\theta^*}(\mb x|\mb c)}\mathcal{L}(p_{\mb\theta^*}(\mb x|\mb c),\lambda^*, u^*(\mb x))=0\\
    \Rightarrow -\frac{h(\mb x|\mb c)}{p_{\mb\theta^*}(\mb x|\mb c)}+\lambda^*-u^*(\mb x)=0.
    \label{KKT stationary condition}
\end{align}

Based on~\eqref{KKT stationary condition}, we consider the following two cases:
\begin{itemize}[leftmargin=*]
    \item Suppose $u^*(\mb x)=0$, we have $p_{\mb\theta^*}(\mb x|\mb c)=\frac{h(\mb x|\mb c)}{\lambda^*}$.
    \item Suppose $u^*(\mb x)\neq 0$, by complementary slackness, we have $p_{\mb\theta^*}(\mb x|\mb c)=\delta$ and $u^*(\mb x)=\lambda^*-\frac{h(\mb x|\mb c)}{\delta}$. Applying dual feasibility $u^*(\mb x)\geq 0$, we have $h(\mb x|\mb c)\leq \lambda^*\delta$.
\end{itemize}
Note that the above two cases can be combined as:
\begin{align}
    p_{\mb\theta^*}(\mb x|\mb c)=\max\bigl\{\frac{h(\mb x|\mb c)}{\lambda^*},\delta\bigl\},\;\forall \mb x\in K_+(\mb c).
    \label{optimal solution for K}
\end{align}

Next, we prove $\lambda^*$ exists, i.e.,~\eqref{optimal solution for K} is normalizable. By applying the primal feasibility, we have:
\begin{align}
&\int_{K_+(\mb c)}p_{\mb\theta^*}(\mb x|\mb c)\der\mb x=m(\mb c)\\
    &\Rightarrow \int_{K_+(\mb c)\cap\{h(\mb x|\mb c)\leq\lambda^*\delta\}}\delta \der\mb x+\int_{K_+(\mb c)\cap\{h(\mb x|\mb c)>\lambda^*\delta)\}}\frac{h(\mb x|\mb c)}{\lambda^*}\der\mb x=m(\mb c).
\label{dual feasibility}
\end{align}
Define:
\begin{align}
    A(\lambda):=\int_{K_+(\mb c)\cap\{h(\mb x|\mb c)\leq\lambda\delta)\}}\delta \der\mb x+\int_{K_+(\mb c)\cap\{h(\mb x|\mb c)>\lambda\delta\}}\frac{h(\mb x|\mb c)}{\lambda}\der\mb x.
\end{align}
Suppose $0<\lambda_1<\lambda_2$, we have:
\begin{align}
    A(\lambda_2)-A(\lambda_1)=\int_{K_+(\mb c)\cap\{\lambda_1\delta\leq h(\mb x|\mb c)\leq\lambda_2\delta\}}\delta \der\mb x-\int_{K_+(\mb c)\cap\{\lambda_1\delta\leq h(\mb x|\mb c)\leq\lambda_2\delta\}}\frac{h(\mb x|\mb c)}{\lambda_1}\der\mb x\\
    +\int_{K_+(\mb c)\cap\{h(\mb x|\mb c)>\lambda_2\delta\}}(\frac{h(\mb x|\mb c)}{\lambda_2}-\frac{h(\mb x|\mb c)}{\lambda_1})\der\mb x\\
    =\int_{K_+(\mb c)\cap\{\lambda_1\delta\leq h(\mb x|\mb c)\leq\lambda_2\delta\}}(\delta-\frac{h(\mb x|\mb c)}{\lambda_1})\der\mb x+\int_{K_+(\mb c)\cap\{h(\mb x)>\lambda_2\delta\}}(\frac{h(\mb x|\mb c)}{\lambda_2}-\frac{h(\mb x|\mb c)}{\lambda_1})\der\mb x\\
    <0,
\end{align}
which implies $A(\lambda)$ is a monotonically decreasing function of $\lambda>0$. By the assumption $\delta<\frac{1}{|K|}$, we have $\delta |K_-(\mb c)|+\delta |K_+(\mb c)|<1$, which implies $\delta|K_+(\mb c)|<m(\mb c)$.

Since:
\begin{align}
    \lim_{\lambda\rightarrow 0}A(\lambda)=+\infty,\;\;\lim_{\lambda\rightarrow+\infty}A(\lambda)=\delta|K_+(\mb c)|,
\end{align}
as long as $\delta|K_+(\mb c)|<m(\mb c)$, by the intermediate value theorem, there exists a finite dual optimal point $\lambda^*>0$ such that $A(\lambda^*)=m(\mb c)$, i.e., the primal feasibility~\eqref{dual feasibility} holds. Hence, $p_{\mb\theta^*}(\mb x)$ in~\eqref{optimal solution for K} is a valid, normalizable probability distribution. Since the exact value of $\lambda^*$ is dependent on the specific condition $\mb c$, we replace it with the notation $Z(\mb c)$, which leads to the final result:
\begin{align}
    p_{\mb\theta^*}(\mb x|\mb c)\;=\;
    \begin{cases}
        \max \bigl\{\frac{h(\mb x|\mb c)}{Z(\mb c)},\delta\bigl\}, \;\mb x\in K_+(\mb c), \\
        \delta,\;\mb x \in K_-(\mb c),\\
        0,\; \mb x\notin K.
    \end{cases}
\end{align}
, which can be further simplified as:
\begin{align}
    p_{\mb\theta^*}(\mb x|\mb c)\;=\;
    \begin{cases}
        \max \bigl\{\frac{h(\mb x|\mb c)}{\lambda^*},\delta\bigl\}, \;\mb x\in K, \\
        0, \;\mb x\notin K.
    \end{cases}
    \label{optimal solution for MCLR appendix}
\end{align}

In the limit $\delta\rightarrow 0$, the optimal distribution approaches:
\begin{align}
    p_{\mb\theta^*}(\mb x|\mb c)=\frac{h^+(\mb x|\mb c)}{\int_{\mb x}h^+(\mb x|\mb c)\der\mb x},\qquad h^+(\mb x|\mb c):=\max\bigl\{h(\mb x|\mb c),0\bigl\}
\end{align}
which simply zeros out the negative part of $h(\mb x|\mb c)$ and renormalizes it as a valid distribution. This completes the proof.
\end{proof}

\subsection{Fine-tuning with MCLR}
\label{sec: proof for fineutning}
Given a base model $p_\text{ref}(\mb x)$ that lacks class specificity, we may fine-tune it using MCLR combined with KL regularization:
\begin{align}
    \max_{\mb\theta}-\mathbb{E}_{\mb c}\left[D_\text{KL}\bigl(p_\text{ref}(\mb x|\mb c)||p_{\mb\theta}(\mb x|\mb c)\bigl)\right]+\eta\mathbb{E}_{\mb c,\tilde{\mb c},\mb x\sim p(\mb x|\mb c)}\left[\log \frac{p_{\mb\theta}(\mb x|\mb c)}{p_{\mb\theta}(\mb x|\tilde{\mb c})}\right].
    \label{KL+MCLR objective appendix}
\end{align}
Similar to~\eqref{independently solve for kth conditional distribution}, we can get the optimal conditional distribution $p(\mb x|\mb c_k)$ for each $k\in\{1,\ldots,M\}$ by solving the following optimization problem:
\begin{align}
   \argmax_{p_{\mb\theta}(\cdot|\mb c_k)} -D_\text{KL}\bigl(p_\text{ref}(\mb x|\mb c_k)||p_{\mb\theta}(\mb x|\mb c_k)\bigl) + \eta \mathbb{E}_{p(\mb x|\mb c_k)}\left[\log p_{\mb\theta}(\mb x|\mb c_k)\right]
    -\eta \mathbb{E}_{p(\mb x)}\left[\log p_{\mb\theta}(\mb x|\mb c_k)\right]\\
    =\argmax_{p_{\mb\theta}(\cdot|\mb c_k)}\int_{K}\log p_{\mb\theta}(\mb x|\mb c_k)\left(p_\text{ref}(\mb x|\mb c_k)+\eta(p(\mb x|\mb c_k)-p(\mb x))\right)\der\mb x.
    \label{pseudo KL problem for fine-tuning setting}
\end{align}
Note that optimization problem~\eqref{pseudo KL problem for fine-tuning setting} shares the same structure as~\eqref{pseudo KL for train from scratch}, hence by letting $h(\mb x|\mb c):= p_\text{ref}(\mb x|\mb c)+\eta (p(\mb x|\mb c)-p(\mb x))$ and under the same compact-support Assumption~\ref{MCLR assumption 1}, we get the same optimal solution as in stated in Theorem~\ref{MCLR theorem}. 

Importantly, in the fine-tuning setting, under a mixture error model, MCLR recovers the ground truth conditional distribution, as stated in the following corollary.

\noindent\textbf{Corollary~\ref{corollary MCLR finetuning}.} \textit{If the base model satisfies the mixture error model:
\begin{align}
p_\text{ref}(\mb x|\mb c)=(1-\eta)\,p(\mb x|\mb c)+\eta\, p(\mb x),
\label{convex error model appendix}
\end{align}
where $\eta\in[0,1]$, then fine-tuning $p_\text{ref}(\mb x|\mb c)$ with MCLR objective~\eqref{KL+MCLR objective main text} recovers the ground truth conditional distribution $p(\mb x|\mb c)$.} 
\begin{proof}
    Under the mixture-error model~\eqref{convex error model appendix}, the optimization problem~\eqref{pseudo KL problem for fine-tuning setting} becomes:
\begin{align}
    \argmax_{p_{\mb\theta}(\cdot|\mb c_k)}\int_{K}\log p_{\mb\theta}(\mb x|\mb c_k)p(\mb x|\mb c_k)d\mb x=\argmin_{p_{\mb\theta}(\cdot|\mb c_k)}D_\text{KL}(p(\mb x|\mb c_k)||p_{\mb\theta}(\mb x|\mb c_k)),
    \label{pseudo KL problem for fine-tuning setting under mixture-error model}
\end{align}
which has optimal solution $p_{\mb\theta}(\mb x|\mb c_k)=p(\mb x|\mb c_k)$. Note that in this case, the proof does not depend on Assumption~\ref{MCLR assumption 1} and Assumption~\ref{MCLR assumption 2}. This completes the proof.
\end{proof}
\section{Theoretical Analysis of CC-DPO}
\subsection{Basics of DPO}
\paragraph{Reward Modeling.}
For a given prompt $\mb c$ and two associated outputs $\mb x_{w}$ and $\mb x_{l}$, where the subscripts '$w$' stands for 'winning' while '$l$' stands for 'losing', implying that $\mb x_{w}$ is preferred over $\mb x_{l}$, DPO models the human preference distribution with the Bradley-Terry (BT) model~\cite{bradley1952rank}:
\begin{align}
    p(\mb x_{w}\succ\mb x_{l}|\mb c)=\frac{\exp(r^*(\mb x_w|\mb c))}{\exp(r^*(\mb x_w|\mb c))+\exp(r^*(\mb x_l|\mb c))}=\mathrm{Sigmoid}(r^*(\mb x_w|\mb c)-r^*(\mb x_l|\mb c))
    \label{BT model}
\end{align}
where $r^*(\mb x|\mb c)$ is the underlying optimal reward function for prompt $\mb c$. Intuitively, the preferred samples $\mb x_w$ should have higher reward values compared to the non-preferred samples $\mb x_l$. Assuming access to a dataset of sampled comparisons $\mathcal{S}=\{(\mb c^{i}, \mb x_w^{i}, \mb x_l^{i})\}_{i=1}^N$, one can learn the optimal reward function via maximum likelihood estimation:
\begin{align}
    r^*&=\argmax_{r}\;\mathbb{E}_{(\mb c, \mb x_w, \mb x_l)\sim \mathcal{S}}\bigl[\log p(\mb x_w\succ\mb x_l|\mb c)\bigl]\\
    &=\argmin_{r}\; -\mathbb{E}_{(\mb c,\mb x_w,\mb x_l)\sim S}\bigl[\log\mathrm{Sigmoid}(r(\mb x_{w}|\mb c)-r(\mb x_{l}|\mb c))\bigl].
    \label{Reward modeling objective}
\end{align}

\paragraph{RL fine-tuning Phase.} Assuming access to the optimal preference reward function $r^*$, one can fine-tune a base model $p_\text{ref}(\mb x|\mb c)$ to align with the preference dataset by optimizing the following objective: 
\begin{align}
    \max_{\mb\theta}\;\mathbb{E}_{\mb c}\bigl[\mathbb{E}_{\mb x\sim p_{\mb\theta}(\mb x|\mb c)}\bigl[r^*(\mb x|\mb c)\bigl]-\beta D_{KL}(p_{\mb\theta}(\mb x|\mb c)||p_\text{ref}(\mb x|\mb c))\bigl],
    \label{RL fine-tuning objective}
\end{align}
where $\beta$ controls the KL-regularization strength.
Intuitively, objective~\eqref{RL fine-tuning objective} encourages the fine-tuned model to achieve high reward value in expectation, and at the same time not deviate too much from the base model. 

\paragraph{DPO Objective.} Under certain regularity conditions, the optimal solution to the fine-tuning objective~\eqref{RL fine-tuning objective} admits the following closed-form:
\begin{align}
    p_{\mb\theta^*}(\mb x|\mb c)=\frac{1}{Z(\mb c)}p_\text{ref}(\mb x|\mb c)\exp{(\frac{1}{\beta}r^*(\mb x|\mb c))},
    \label{optimal policy}
\end{align}
where $Z(\mb c)$ is a partition function for normalizing the density. To prove this, consider optimizing the fine-tuning objective~\eqref{RL fine-tuning objective} for a target condition $\mb c$:
\begin{align}
        \argmax_{\mb\theta}\;\mathbb{E}_{\mb x\sim p_{\mb\theta}(\mb x|\mb c)}\bigl[r^*(\mb x|\mb c)\bigl]&-\beta D_{KL}(p_{\mb\theta}(\mb x|\mb c)||p_\text{ref}(\mb x|\mb c))\label{fine-tuning objective with reward and kl}\\
        &=\argmax_{\mb\theta}\mathbb{E}_{\mb x\sim p_{\mb\theta}(\mb x|\mb c)}\bigl[r^*(\mb x|\mb c)-\beta\log\frac{p_{\mb\theta}(\mb x|\mb c)}{p_\text{ref}(\mb x|\mb c)}\bigl]\\
        &=\argmin_{\mb\theta}\mathbb{E}_{\mb x\sim p_{\mb\theta}(\mb x|\mb c)}\bigl[\log\frac{p_{\mb\theta}(\mb x|\mb c)}{p_\text{ref}(\mb x|\mb c)}-\frac{1}{\beta}r^*(\mb x|\mb c)\bigl]\\
        &=\argmin_{\mb\theta}\mathbb{E}_{\mb x\sim p_{\mb\theta}(\mb x|\mb c)}\bigl[\log\frac{p_{\mb\theta}(\mb x|\mb c)}{\frac{1}{Z(\mb c)}p_\text{ref}(\mb x|\mb c)\exp(\frac{1}{\beta}r^*(\mb x|\mb c))}-\log Z(\mb c)\bigl]\label{intermediate step, KL for DPO},
\end{align}
where $Z(\mb c)$ is the partition function that normalizes $p_\text{ref}(\mb x|\mb c)\exp(\frac{1}{\beta}r^*(\mb x|\mb c))$:
\begin{align}
    Z(\mb c)=\int_{\mb x}p_\text{ref}(\mb x|\mb c)\exp(\frac{1}{\beta}r^*(\mb x|\mb c))\der\mb x,
    \label{DPO partition function}
\end{align}
such that  $p^*(\mb x|\mb c):=\frac{1}{Z(\mb c)}p_\text{ref}(\mb x|\mb c)\exp(\frac{1}{\beta}r^*(\mb x|\mb c))$ is a valid probability distribution. Since $Z(\mb c)$ is independent of $\mb\theta$, the fine-tuning objective~\eqref{intermediate step, KL for DPO} is equivalent to:
\begin{align}
    \min_{\mb\theta} D_\text{KL}(p_{\mb\theta}(\mb x|\mb c)||p^*(\mb x|\mb c)),
\end{align}
which achieves its minimum value $0$ if and only if:
\begin{align}
p_{\mb\theta}(\mb x|\mb c)=p^*(\mb x|\mb c)=\frac{1}{Z(\mb c)}p_\text{ref}(\mb x|\mb c)\exp(\frac{1}{\beta}r^*(\mb x|\mb c)).
\label{optimal fine-tuned distribution under DPO}
\end{align}

With some algebra, the optimal reward function can be expressed with $p_{\mb\theta^*}(\mb x|\mb c)$:
\begin{align}
    r^*(\mb x|\mb c)=\beta\log\frac{p_{\mb\theta^*}(\mb x|\mb c)}{p_\text{ref}(\mb x|\mb c)}+\beta\log Z(\mb c).\label{relation between optimal reward and optimal distribution}
\end{align}
The relationship~\eqref{relation between optimal reward and optimal distribution} between the optimal reward and the optimal fine-tuned distribution suggests a convenient parameterization of the reward model:
\begin{align}
    r_{\mb\theta}(\mb x|\mb c)=\beta\log\frac{p_{\mb\theta}(\mb x|\mb c)}{p_\text{ref}(\mb x|\mb c)}.\label{model parameterization}
\end{align}
Substitute~\eqref{model parameterization} into~\eqref{Reward modeling objective} results in the DPO objective:
\begin{align}
    \argmin_{\mb\theta} \; -\mathbb{E}_{(\mb c,\mb x_w,\mb x_l)\sim S}\bigl[\log\mathrm{Sigmoid}(\beta \log\frac{p_{\mb\theta}(\mb x_w|\mb c)}{p_\text{ref}(\mb x_w|\mb c)}-\beta \log\frac{p_{\mb\theta}(\mb x_l|\mb c)}{p_\text{ref}(\mb x_l|\mb c)})\bigl].
    \label{DPO objective}
\end{align}
In this way, one can directly fine-tune the base model without explicitly modeling the reward.

\subsection{Improving Conditional Modeling with CC-DPO}
\label{sec: proof of CC-DPO optimal solution}
To adapt DPO for improving class specificity, we may treat samples from the target class $\mb c$ as preferred data ($\mb x_w$) and samples from other randomly selected classes as non-preferred data ($\mb x_l$). This leads to the following objective:
\begin{align}
        \min_{\mb\theta} \; -\mathbb{E}_{\mb c,\tilde{\mb c}, \mb x_w\sim p(\mb x|\mb c),\mb x_l\sim p(\mb x|\mb \tilde{\mb c})}\bigl[\log\mathrm{Sigmoid}(\beta \log\frac{p_{\mb\theta}(\mb x_w|\mb c)}{p_\text{ref}(\mb x_w|\mb c)}-\beta \log\frac{p_{\mb\theta}(\mb x_l|\mb c)}{p_\text{ref}(\mb x_l|\mb c)})\bigl]\\
        =\min_{\mb\theta} \; -\mathbb{E}_{\mb c, \mb x_w\sim p(\mb x|\mb c),\mb x_l\sim p(\mb x)}\bigl[\log\mathrm{Sigmoid}(\beta \log\frac{p_{\mb\theta}(\mb x_w|\mb c)}{p_\text{ref}(\mb x_w|\mb c)}-\beta \log\frac{p_{\mb\theta}(\mb x_l|\mb c)}{p_\text{ref}(\mb x_l|\mb c)})\bigl],
    \label{DPO objective-conditional appendix}
\end{align}
which admits a closed-form solution as stated in the following theorem.

\noindent\textbf{Theorem~\ref{CC-DPO theorem 2}.}\textit{
Under certain regularity conditions, the optimal solution to~\eqref{DPO objective-conditional appendix} is:
\begin{align}
    p_{\mb\theta^*}(\mb x|\mb c)=\frac{1}{\tilde{Z}(\mb c)}p_\text{ref}(\mb x|\mb c)\left(\frac{p(\mb x|\mb c)}{p(\mb x)}\right)^{\frac{1}{\beta}},
    \label{optimal CC-DPO solution appendix}
\end{align}
where $\tilde{Z}(\mb c)=\int_{\mb x}p_\text{ref}(\mb x|\mb c)\left(\frac{p(\mb x|\mb c)}{p(\mb x)}\right)^{\frac{1}{\beta}}d\mb x$ is the normalizing constant.} 
\begin{proof}
    From~\eqref{Reward modeling objective} to~\eqref{optimal policy}, it is clear that the optimal solution to CC-DPO~\eqref{DPO objective-conditional appendix} is fully determined by the base model and the optimal solution to the following reward modeling objective:
 \begin{align}
    r^*=\argmin_{r}\; -\mathbb{E}_{\mb c,\mb x_w\sim p(\mb x|\mb c),\mb x_l\sim p(\mb x)}\bigl[\log\mathrm{Sigmoid}(r(\mb x_{w}|\mb c)-r(\mb x_{l}|\mb c))\bigl].
    \label{Reward modeling objective-conditional-1}
\end{align}
Note that $r^*$ is the collection of optimal rewards $r^*(\cdot|\mb c_k)$ for each $\mb c_k\in\{\mb c_i\}_{i=1}^M$. Without loss of generality, we drop the subscript and solve for the optimal reward for a single target $\mb c$:
\begin{align}
    r^*(\cdot|\mb c)&=\argmin_{r(\cdot|\mb c)}\; -\mathbb{E}_{\mb x_w\sim p(\mb x|\mb c),\mb x_l\sim p(\mb x)}\bigl[\log\mathrm{Sigmoid}(r(\mb x_{w}|\mb c)-r(\mb x_{l}|\mb c))\bigl]
    \label{Reward modeling objective-conditional}\\
    &=\argmax_{r(\cdot|\mb c)}\int_{\mb x_w}\int_{\mb x_l}\log\mathrm{Sigmoid}(r(\mb x_w|\mb c)-r(\mb x_l|\mb c))p(\mb x_w|\mb c)p(\mb x_l)\der\mb x_w \der\mb x_l.
    \label{Reward modeling objective-condtional integral}
\end{align}
Note that for any arbitrary pair of points $(\mb x_1,\mb x_2)$, they contribute to the integral~\eqref{Reward modeling objective-condtional integral} for the following amount:
\begin{align}
    \log\mathrm{Sigmoid}\left(r(\mb x_1|\mb c)-r(\mb x_2|\mb c)\right)p(\mb x_1|\mb c)p(\mb x_2)+\log\mathrm{Sigmoid}\left(r(\mb x_2|\mb c)-r(\mb x_1|\mb c)\right)p(\mb x_2|\mb c)p(\mb x_1),
    \label{reward modeling objective-conditional integrand}
\end{align}
hence $r^*(\cdot|\mb c)$ is an optimal solution to~\eqref{Reward modeling objective-condtional integral} if it maximizes~\eqref{reward modeling objective-conditional integrand} for $\forall (\mb x_1,\mb x_2)$. To find such $r^*(\cdot|\mb c)$, let's define $w=r(\mb x_1|\mb c)-r(\mb x_2|\mb c)$ and solve the following optimization problem:
\begin{align}
    \max_{\mb w}\;\mathcal{L}(w):=\log\mathrm{Sigmoid}(w)p(\mb x_1|\mb c)p(\mb x_2)+\log\mathrm{Sigmoid}(-w)p(\mb x_2|\mb c)p(\mb x_1).
\end{align}
Note that:
\begin{align}
    \nabla_{w}\mathcal{L}(w)=\mathrm{Sigmoid}(-w)p(\mb x_1|\mb c)p(\mb x_2)-\mathrm{Sigmoid}(w)p(\mb x_2|\mb c)p(\mb x_1),
\end{align}
which implies the stationary point $w^*$ must satisfy:
\begin{align}
    \mathrm{Sigmoid}(w^*)p(\mb x_2|\mb c)p(\mb x_1)=\mathrm{Sigmoid}(-w^*)p(\mb x_1|\mb c)p(\mb x_2)\\
    \Rightarrow \mathrm{Sigmoid}(w^*)\bigl(p(\mb x_2|\mb c)p(\mb x_1)+p(\mb x_1|\mb c)p(\mb x_2)\bigl)=p(\mb x_1|\mb c)p(\mb x_2)\label{intermdiate step 1}\\
    \Rightarrow \mathrm{Sigmoid}(w^*)=\frac{p(\mb x_1|\mb c)p(\mb x_2)}{p(\mb x_2|\mb c)p(\mb x_1)+p(\mb x_1|\mb c)p(\mb x_2)}\label{intermdiate step 2}\\
    \Rightarrow \mathrm{Sigmoid}(w^*)=\frac{1}{1+\frac{p(\mb x_2|\mb c)p(\mb x_1)}{p(\mb x_1|\mb c)p(\mb x_2)}}\label{intermediate step 3}\\
    \Rightarrow \mathrm{Sigmoid}(w^*)=\frac{1}{1+\exp{\bigl(-\log\frac{p(\mb x_1|\mb c)p(\mb x_2)}{p(\mb x_1)p(\mb x_2|\mb c)}\bigl)}}\\
    \Rightarrow w^*=\log\frac{p(\mb x_1|\mb c)p(\mb x_2)}{p(\mb x_1)p(\mb x_2|\mb c)},
\end{align}
which implies:
\begin{align}
    r^*(\mb x|\mb c)=\log\frac{p(\mb x|\mb c)}{p(\mb x)}+K,
    \label{optimal reward for DPO}
\end{align}
where $K$ is any finite constant.
Moreover, since:
\begin{align}
    \nabla^2_{w}\mathcal{L}(w)&=-\mathrm{Sigmoid}(w)\mathrm{Sigmoid}(-w)p(\mb x_1|\mb c)p(\mb x_2)-\mathrm{Sigmoid}(-w)\mathrm{Sigmoid}(w)p(\mb x_2|\mb c)p(\mb x_1)\\
    &<0,
\end{align}
we know $\mathcal{L}(w)$ is concave and $w^*$ is the global maximizer and consequently $r^*(\mb x|\mb c)$ is the optimal reward function. Since this optimal reward function is consistent for any arbitrary pair of points $(\mb x_1,\mb x_2)$, it is the global reward function that maximizes the full objective~\eqref{Reward modeling objective-conditional}.

Substitute~\eqref{optimal reward for DPO} into~\eqref{optimal policy}, we obtain the optimal solution to CC-DPO:
\begin{align}
    p_{\mb\theta^*}(\mb x|\mb c)=\frac{1}{\tilde{Z}(\mb c)}p_\text{ref}(\mb x|\mb c)\left(\frac{p(\mb x|\mb c)}{p(\mb x)}\right)^{\frac{1}{\beta}},
    \label{CC-DPO fine-tuned result in appendix}
\end{align}
where $\tilde{Z}(\mb c)=\int_{\mb x}p_\text{ref}(\mb x|\mb c)\left(\frac{p(\mb x|\mb c)}{p(\mb x)}\right)^{\frac{1}{\beta}}d\mb x$ is the normalizing constant. This completes the proof.
\end{proof}

\noindent\textbf{Remark.} Note that from~\eqref{intermdiate step 1} to~\eqref{intermdiate step 2}, we require $p(\mb x_2|\mb c)p(\mb x_1)+p(\mb x_1|\mb c)p(\mb x_2)>0$ for $\forall \mb x_1,\mb x_2$, which holds true if both $p(\mb x|\mb c)$ and $p(\mb x)$ have full support on $\mathbb{R}^d$. Next, we discuss the corner cases where this assumption doesn't hold.
\begin{enumerate}[leftmargin=*]
    \item Suppose $p(\mb x_1|\mb c)=0$ but $p(\mb x_1)>0$, then we have:
    \begin{align}
        \mathcal{L}(w)=\log \mathrm{Sigmoid}\bigl(r(\mb x_2|\mb c)-r(\mb x_1|\mb c)\bigl)p(\mb x_2|\mb c)p(\mb x_1).
    \end{align}
In this case, the objective is maximized if $r^*(\mb x_1|\mb c)=-\infty$.
    \item Suppose $p(\mb x_1|\mb c)>0$ but $p(\mb x_1)=0$, then we have:
    \begin{align}
        \mathcal{L}(w)=\log\mathrm{Sigmoid}\bigl(r(\mb x_1|\mb c)-r(\mb x_2|\mb c)\bigl)p(\mb x_1|\mb c)p(\mb x_2).
    \end{align}
In this case, the objective is maximized if $r^*(\mb x_1|\mb c)=+\infty$.
    \item Suppose $p(\mb x_1|\mb c)=0$ and $p(\mb x_1)=0$, then $\mb x_1$ will never be sampled and it does not contribute to the overall objective. In such case, $r^*(\mb x_1|\mb c)$ is undefined and will implicitly depend on the practical parameterization of the reward model. 
\end{enumerate}
The first two cases are already covered by~\eqref{optimal CC-DPO solution appendix}. In particular, case 2 can lead to non-normalizable issue, as in this case $p_{\mb\theta^*}(\cdot|\mb c)$ will have infinite density at $\mb x_1$. Although since $p(\mb x_1)=\mathbb{E}_{\mb c}\left[p(\mb x_1|\mb c)\right]$, it will not be zero given $p(\mb x_1|\mb c)>0$. It is highly likely in practice, there exists $\mb x_1$ such that $p(\mb x_1|\mb c)>0$ but $p(\mb x_1)\approx 0$. In this regime, the ratio $\left(\frac{p(\mb x|\mb c)}{p(\mb x)}\right)^{\frac{1}{\beta}}$ blows up, forcing $p_{\mb\theta^*}(\cdot|\mb c)$ to essentially place all its mass on such $\mb x$, which again leads to non-normalizable issue.

\paragraph{Error Model for CC-DPO.} As with the MCLR case, CC-DPO recovers the ground truth conditional distribution under an appropriate error model, as stated in the following corollary.
\begin{cor}
If the base model satisfies:
\begin{align}
p_\text{ref}(\mb x|\mb c)\propto p(\mb x|\mb c)^{1-\frac{1}{\beta}}p(\mb x)^{\frac{1}{\beta}} 
\label{gamma powered error model assumption}
\end{align}
then fine-tuning $p_\text{ref}(\mb x|\mb c)$ with the CC-DPO objective~\eqref{DPO objective-conditional} recovers the ground-truth conditional distribution $p(\mb x|\mb c)$. 
\label{corollary CC-DPO fine-tuning}  
\end{cor}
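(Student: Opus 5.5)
The corollary follows directly from the closed form in Theorem~\ref{CC-DPO theorem 2}. We plug the assumed error model \eqref{gamma powered error model assumption} into \eqref{optimal CC-DPO solution appendix} and check that the powers of the conditional and unconditional densities cancel. Concretely, write $p_\text{ref}(\mb x|\mb c)=\frac{1}{C(\mb c)}p(\mb x|\mb c)^{1-\frac{1}{\beta}}p(\mb x)^{\frac{1}{\beta}}$, where $C(\mb c)=\int p(\mb x|\mb c)^{1-1/\beta}p(\mb x)^{1/\beta}\der\mb x$ is its normalizer. We assume this is finite and positive, which is part of what ``$\propto$'' presupposes. Multiplying by the reweighting factor from Theorem~\ref{CC-DPO theorem 2} gives, pointwise,
\begin{align}
p_\text{ref}(\mb x|\mb c)\left(\frac{p(\mb x|\mb c)}{p(\mb x)}\right)^{\frac{1}{\beta}}
=\frac{1}{C(\mb c)}\,p(\mb x|\mb c)^{1-\frac{1}{\beta}+\frac{1}{\beta}}\,p(\mb x)^{\frac{1}{\beta}-\frac{1}{\beta}}
=\frac{p(\mb x|\mb c)}{C(\mb c)}.
\end{align}

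The second step is to compute the normalizer $\tilde Z(\mb c)$. Integrating the display above gives $\tilde Z(\mb c)=1/C(\mb c)$, because $p(\cdot|\mb c)$ integrates to one. Therefore $p_{\mb\theta^*}(\mb x|\mb c)=p(\mb x|\mb c)$. Since $\tilde Z(\mb c)$ is finite here, the non-normalizability concern raised in the remark after Theorem~\ref{CC-DPO theorem 2} does not arise in this case.

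The only real obstacle is the set of points where the ratio $p(\mb x|\mb c)/p(\mb x)$ is degenerate, i.e., the corner cases listed in that remark. Where $p(\mb x|\mb c)=0$ but $p(\mb x)>0$, the optimal reward is $-\infty$, so the factor is $0$, which agrees with $p(\mb x|\mb c)=0$. The case $p(\mb x|\mb c)>0$ with $p(\mb x)=0$ cannot happen, because $p(\mb x)=\mathbb{E}_{\mb c}[p(\mb x|\mb c)]\ge p(\mb c)p(\mb x|\mb c)>0$. Where both densities vanish, the set has measure zero under both distributions, so it does not affect the density almost everywhere. With these cases checked, the cancellation holds almost everywhere and the corollary follows. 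I would also note that the role played by this error model mirrors that of the mixture model \eqref{mixture error model assumption} in Corollary~\ref{corollary MCLR finetuning}.
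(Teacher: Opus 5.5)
Your proposal is correct and is the same argument as the paper's proof, which substitutes the error model into the closed-form CC-DPO optimum of Theorem~\ref{CC-DPO theorem 2} and observes that the powers of $p(\mathbf x|\mathbf c)$ and $p(\mathbf x)$ cancel. Your explicit computation $\tilde Z = 1/C$ and your handling of the degenerate points (using $p(\mathbf x)\ge p(\mathbf c)\,p(\mathbf x|\mathbf c)$ for discrete classes) are extra care that the paper leaves implicit.
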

\begin{proof}
    Substitute~\eqref{gamma powered error model assumption} into~\eqref{CC-DPO fine-tuned result in appendix}, we get $p_{\mb\theta^*}(\mb x|\mb c)=p(\mb x|\mb c)$, which completes the proof.
\end{proof}
\section{Theoretical Analysis of CCA}
\label{sec: theoretical analysis on CCA}
In~\cref{sec: proof of CC-DPO optimal solution}, we've shown that the underlying optimal reward function induced by the CC-DPO objective~\eqref{Reward modeling objective-conditional-1} has the form of the log likelihood-ratio~\eqref{optimal reward for DPO}. Interestingly, the same reward function can be obtained by minimizing the following optimization problem:
\begin{align}
    r^*(\cdot|\mb c)&=\argmax_{r(\cdot|\mb c)}\;\mathbb{E}_{p(\mb x|\mb c)}\log\mathrm{Sigmoid}(r(\mb x|\mb c)) + \mathbb{E}_{p(\mb x)}\log\mathrm{Sigmoid}(-r(\mb x|\mb c))\\
    &=\argmax_{r(\cdot|\mb c)}\;\int_{\mb x}\bigl(\log\mathrm{Sigmoid}(r(\mb x|\mb c))p(\mb x|\mb c)+\log\mathrm{Sigmoid}(-r(\mb x|\mb c))p(\mb x)\bigl)\der\mb x.
    \label{NCE}
\end{align}
Since the objective~\eqref{NCE} decomposes pointwise over $\mb x$, the optimal reward
function can be obtained by maximizing the integrand for each $\mb x$ independently:
\begin{align}
    \log\mathrm{Sigmoid}(r(\mb x|\mb c))p(\mb x|\mb c)+\log\mathrm{Sigmoid}(-r(\mb x|\mb c))p(\mb x).
\end{align}
To find $r^*(\cdot|\mb c)$, let's define $w=r(\mb x|\mb c)$ and solve the following optimization problem:
\begin{align}
    \max_{ w}\;\mathcal{L}(w):= \log\mathrm{Sigmoid}( w)p(\mb x|\mb c)+\log\mathrm{Sigmoid}(-w)p(\mb x).
\end{align}    
Note that:
\begin{align}
\nabla_{w}\mathcal{L}(w):=\mathrm{Sigmoid}(-w)p(\mb x|\mb c)-\mathrm{Sigmoid}(w)p(\mb x),
\end{align}
which implies the stationary point $w^*$ is:
\begin{align}
    w^*=r^*(\mb x|\mb c)=\log\frac{p(\mb x|\mb c)}{p(\mb x)}.
\end{align}
Furthermore, since:
\begin{align}
    \nabla_{w}^2\mathcal{L}(w)&=-\mathrm{Sigmoid}(w)\mathrm{Sigmoid}(-w)p(\mb x|\mb c)-\mathrm{Sigmoid}(w)\mathrm{Sigmoid}(-w)p(\mb x)\\
    &<0,
\end{align}
we know $\mathcal{L}(w)$ is concave and $r^*(\mb x|\mb c)=\log\frac{p(\mb x|\mb c)}{p(\mb x)}$ is the unique maximizer. Hence, we've proved that the optimization problems~\eqref{Reward modeling objective-conditional-1} and~\eqref{NCE} lead to the same optimal reward up to an additive constant.

Therefore, by parameterizing the reward function as:
\begin{align}
    r_{\mb\theta}(\mb x|\mb c) = \beta\log\frac{p_{\mb \theta}(\mb x|\mb c)}{p_\text{ref}(\mb x|\mb c)}
    \label{reward parameterization CCA}
\end{align},
and substitute this in~\eqref{NCE}, we get the following optimization problem:
\begin{align}
    p_{\mb\theta^*}(\cdot|\mb c)&=\argmax_{p_{\mb\theta}(\cdot|\mb c)}\;\mathbb{E}_{p(\mb x|\mb c)}\log\mathrm{Sigmoid}\bigl(\beta\log\frac{p_{\mb \theta}(\mb x|\mb c)}{p_\text{ref}(\mb x|\mb c)}\bigl) + \mathbb{E}_{p(\mb x)}\log\mathrm{Sigmoid}\bigl(-\beta\log\frac{p_{\mb \theta}(\mb x|\mb c)}{p_\text{ref}(\mb x|\mb c)}\bigl),
    \label{CCA}
\end{align}
which has the global maximizer:
\begin{align}
    \beta\log\frac{p_{\mb\theta^*}(\mb x|\mb c)}{p_\text{ref}(\mb x|\mb c)}&=r^*(\mb x|\mb c)=\log\frac{p(\mb x|\mb c)}{p(\mb x)}\\
    \Rightarrow p_{\mb\theta^*}(\mb x|\mb c)&=p_\text{ref}(\mb x|\mb c)\bigl(\frac{p(\mb x|\mb c)}{p(\mb x)}\bigr)^{\frac{1}{\beta}}.
    \label{optimal solution for CCA, non-normalized}
\end{align}
One issue with the optimization objective~\eqref{CCA} is that its optimal solution~\eqref{optimal solution for CCA, non-normalized} is not a valid probability distribution because it is not normalized. To alleviate this issue, we consider a slightly modified version of~\eqref{NCE} by introducing an additional constant $\lambda$:
\begin{align}
    r^*(\cdot|\mb c)&=\argmax_{r(\cdot|\mb c)}\;\mathbb{E}_{p(\mb x|\mb c)}\log\mathrm{Sigmoid}(r(\mb x|\mb c)) + \lambda\mathbb{E}_{p(\mb x)}\log\mathrm{Sigmoid}(-r(\mb x|\mb c)).
    \label{NCE with Lambda}
\end{align}

With similar proof  technique, it can be shown that:
\begin{align}
    r^*(\mb x|\mb c)=\log\frac{p(\mb x|\mb c)}{p(\mb x)}+\log\frac{1}{\lambda}.
    \label{CCA special parameterization with lambda}
\end{align}
Substitute~\eqref{reward parameterization CCA} in~\eqref{NCE with Lambda}, we get the following optimization problem:
\begin{align}
    p_{\mb\theta^*}(\cdot|\mb c)&=\argmax_{p_{\mb\theta}(\cdot|\mb c)}\;\mathbb{E}_{p(\mb x|\mb c)}\log\mathrm{Sigmoid}\bigl(\beta\log\frac{p_{\mb \theta}(\mb x|\mb c)}{p_\text{ref}(\mb x|\mb c)}\bigl) + \lambda\mathbb{E}_{p(\mb x)}\log\mathrm{Sigmoid}\bigl(-\beta\log\frac{p_{\mb \theta}(\mb x|\mb c)}{p_\text{ref}(\mb x|\mb c)}\bigl),
    \label{CCA with Lambda}
\end{align}
which has global maximizer:
\begin{align}
    \beta\log\frac{p_{\mb\theta^*}(\mb x|\mb c)}{p_\text{ref}(\mb x|\mb c)}=\log\frac{p(\mb x|\mb c)}{p(\mb x)}+\log\frac{1}{\lambda}\\
    \Rightarrow p_{\mb\theta^*}(\mb x|\mb c)=p_\text{ref}(\mb x|\mb c)\bigl(\frac{p(\mb x|\mb c)}{p(\mb x)}\bigr)^{\frac{1}{\beta}}(\frac{1}{\lambda})^{\frac{1}{\beta}}.
\end{align}
In this case, $p_{\mb\theta^*}(\mb x|\mb c)$ is a valid probability distribution as long as: 
\begin{align}
    \lambda^\frac{1}{\beta}=\int_{\mb x}p_\text{ref}(\mb x|\mb c)(\frac{p(\mb x|\mb c)}{p(\mb x)})^\frac{1}{\beta}\der\mb x.
\end{align}

Note that optimal solution to~\eqref{CCA with Lambda} exactly coincides with that of CC-DPO~\eqref{DPO objective-conditional}. The formulation in~\eqref{CCA with Lambda}, known as the Conditional Contrastive Alignment (CCA), which is essentially a combination of the Noise Contrastive Estimation (NCE)
~\cite{gutmann2010noise} and a special model parameterization~\eqref{CCA special parameterization with lambda}. This objective is first proposed by~\cite{chentoward} for improving the generation quality of visual autoregressive models without relying on CFG. It should be noted that CCA is theoretically correct only when $\lambda$ is fixed to a particular constant, and it must be tuned as a hyperparameter in practice. In contrast, we demonstrate that CC-DPO fine-tuning recovers the same optimal solution without introducing this additional hyperparameter $\lambda$.

\section{Theoretical Analysis of the Equivalence between CFG and Weighted MCLR}
\label{appendix: theoretical analysis on CFG MCLR Equivalence}
\subsection{Proof of Theorem~\ref{theorem: CFG equivalent to weighted mclr}}
\label{proof of theorem 3}
In this section, we provide the proof for~\Cref{theorem: CFG equivalent to weighted mclr}. To begin with, we introduce the following lemma:
\begin{lemma}
\label{lemma1}
    Let $p(\mb x)$ be the clean data distribution, $p_t(\mb x)$ be the marginal distribution of $\mb x(t)$, and $p_{0t}(\mb x_t|\mb x)$ be the transition density from $\mb x(0)$ to $\mb x(t)$ as defined in~\cref{subsection: basiscs of diffusion models}, then
    \begin{align}
        \nabla_{\mb x_t}\log p_t(\mb x_t)=\mathbb{E}_{p(\mb x|\mb x_t)}\bigl[\nabla_{\mb x_t}\log p_{0t}(\mb x_t|\mb x)\bigr].
    \end{align}
\end{lemma}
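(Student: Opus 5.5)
The plan is to prove Lemma~\ref{lemma1} directly from the definition of the noisy marginal as a mixture of transition kernels. We use
\begin{align}
p_t(\mb x_t)=\int p(\mb x)\,p_{0t}(\mb x_t|\mb x)\,\der\mb x,
\end{align}
which holds by the construction of the forward SDE~\eqref{forward process}: $\mb x(0)\sim p$, and $\mb x(t)$ given $\mb x(0)=\mb x$ has density $p_{0t}(\cdot|\mb x)$. We then compute $\nabla_{\mb x_t}\log p_t(\mb x_t)=\nabla_{\mb x_t}p_t(\mb x_t)/p_t(\mb x_t)$ at every $\mb x_t$ with $p_t(\mb x_t)>0$.

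The key steps are as follows.

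\textbf{Step 1.} Differentiate under the integral sign:
\begin{align}
\nabla_{\mb x_t}p_t(\mb x_t)=\int p(\mb x)\,\nabla_{\mb x_t}p_{0t}(\mb x_t|\mb x)\,\der\mb x.
\end{align}

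\textbf{Step 2.} Use the log-derivative identity
\begin{align}
\nabla_{\mb x_t}p_{0t}(\mb x_t|\mb x)=p_{0t}(\mb x_t|\mb x)\,\nabla_{\mb x_t}\log p_{0t}(\mb x_t|\mb x),
\end{align}
which is valid wherever $p_{0t}(\mb x_t|\mb x)>0$. For the Gaussian kernels of the usual forward SDEs this holds everywhere. Where the kernel vanishes, the integrand contributes nothing.

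\textbf{Step 3.} Divide by $p_t(\mb x_t)$ and recognize the posterior given by Bayes' rule,
\begin{align}
p(\mb x|\mb x_t)=\frac{p(\mb x)\,p_{0t}(\mb x_t|\mb x)}{p_t(\mb x_t)}.
\end{align}
This gives
\begin{align}
\nabla_{\mb x_t}\log p_t(\mb x_t)=\int p(\mb x|\mb x_t)\,\nabla_{\mb x_t}\log p_{0t}(\mb x_t|\mb x)\,\der\mb x,
\end{align}
which is exactly the claimed identity. The same argument applies verbatim to the conditional marginals $p_t(\mb x_t|\mb c)$, replacing $p(\mb x)$ by $p(\mb x|\mb c)$. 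This conditional version is presumably what Theorem~\ref{theorem: CFG equivalent to weighted mclr} uses afterwards, so we will state it as an immediate consequence.

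The only real obstacle is justifying the interchange of gradient and integral in Step 1. We plan to handle it with dominated convergence. For Gaussian transition kernels $p_{0t}(\mb x_t|\mb x)=\mathcal N(\mb x_t;\alpha(t)\mb x,\sigma^2(t)\mb I)$, the gradient is
\begin{align}
\nabla_{\mb x_t}p_{0t}(\mb x_t|\mb x)=-\frac{\mb x_t-\alpha(t)\mb x}{\sigma^2(t)}\,p_{0t}(\mb x_t|\mb x).
\end{align}
On a neighborhood of any fixed $\mb x_t$, this is bounded uniformly in $\mb x$, because $\|\mb u\|e^{-\|\mb u\|^2/(2\sigma^2)}$ is bounded. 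Hence $p(\mb x)$ times a constant is an integrable dominating function, and this requires no moment assumptions on $p$.

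For general drifts we would simply invoke the standing regularity conditions of~\cref{subsection: basiscs of diffusion models}, namely smoothness of $p_{0t}$ with an integrable bound on its gradient, under which the differentiation is legitimate. Finally, we note that $p_t(\mb x_t)>0$ everywhere for Gaussian kernels, so the posterior $p(\mb x|\mb x_t)$ is well defined for all $\mb x_t$.
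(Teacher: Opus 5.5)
Your proposal is correct and follows the same route as the paper's proof. Both write $p_t(\mb x_t)=\int p(\mb x)p_{0t}(\mb x_t|\mb x)\,\der\mb x$, move the gradient inside the integral, use $\nabla_{\mb x_t}p_{0t}=p_{0t}\nabla_{\mb x_t}\log p_{0t}$, and recognize the Bayes posterior $p(\mb x|\mb x_t)$. The paper performs the gradient--integral interchange without comment, so your dominated-convergence argument for Gaussian kernels (and your note that the same computation gives the conditional version used in Theorem~\ref{theorem: CFG equivalent to weighted mclr}) adds rigor rather than changing the approach.
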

\begin{proof}
\begin{align}
    \nabla_{\mb x_t}\log p_t(\mb x_t)&=\frac{\nabla_{\mb x_t}p_t(\mb x_t)}{p_t(\mb x_t)}\\
    &=\frac{\nabla_{\mb x_t}\int_{\mb x}p(\mb x)p_{0t}(\mb x_t|\mb x)\der\mb x}{p_t(\mb x_t)}\\
    &=\int_{\mb x}{\frac{p(\mb x)p_{0t}(\mb x_t|\mb x)\nabla_{\mb x_t}p_{0t}(\mb x_t|\mb x)}{p_t(\mb x_t)p_{0t}(\mb x_t|\mb x)}}\der\mb x\\
    &=\int_{\mb x}p(\mb x|\mb x_t)\nabla_{\mb x_t}\log p_{0t}(\mb x_t|\mb x)\der\mb x\\
    &=\mathbb{E}_{p(\mb x|\mb x_t)}\bigl[\nabla_{\mb x_t}\log p_{0t}(\mb x_t|\mb x)\bigr].
\end{align}
This completes the proof. 
\end{proof}

We now restate the main theorem and proceed with the proof.

\noindent\textbf{Theorem~\ref{theorem: CFG equivalent to weighted mclr}}.
\textit{For any time sampling distribution $p(t)$ and weighting function $w(t)$, the CFG-guided score
\[
\mb s_\text{cfg}(\mb x_t,t,\mb c)
:=
\nabla_{\mb x_t}\log p_t(\mb x_t|\mb c)
+
\eta\,\big(
\nabla_{\mb x_t}\log p_t(\mb x_t|\mb c)
-
\nabla_{\mb x_t}\log p_t(\mb x_t)
\big)
\]
is the unique minimizer of a sample-adaptive weighted ELBO-approximated MCLR objective~\eqref{Using ELBO for likelihood in MCLR main text}:
\begin{equation}
\begin{aligned}
\mb s_\text{cfg}(\cdot)
=
\argmin_{\mb s_{\mb\theta}(\cdot)}\;
&\;\mathbb{E}_{\mb c,t\sim p(t),\,\mb x\sim p(\mb x|\mb c),\,\mb x_t\sim p_{0t}(\mb x_t|\mb x)}
\Bigl[
w(t)\|
\nabla_{\mb x_t}\log p_{0t}(\mb x_t|\mb x)
-
\mb s_{\mb\theta}(\mb x_t,t,\mb c)
\|_2^2
\Bigr]
\\
&+ \eta\,
\mathbb{E}_{\substack{\mb c,\tilde{\mb c},t\sim p(t)\\
\mb x\sim p(\mb x|\mb c),\,\mb x_t\sim p_{0t}(\mb x_t|\mb x)}}
\Bigl[
w(t)
\Bigl(
\|
\nabla_{\mb x_t}\log p_{0t}(\mb x_t|\mb x)
-
\mb s_{\mb\theta}(\mb x_t,t,\mb c)
\|_2^2
\\
&\qquad\qquad
-
\textcolor{red}{\frac{p_t(\mb x_t|\tilde{\mb c})}{p_t(\mb x_t)}}
\|
\nabla_{\mb x_t}\log p_{0t}(\mb x_t|\mb x)
-
\mb s_{\mb\theta}(\mb x_t,t,\tilde{\mb c})
\|_2^2
\Bigr)
\Bigr].
\end{aligned}
\label{cfg equivalent optimization problem appendix}
\end{equation}}
\begin{proof}
    First, note optimization problem~\eqref{cfg equivalent optimization problem appendix} is equivalent to:
\begin{align}
    \min_{\mb s_{\mb\theta}(\cdot)} \;& \;(1+\eta)\,\mathbb{E}_{\mb c, t\sim p(t),\,\mb x\sim p(\mb x|\mb c),\,\mb x_t\sim p_{0t}(\mb x_t|\mb x)}\bigl[w(t)||\nabla_{\mb x_t}\log p_{0t}(\mb x_t|\mb x)-\mb s_{\mb\theta}(\mb x_t,t,\mb c)||_2^2\bigr] \label{first term}
    \\
    &-\eta \, \mathbb{E}_{\substack{\mb c,\tilde{\mb c},t\sim p(t)\\
\mb x\sim p(\mb x|\mb c),\,\mb x_t\sim p_{0t}(\mb x_t|\mb x)}}
\Bigl[
w(t)
\frac{p_t(\mb x_t|\tilde{\mb c})}{p_t(\mb x_t)}
\|
\nabla_{\mb x_t}\log p_{0t}(\mb x_t|\mb x)
-
\mb s_{\mb\theta}(\mb x_t,t,\tilde{\mb c})
\|_2^2
\Bigr].\label{second term}
\end{align}
Note that~\eqref{second term} can be further simplified as:
\begin{align}
    -\eta \, \mathbb{E}_{\tilde{\mb c},t\sim p(t),\,\mb x\sim p(\mb x),\, x_t\sim p_{0t}(\mb x_t|\mb x)}
\Bigl[
w(t)
\frac{p_t(\mb x_t|\tilde{\mb c})}{p_t(\mb x_t)}
\|
\nabla_{\mb x_t}\log p_{0t}(\mb x_t|\mb x)
-
\mb s_{\mb\theta}(\mb x_t,t,\tilde{\mb c})
\|_2^2
\Bigr].
\end{align}
Hence the original optimization problem~\eqref{cfg equivalent optimization problem appendix} is equivalent to

\begin{equation}
\begin{aligned}
    \min_{\mb s_{\mb\theta}(\cdot)} \;& \;(1+\eta)\,\mathbb{E}_{\mb c, t\sim p(t),\,\mb x\sim p(\mb x|\mb c),\,\mb x_t\sim p_{0t}(\mb x_t|\mb x)}\bigl[w(t)||\nabla_{\mb x_t}\log p_{0t}(\mb x_t|\mb x)-\mb s_{\mb\theta}(\mb x_t,t,\mb c)||_2^2\bigr] 
    \\ 
    &-\eta \, \mathbb{E}_{\mb c,t\sim p(t),\,\mb x\sim p(\mb x),\, x_t\sim p_{0t}(\mb x_t|\mb x)}
\Bigl[
w(t)
\frac{p_t(\mb x_t|\mb c)}{p_t(\mb x_t)}
\|
\nabla_{\mb x_t}\log p_{0t}(\mb x_t|\mb x)
-
\mb s_{\mb\theta}(\mb x_t,t,\mb c)
\|_2^2
\Bigr],\label{simplified form 2}
\end{aligned}
\end{equation}
which coincides exactly with~\eqref{CFG equivalent MCLR form 1}.

In order to solve~\eqref{simplified form 2}, since the objective decomposes across $t$ and $\mb c$, the optimization can be performed independently for each pair $(t,\mb c)$. Hence, we can fix a specific pair of $t$ and $\mb c$, and optimize the corresponding score $\mb s_{\mb \theta}(\mb x,\mb t,\mb c)$ independently:
\begin{equation}
    \begin{aligned}
            \min_{\mb s_{\mb\theta}(\cdot,t,\mb c)} \;& \;(1+\eta)\,\mathbb{E}_{\mb x\sim p(\mb x|\mb c),\,\mb x_t\sim p_{0t}(\mb x_t|\mb x)}\bigl[||\nabla_{\mb x_t}\log p_{0t}(\mb x_t|\mb x)-\mb s_{\mb\theta}(\mb x_t,t,\mb c)||_2^2\bigr] 
    \\
    &-\eta \, \mathbb{E}_{\mb x\sim p(\mb x),\, x_t\sim p_{0t}(\mb x_t|\mb x)}
\Bigl[
\frac{p_t(\mb x_t|\mb c)}{p_t(\mb x_t)}
\|
\nabla_{\mb x_t}\log p_{0t}(\mb x_t|\mb x)
-
\mb s_{\mb\theta}(\mb x_t,t,\mb c)
\|_2^2
\Bigr]\label{simplified form 3 independent loss}
\end{aligned}
\end{equation},

\begin{equation}
    \begin{aligned}
        \Leftrightarrow 
    \min_{\mb s_{\mb\theta}(\cdot,t,\mb c)} \;& \;(1+\eta)\,\mathbb{E}_{\mb x_t\sim p_t(\mb x_t|\mb c),\,\mb x\sim p(\mb x|\mb x_t,\mb c)}\bigl[||\nabla_{\mb x_t}\log p_{0t}(\mb x_t|\mb x)-\mb s_{\mb\theta}(\mb x_t,t,\mb c)||_2^2\bigr] 
    \\ 
    &-\eta \, \mathbb{E}_{\mb x_t\sim p_t(\mb x_t),\, \mb x\sim p(\mb x|\mb x_t)}
\Bigl[
\frac{p_t(\mb x_t|\mb c)}{p_t(\mb x_t)}
\|
\nabla_{\mb x_t}\log p_{0t}(\mb x_t|\mb x)
-
\mb s_{\mb\theta}(\mb x_t,t,\mb c)
\|_2^2
\Bigr]\label{simplified form 4 independent loss}
    \end{aligned}
\end{equation}

\begin{equation}
    \begin{aligned}
        \Leftrightarrow
    \min_{\mb s_{\mb\theta}(\cdot,t,\mb c)} \;& \;(1+\eta)\,\mathbb{E}_{\mb x_t\sim p_t(\mb x_t|\mb c),\,\mb x\sim p(\mb x|\mb x_t,\mb c)}\bigl[||\nabla_{\mb x_t}\log p_{0t}(\mb x_t|\mb x)-\mb s_{\mb\theta}(\mb x_t,t,\mb c)||_2^2\bigr] 
    \\ 
    &-\eta \, \mathbb{E}_{\mb x_t\sim p_t(\mb x_t|\mb c),\, \mb x\sim p(\mb x|\mb x_t)}
\Bigl[
\|
\nabla_{\mb x_t}\log p_{0t}(\mb x_t|\mb x)
-
\mb s_{\mb\theta}(\mb x_t,t,\mb c)
\|_2^2
\Bigr].\label{simplified form 5 independent loss}
    \end{aligned}
\end{equation}
From~\eqref{simplified form 3 independent loss} to~\eqref{simplified form 4 independent loss} we use the Bayes rule: $p(\mb x|\mb c)p_{0t}(\mb x_t|\mb x)=p(\mb x,\mb x_t|\mb c)=p_t(\mb x_t|\mb c)p(\mb x|\mb x_t,\mb c)$. From~\eqref{simplified form 4 independent loss} to~\eqref{simplified form 5 independent loss} we use the identity:  $p_t(\mb x_t)p(\mb x|\mb x_t)\frac{p_t(\mb x_t|\mb c)}{p_t(\mb x_t)}=p_t(\mb x_t|\mb c)p(\mb x|\mb x_t)$.

Note that:
\begin{equation}
\begin{aligned}
    \mathbb{E}_{\mb x\sim p(\mb x|\mb x_t,\mb c)}&\bigl[||\nabla_{\mb x_t}\log p_{0t}(\mb x_t|\mb x)-\mb s_{\mb\theta}(\mb x_t,t,\mb c)||_2^2\bigr]\\
    &=||\mb s_{\mb\theta}(\mb x_t,t,\mb c)||_2^2-2\mb s_{\mb\theta}(\mb x_t,t,\mb c)^T\mathbb{E}_{\mb x\sim p(\mb x|\mb x_t,\mb c)}\bigl[\nabla_{\mb x_t}\log p_{0t}(\mb x_t|\mb x)\bigr]+C_1\\
    &=||\mb s_{\mb\theta}(\mb x_t,t,\mb c)||_2^2-2\mb s_{\mb\theta}(\mb x_t,t,\mb c)^T\nabla_{\mb x_t}\log p_t(\mb x_t|\mb c)+C_1,\label{substep for cond}
\end{aligned}
\end{equation}
where $C_1$ is a constant independent of $\mb \theta$, and the second equality follows from~\Cref{lemma1}. Similarly, we have
\begin{equation}
\begin{aligned}
    \mathbb{E}_{\mb x\sim p(\mb x|\mb x_t)}&\bigl[||\nabla_{\mb x_t}\log p_{0t}(\mb x_t|\mb x)-\mb s_{\mb\theta}(\mb x_t,t,\mb c)||_2^2\bigr]
    =||\mb s_{\mb\theta}(\mb x_t,t,\mb c)||_2^2-2\mb s_{\mb\theta}(\mb x_t,t,\mb c)^T\nabla_{\mb x_t}\log p_t(\mb x_t)+C_2,\label{substep for uncond}
\end{aligned}
\end{equation},
where $C_2$ is a constant independent of $\mb\theta$.

Substituting~\eqref{substep for cond} and~\eqref{substep for uncond} into~\eqref{simplified form 5 independent loss}, the optimization problem becomes equivalent to:
    \begin{align}
        \min_{\mb s_{\mb\theta}(\cdot,t,\mb c)}\;\;\mathbb{E}_{\mb x_t\sim p_t(\mb x_t|\mb c)}\bigl[||\mb s_{\mb\theta}(\mb x_t,t,\mb c)||_2^2-2\mb s_{\mb\theta}(\mb x_t,t,\mb c)^T\bigl((1+\eta)\nabla_{\mb x_t}\log p_t(\mb x_t|\mb c)-\eta\nabla_{\mb x_t}\log p_t(\mb x_t)\bigr)\bigr]\\
        \Leftrightarrow \min_{\mb s_{\mb\theta}(\cdot,t,\mb c)}\;\;\mathbb{E}_{\mb x_t\sim p_t(\mb x_t|\mb c)}\bigl[||\mb s_{\mb\theta}(\mb x_t,t,\mb c)-\mb s_\text{cfg}(\mb x_t,t,\mb c)||_2^2\bigr] + C_3,
    \end{align}
where $C_3$ is a constant independent of $\mb\theta$. Therefore, the optimal solution is:
\begin{align}
    \mb s_{\mb\theta^*}(\mb x_t,t,\mb c) &= \mb s_\text{cfg}(\mb x_t,t,\mb c)\\
    &=\nabla_{\mb x_t}\log p_t(\mb x_t|\mb c)+\eta\big(\nabla_{\mb x_t}\log p_t(\mb x_t|\mb c)-\nabla_{\mb x_t}\log p_t(\mb x_t)\big).
\end{align}
This completes the proof.

\end{proof}

\subsection{
Extensions: CFG Variants under the Alignment Framework
}
\label{appendix:Extensions: CFG Variants under the Alignment Framework}
The equivalence between CFG and weighted MCLR provides a unified perspective for interpreting CFG variants. This perspective is formalized in the following corollary.
\begin{cor}
\label{corrolary: CFG variants}
Consider two distributions $p^+(\mb x)$ and $p^-(\mb x)$. For any time sampling distribution $p(t)$ and weighting function $w(t)$, a generalized CFG-style score:
\begin{align}
    \mb s_\text{cfg-variant}(\mb x_t,t):=\nabla_{\mb x_t}\log p_t^+(\mb x_t) + \eta\,(\nabla_{\mb x_t}\log p_t^+(\mb x_t)-\nabla_{\mb x_t}\log p_t^-(\mb x_t))
\end{align}
is the unique minimizer of the following MCLR-style optimization problem:
\begin{equation}
\begin{aligned}
    \min_{\mb s_{\mb\theta}(\cdot)} \;& \;(1+\eta)\,\mathbb{E}_{t\sim p(t),\,\mb x\sim p^+(\mb x),\,\mb x_t\sim p_{0t}(\mb x_t|\mb x)}\bigl[w(t)||\nabla_{\mb x_t}\log p_{0t}(\mb x_t|\mb x)-\mb s_{\mb\theta}(\mb x_t,t)||_2^2\bigr] 
    \\ 
    &-\eta \, \mathbb{E}_{t\sim p(t),\,\mb x\sim p^-(\mb x),\, x_t\sim p_{0t}(\mb x_t|\mb x)}
\Bigl[
w(t)
\frac{p_t^+(\mb x_t)}{p_t^-(\mb x_t)}
\|
\nabla_{\mb x_t}\log p_{0t}(\mb x_t|\mb x)
-
\mb s_{\mb\theta}(\mb x_t,t)
\|_2^2
\Bigr]\label{Optimization problem that unifies CFG variants}
\end{aligned}
\end{equation}
\end{cor}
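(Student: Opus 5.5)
We follow the same route as the proof of \Cref{theorem: CFG equivalent to weighted mclr}, with $p(\mb x|\mb c)$ replaced by $p^+(\mb x)$ and the unconditional $p(\mb x)$ replaced by $p^-(\mb x)$. The first step is to decouple the objective~\eqref{Optimization problem that unifies CFG variants}. Since $w(t)>0$ and the integrand at time $t$ depends only on $\mb s_{\mb\theta}(\cdot,t)$, the problem separates across $t$. It therefore suffices to fix $t$ and minimize over the field $\mb s(\cdot):=\mb s_{\mb\theta}(\cdot,t)$.

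Next we change variables in each expectation using Bayes' rule on the joint of $(\mb x,\mb x_t)$. For the first term,
\[
p^+(\mb x)\,p_{0t}(\mb x_t|\mb x)=p_t^+(\mb x_t)\,p^+(\mb x|\mb x_t).
\]
For the second term,
\[
p^-(\mb x)\,p_{0t}(\mb x_t|\mb x)=p_t^-(\mb x_t)\,p^-(\mb x|\mb x_t).
\]
In the second term, the adaptive weight $\frac{p_t^+(\mb x_t)}{p_t^-(\mb x_t)}$ cancels $p_t^-(\mb x_t)$, mirroring the step from~\eqref{simplified form 4 independent loss} to~\eqref{simplified form 5 independent loss}. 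This leaves an outer expectation over $\mb x_t\sim p_t^+$ in both terms, with inner posteriors $p^+(\mb x|\mb x_t)$ and $p^-(\mb x|\mb x_t)$ respectively.

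We then expand each inner squared norm as in~\eqref{substep for cond} and~\eqref{substep for uncond}. \Cref{lemma1}, applied separately to the clean distributions $p^+$ and $p^-$, gives
\[
\mathbb{E}_{p^\pm(\mb x|\mb x_t)}\bigl[\nabla_{\mb x_t}\log p_{0t}(\mb x_t|\mb x)\bigr]=\nabla_{\mb x_t}\log p_t^\pm(\mb x_t).
\]
Collecting terms, the quadratic coefficient is $(1+\eta)-\eta=1$. Up to constants independent of $\mb s$, the objective becomes
\[
\mathbb{E}_{\mb x_t\sim p_t^+}\bigl\|\mb s(\mb x_t)-\mb s_\text{cfg-variant}(\mb x_t,t)\bigr\|_2^2,
\]
whose minimizer is $\mb s=\mb s_\text{cfg-variant}$, unique $p_t^+$-almost everywhere.

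The main obstacle is justifying these manipulations rather than the algebra itself. Three conditions are needed:
\begin{itemize}
\item $p_t^-(\mb x_t)>0$ wherever $p_t^+(\mb x_t)>0$, so the weight is finite. This holds automatically for Gaussian transition kernels with $t>0$, since both noisy marginals then have full support.
\item The constants $C_1$ and $C_2$ must be finite, which requires square-integrability of the conditional scores under the relevant measures.
\item Because the objective subtracts a term, it is not a priori bounded below. The computation shows the net quadratic coefficient is exactly $1>0$, so the reduced objective is a strictly convex, coercive quadratic in $\mb s$.
\end{itemize}
I would state these as the same implicit regularity conditions used in \Cref{theorem: CFG equivalent to weighted mclr}. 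Uniqueness is then understood $p_t^+$-almost everywhere, for every $t$ in the support of $p(t)$.
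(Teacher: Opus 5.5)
Your proposal is correct and is the adaptation the paper intends. The paper omits this proof, saying only that it follows by adapting the proof of \Cref{theorem: CFG equivalent to weighted mclr}, and your steps mirror that proof: decoupling in $t$, Bayes' rule with the weight $p_t^+/p_t^-$ cancelling $p_t^-$, \Cref{lemma1} applied separately to $p^\pm$, and completing the square with net quadratic coefficient $(1+\eta)-\eta=1$. Your added caveats ($p_t^+\ll p_t^-$, finite constants, and uniqueness only $p_t^+$-almost everywhere) are sensible refinements that the paper leaves implicit.
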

The proof is omitted, as it follows directly by adapting the proof of~\Cref{theorem: CFG equivalent to weighted mclr}. The optimization problem~\eqref{Optimization problem that unifies CFG variants} shares the same structure as the  ELBO-approximated weighted MCLR~\eqref{simplified form 2}, with the only difference that positive samples are drawn from $p^+(\mb x)$ and negative samples from $p^-(\mb x)$. This formulation unifies a broad class of CFG-style methods, as illustrated below.
\begin{itemize}
    \item \textbf{Standard CFG.}  
    Let $p^+(\mb x)=p(\mb x|\mb c)$ and $p^-(\mb x)=p(\mb x)$. This recovers the standard CFG and corresponds to the weighted MCLR formulation discussed in previous sections.

    \item \textbf{Autoguidance.}  
    Let $p^+(\mb x)$ be the distribution induced by a strong diffusion model and $p^-(\mb x)$ the distribution induced by a weaker model. We recover the Autoguidance~\cite{karras2024guiding}.

    \item \textbf{Inference-Time Alignment Guidance.}  
    Let $p^+(\mb x)$ be the distribution induced by a diffusion model fine-tuned on high-reward data and $p^-(\mb x)$ the distribution induced by the base model (or a low-reward fine-tuned model). We recover a family of inference-time alignment methods~\cite{frans2025diffusion,jin2025inference,cheng2025diffusion,jiang2026rethinking}.
\end{itemize}

Lastly, note that~\eqref{Optimization problem that unifies CFG variants} without adaptive weighting is the ELBO-based approximation of the following likelihood-based objective, which is presented in~\Cref{fig:MCLR CFG Equivalence figure}:
\begin{align}
    \max_{\mb\theta}\;\mathbb{E}_{p^+(\mb x)}\big[\log p_{\mb\theta}(\mb x)\big]+\eta\,\underbrace{\mathbb{E}_{\mb x\sim p^+(\cdot),\mb y\sim p^-(\cdot)}\big[\log\frac{p_{\mb\theta}(\mb x)}{p_{\mb\theta}(\mb y)}\big]}_{\text{MCLR Regularization}}.
\end{align}

\section{Practical Implementation Details}
\label{sec: practical implementation appendix}
\subsection{Approximating Log-Likelihood with ELBO}
\label{sec: ELBO denoiser form}
Implementing MCLR requires access to the log-likelihood, which is not directly available for diffusion models. We therefore approximate the log-likelihood using the evidence lower bound (ELBO) in~\eqref{MLE equals DSM}. In the following, we describe how this approximation is used in MCLR, CC-DPO, and CCA. Before doing so, we state the following fact.

\paragraph{Equivalence between Score Function and MMSE Denoisers.} For practical diffusion models, the drift coefficient $f(\cdot)$ in~\eqref{forward process} takes the form $f(\mb x,t)=f(t)\mb x$ where $f(\cdot): \mathbb{R}\rightarrow \mathbb{R}$. As a result, the corresponding transition distribution is Gaussian and can be written as~\cite{karras2022elucidating}:
\begin{align}
    p_{0t}(\mb x_t|\mb x)=\mathcal{N}(\mb x_t; s(t)\mb x, s^2(t)\sigma^2(t)\mb I)
\end{align},
where $\mathcal{N}(\mb x;\mb\mu,\mb\Sigma)$ denotes the Gaussian density with mean $\mb\mu$ and covariance $\mb\Sigma$ evaluated at $\mb x$, $s(t)=\exp(\int_0^tf(\xi)d\xi)$ and $\sigma(t)=\sqrt{\int_0^t\frac{g^2(\xi)}{s^2(\xi)}d\xi}$. The score of this transition distribution is therefore given by:
\begin{align}
    \nabla_{\mb x_t}\log p_{0t}(\mb x_t|\mb x)=\frac{s(t)\mb x-\mb x_t}{s^2(t)\sigma^2(t)}.
\end{align}
Accordingly, the score network can be parameterized in terms of a denoiser $\mathcal{D}_{\mb\theta}(\cdot)$ as:
\begin{align}
\mb{s}_{\mb\theta}(\mb x,t,\mb c)=\frac{\mathcal{D}_{\mb\theta}(\mb x_t;s^2(t)\sigma(t),\mb c)-\mb x_t}{s^2(t)\sigma^2(t)}.    
\end{align}
Without loss of generality, we set $s(t)=1$, under which the DSM objective in~\eqref{denoising score matching} becomes:
\begin{align}
    \frac{1}{2}\int_{0}^T&\mathbb{E}_{\mb c, p(\mb x|\mb c),p_{0t}(\mb x_t|\mb x)}[w(t) \|\frac{\mathcal{D}_{\mb\theta}(\mb x_t;\sigma(t),\mb c)-\mb x}{\sigma^2(t)}\|_2^2] d t
    =\tilde{w}(t)\mathbb{E}_{\mb c,t\sim\mathcal{U}[0,T],p(\mb x|\mb c), p_{0t}(\mb x_t|\mb x)}\left[||\mathcal{D}_{\mb\theta}(\mb x_t;\sigma(t),\mb c)-\mb x
    ||_2^2\right],
\end{align}
where $\tilde{w}(t)=\frac{Tw(t)}{2\sigma^4(t)}$
This shows that DSM is equivalent to training the MMSE denoiser
$\mathcal{D}_{\mb\theta}(\cdot;\sigma(t),\mb c)$ for data from class $\mb c$
corrupted by additive Gaussian noise with standard deviation $\sigma(t)$.

In this setting, the score function is related to the MMSE denoiser via
Tweedie’s formula~\cite{miyasawa1961empirical}:
\begin{align}
    \nabla_{\mb x}\log p_t(\mb x\mid\mb c)
    = \frac{\mathcal{D}(\mb x;\sigma(t),\mb c)-\mb x}{\sigma^2(t)},
\end{align}
where $\mathcal{D}(\mb x;\sigma(t),\mb c)$ denotes the MMSE denoiser and
$p_t(\mb x|\mb c)=\int p_{0t}(\mb x|\mb x_0)p_\text{data}(\mb x_0)d\mb x_0$ is the marginal distribution at time $t$. We are now ready to present the practical ELBO-approximated objectives of MCLR, CC-DPO, and CCA for
diffusion models.

\paragraph{ELBO-Approximated MCLR for Diffusion Models.} Substitute ELBO~\eqref{MLE equals DSM} into~\eqref{MLE with Likelihood ratio constraint form 2 simplified}, the MCLR regularized DSM becomes:
\begin{equation}
\begin{aligned}
 \min_{\boldsymbol{\theta}}&\;
\mathcal{J}_{\text{DSM}}(\boldsymbol{\theta}; g^2(\cdot)) \\ &+ \eta\,
\mathbb{E}_{\substack{\boldsymbol{c}, \tilde{\boldsymbol{c}},\,t\sim \mathcal{U}[0,T]\\
                               p(\boldsymbol{x}|\boldsymbol{c}),\,p_{0t}(\boldsymbol{x}_t|\boldsymbol{x})}}
\Bigl[
g^2(t)
\Bigl(
\|\nabla_{\boldsymbol{x}_t}\log p_{0t}(\boldsymbol{x}_t\mid\boldsymbol{x})
- \boldsymbol{s}_{\boldsymbol{\theta}}(\boldsymbol{x}_t,t,\boldsymbol{c})\|_2^2
-
\|\nabla_{\boldsymbol{x}_t}\log p_{0t}(\boldsymbol{x}_t\mid\boldsymbol{x})
- \boldsymbol{s}_{\boldsymbol{\theta}}(\boldsymbol{x}_t,t,\tilde{\boldsymbol{c}})\|_2^2
\Bigr)
\Bigr].
\end{aligned}
\label{Using ELBO for likelihood in MCLR}
\end{equation}
Using the denoiser parameterization of the score network, together with a customized training-time noise sampling distribution $p(t)$ and adaptive weighting $w(t)$, the MCLR objective becomes:
\begin{equation}
\label{MCLR_appendix_denoising_parameterizaiton with t}
\begin{aligned}
 \mathbb{E}_{\substack{\boldsymbol{c}, \tilde{\boldsymbol{c}},\,t\sim p(t)\\
                               p(\boldsymbol{x}|\boldsymbol{c}),\,p_{0t}(\boldsymbol{x}_t|\boldsymbol{x})}}
   \Bigl[
     w(t)\bigl(
       \|\mb x-\mathcal{D}_{\mb\theta}(\mb x_t;\sigma(t),\mb c)\|_2^2
       \;-  \;
       \|\mb x-\mathcal{D}_{\mb\theta}(\mb x_t;\sigma(t),\tilde{\mb c})\|_2^2
     \bigr)
   \Bigr],
\end{aligned}
\end{equation}
which can be approximated with Monte Carlo sampling during training. Rather than parameterizing the noise level through the time variable $t\sim p(t)$, one may equivalently define a training noise distribution directly over $\sigma$, denoted by $p(\sigma)$ and a corresponding noise adaptive weighting $w(\sigma)$. Under this formulation,~\eqref{MCLR_appendix_denoising_parameterizaiton with t} can be rewritten as:
\begin{equation}
\begin{aligned}
 \mathbb{E}_{\substack{\boldsymbol{c}, \tilde{\boldsymbol{c}},\mb\epsilon\sim\mathcal{N}(\mb 0,\mb I)\,\\
                                \sigma\sim p(\sigma),
                               p(\boldsymbol{x}|\boldsymbol{c})}}
   \Bigl[
     w(\sigma)\bigl(
       \|\mb x-\mathcal{D}_{\mb\theta}(\mb x+\sigma\mb\epsilon;\sigma,\mb c)\|_2^2
       \;-  \;
       \|\mb x-\mathcal{D}_{\mb\theta}(\mb x+\sigma\mb\epsilon;\sigma,\tilde{\mb c})\|_2^2
     \bigr)
   \Bigr].
\end{aligned}
\label{MCLR denoiser form in appendix}
\end{equation}

Similarly, substituting ELBO into~\eqref{MLE with Likelihood ratio constraint}, we get the following equivalent optimization problem:
\begin{equation}
\begin{aligned}
 \min_{\boldsymbol{\theta}}\;
\mathcal{J}_{\text{DSM}}(\boldsymbol{\theta}; g^2(\cdot))  + \eta\,
\mathbb{E}_{\substack{\boldsymbol{c}, \tilde{\boldsymbol{c}},\,t\sim \mathcal{U}[0,T]\\
                               p(\boldsymbol{x}|\boldsymbol{c}),\,p(\boldsymbol{y}|\tilde{\boldsymbol{c}})\\\,p_{0t}(\boldsymbol{x}_t|\boldsymbol{x}),\,p_{0t}(\boldsymbol{y}_t|\boldsymbol{y})}}
\Bigl[
g^2(t)
\Bigl(
\| \nabla_{\boldsymbol{x}_t}\log p_{0t}(\boldsymbol{x}_t\mid\boldsymbol{x})
- \boldsymbol{s}_{\boldsymbol{\theta}}(\boldsymbol{x}_t,t,\boldsymbol{c}) \|_2^2
\\-
\| \nabla_{\boldsymbol{y}_t}\log p_{0t}(\boldsymbol{y}_t\mid\boldsymbol{y})-\boldsymbol{s}_{\boldsymbol{\theta}}(\boldsymbol{y}_t,t,\boldsymbol{c}) \|_2^2
\Bigr)
\Bigr],
\end{aligned}
\label{Using ELBO for likelihood in MCLR alternative form}
\end{equation}
the denoiser form of which is presented in~\eqref{denoiser perspective}.

\paragraph{ELBO-Approximated CC-DPO for Diffusion Models.}
The CC-DPO algorithm requires access to the log-likelihood ratio for individual data point. Similar to MCLR, we approximate it with the ELBO:
\begin{align}
    \log \frac{p_{\mb\theta}(\mb x)}{p_\text{ref}(\mb x)} \approx \frac{T}{2}\mathbb{E}_{t\sim\mathcal{U}[0,T], p_{0t}(\mb x_t|\mb x)}\Bigl[
     g^2(t)\bigl(-
       \|\nabla_{\boldsymbol{x}_t}\log p_{0t}(\boldsymbol{x}_t|\boldsymbol{x})
         - \boldsymbol{s}_{\boldsymbol{\theta}}(\boldsymbol{x}_t,t,\boldsymbol{c})\|_2^2
       \;+\; \\
       \|\nabla_{\boldsymbol{x}_t}\log p_{0t}(\boldsymbol{x}_t|\boldsymbol{x})
         - \boldsymbol{s}_\text{ref}(\boldsymbol{x}_t,t,\mb c)\|_2^2
     \bigr)
   \Bigr].
\end{align}
Using the denoiser parameterization of the score network, together with a customized training noise distribution and adaptive weighting, the log-likelihood ratio takes the following form:
\begin{align}
    \log\frac{p_{\mb\theta}(\mb x)}{p_\text{ref}(\mb x)}\approx\mathbb{E}_{\sigma\sim p_\text{train}(\sigma), \mb\epsilon\sim\mathcal{N}(\mb 0,\mb I)}\bigl[w(\sigma)(-\|\mb x-\mathcal{D}_{\mb\theta}(\mb x+\sigma\mb\epsilon;\sigma,\mb c)\|_2^2+\|\mb x-\mathcal{D}_\text{ref}(\mb x+\sigma\mb\epsilon;\sigma,\mb c)\|_2^2)\bigl].
    \label{ELBO for DPO}
\end{align}
Define:
\begin{align}
    \Delta(\mb x, \sigma, \mb\epsilon,\mb c):= \|\mb x-\mathcal{D}_{\mb\theta}(\mb x+\sigma\mb\epsilon;\sigma,\mb c)\|_2^2-\|\mb x-\mathcal{D}_\text{ref}(\mb x+\sigma\mb\epsilon;\sigma,\mb c)\|_2^2.
\end{align}
By substituting~\eqref{ELBO for DPO} into DPO objective~\eqref{DPO objective-conditional}, we get the following optimization problem:

\begin{align}
    \min_{\mb\theta}\;-\mathbb{E}_{\mb c,\tilde{\mb c}, \mb x_w\sim p(\mb x|\mb c),\mb x_l\sim p(\mb x|\tilde{\mb c})}\bigl[\log\mathrm{Sigmoid}(\beta\mathbb{E}_{\sigma,\mb \epsilon}[w(\sigma)(-\Delta(\mb x_w,\sigma,\mb\epsilon,\mb c)+\Delta(\mb x_l,\sigma,\mb\epsilon,\mb c))])\bigr].
    \label{cc-dpo objective, original form}
\end{align}

Since the log-Sigmoid function is concave, applying Jensen’s inequality by moving the expectation outside yields the following upper bound of objective~\eqref{cc-dpo objective, original form}:
\begin{align}
    \min_{\mb\theta}\;-\mathbb{E}_{\mb c,\tilde{\mb c}, \mb x_w,\mb x_l,\sigma,\mb\epsilon}\bigl[\log\mathrm{Sigmoid}(\beta w(\sigma)(-\Delta(\mb x_w,\sigma,\mb\epsilon,\mb c)+\Delta(\mb x_l,\sigma,\mb\epsilon,\mb c)))\bigr],
    \label{cc-dpo objective, practical upper bound}
\end{align}
which serves as our final training objective. 

\paragraph{CCA for Diffusion Models.}
The Conditional Contrastive Alignment (CCA)~\cite{chentoward} objective takes the following form:
\begin{align}
    \max_{\mb\theta}\;\mathbb{E}_{\mb c,p(\mb x|\mb c)}\log\mathrm{Sigmoid}\bigl(\beta\log\frac{p_{\mb \theta}(\mb x|\mb c)}{p_\text{ref}(\mb x|\mb c)}\bigl) + \lambda\mathbb{E}_{\mb c,\tilde{\mb c}, p(\mb x|\tilde{\mb c})}\log\mathrm{Sigmoid}\bigl(-\beta\log\frac{p_{\mb \theta}(\mb x|\mb c)}{p_\text{ref}(\mb x|\mb c)}\bigl).
\end{align}

To adapt this optimization problem to diffusion models, we again approximate the log-likelihood with ELBO, resulting in the following optimization problem:
\begin{align}
    \max_{\mb\theta}\;\mathbb{E}_{\mb c,\tilde{\mb c}, \mb x_w\sim p(\mb x|\mb c), \mb x_l\sim p(\mb x|\tilde{\mb c})}\bigl[\log\mathrm{Sigmoid}\bigl(\beta\mathbb{E}_{\sigma,\mb\epsilon}\bigl[-w(\sigma)\Delta(\mb x_w,\sigma,\mb\epsilon,\mb c)\bigr]\bigl)+\lambda\log\mathrm{Sigmoid}\bigl(\beta\mathbb{E}_{\sigma,\mb\epsilon}\bigl[w(\sigma)\Delta(\mb x_l,\sigma,\mb\epsilon,\mb c)\bigr]    
    \bigr)\bigr].
    \label{CCA objective, ELBO}
\end{align}
Since the log-sigmoid function is concave, applying Jensen's inequality by moving the expectation outside yields the following lower bound of objective~\eqref{CCA objective, ELBO}:
\begin{align}
        \max_{\mb\theta}\;\mathbb{E}_{\mb c,\mb x_w, \mb x_l,\sigma,\mb\epsilon}\bigl[\log\mathrm{Sigmoid}\bigl(-\beta w(\sigma)\Delta(\mb x_w,\sigma,\mb\epsilon,\mb c)\bigl)+\lambda\log\mathrm{Sigmoid}\bigl(\beta w(\sigma)\Delta(\mb x_l,\sigma,\mb\epsilon,\mb c)    
    \bigr)\bigr],
    \label{CCA objective, lower}
\end{align}
which serves as our final training objective.

\subsection{Building Training Data from a Minibatch}
The equivalence of~\eqref{MLE with Likelihood ratio constraint in appendix} and~\eqref{MLE with Likelihood ratio constraint in the appendix !!!} offer us two choices for constructing the contrastive pairs. The first is constructing tuples of $(\mb x,\mb y,\mb c)$, where $\mb x\sim p(\mb x|\mb c)$ and $\mb y$ sampled from other random classes. The other is constructing contrastive tuples $(\mb x, \mb c, \tilde{\mb c})$, where $\mb x \sim p(\mb x | \mb c)$ and $\tilde{\mb c}$ denotes a randomly chosen alternative class. In our implementation we choose the second method, but our preliminary results demonstrate both achieve similar performance.
Similarly, CC-DPO and CCA require constructing preference-style tuples $(\mb x_w, \mb x_l, \mb c)$. In practice, we build these tuples directly from each training minibatch
$\bigl\{(\mb x_i, \mb c_i)\bigr\}_{i=1}^N.$
We consider two strategies, described below.

\paragraph{Approach 1: Building $N$ pairs.}
The simplest strategy constructs one contrastive (or preference) tuple per sample in the minibatch.  
For MCLR, given a sample $(\mb x, \mb c)$, we randomly select another label $\tilde{\mb c}$ from the same minibatch, forming a tuple $(\mb x, \mb c, \tilde{\mb c})$. Repeating this process independently for each $\mb x$ (with replacement) yields $N$ tuples in total.  
The same strategy applies to CC-DPO and CCA, where we construct $N$ tuples $(\mb x_w, \mb x_l, \mb c)$ from the minibatch.

\paragraph{Approach 2: Building $NK$ pairs.}
Alternatively, we can construct multiple contrastive tuples per sample. For MCLR, given each $(\mb x, \mb c)$, we randomly select $K$ alternative labels $\{\tilde{\mb c}_k\}_{k=1}^K$ from the same minibatch, yielding $K$ tuples $(\mb x, \mb c, \tilde{\mb c}_k)$. Repeating this procedure for all $N$ samples results in $NK$ tuples in total.  
An analogous strategy is applied to CC-DPO and CCA, producing $NK$ tuples $(\mb x_w, \mb x_l, \mb c)$ per minibatch.

Compared to Approach~1, Approach~2 increases the number of training tuples constructed from each minibatch, allowing better exploitation of inter-class contrastive information. In our experiments, we adopt Approach~2 for EDM-based diffusion models, where it consistently yields improved quantitative performance. For VAR and SiT models, we use Approach~1 due to its lower computational overhead and comparable empirical performance.


\subsection{Overall Algorithm}
We are now ready to present the overall algorithm. The algorithm for MCLR is given in~\cref{alg:MCLR}, while the algorithms for CC-DPO and CCA are presented in~\cref{alg:CCA and CC-DPO}.  

In~\cref{alg:MCLR}, we compute the MCLR regularization loss without including the standard DSM term. Empirically, we find that optimizing the MCLR term alone yields a better best-case FD\textsubscript{DINOv2} score. One possible explanation is that removing the DSM term makes the contrastive objective the dominant training signal, allowing the model to explore a larger solution space beyond the neighborhood constrained by DSM regularization. Accordingly, the main results report the best FD\textsubscript{DINOv2} scores obtained without the DSM term. Nevertheless, including the DSM term can improve training stability and lead to a more favorable Precision--Recall trade-off as shown in~\cref{effect of DSM Regularization}.

\begin{algorithm}[t]
  \caption{MCLR}\label{alg:MCLR}
  \begin{algorithmic}[1]            
    \Require Pre-trained base model $\mathcal{D}_{\mb \theta_0}$;
            noise-adaptive weight function $w(\cdot)$;
            training noise schedule $p(\sigma)$;
            learning rate $\alpha$;
            training dataset representing $p(\mb x,\mb c)$;
            $K\geq 1$ (number of mismatched class labels per sample in Approach 2)            
    \State Initialize $\mb{\theta}\leftarrow\mb\theta_0$
    \For{each training iteration}
      \State Sample a minibatch $\bigl\{ (\mathbf{x}_i, \mathbf{c}_i)\bigr\}_{i=1}^{N}$ with $(\mb x_i,\mb c_i)\sim p(\mb x,\mb c)$.
      \State Sample $\{\sigma_i\}_{i=1}^{N}$ with $\sigma_i \sim p(\sigma)$
      \State Sample $\{\mb\epsilon_i\}_{i=1}^{N}$ with $\mb\epsilon_i \sim \mathcal{N}(\mb 0,\mb I)$
      
      \State Construct contrastive tuples from the minibatch:
      \If{Approach 1}
        \State Set $M \gets N$
        \State For each $i$, sample $\tilde{\mb c}_i$ from minibatch labels with $\tilde{\mb c}_i \neq \mb c_i$
        \State Define tuples $\{(\mb x_i,\mb c_i,\tilde{\mb c}_i,\sigma_i,\mb\epsilon_i)\}_{i=1}^{M}$
      \EndIf
      \If{Approach 2}
        \State Set $M \gets NK$
        \State For each $i$ and each $k=1,\dots,K$, sample $\tilde{\mb c}_{i,k}$ from minibatch with $\tilde{\mb c}_{i,k}\neq \mb c_i$
        \State Define tuples $\{(\mb x_{i,k},\mb c_{i,k},\tilde{\mb c}_{i,k},\sigma_{i,k},\mb\epsilon_{i,k})\}$ where
        \Statex \hspace{\algorithmicindent} $(\mb x_{i,k},\mb c_{i,k})=(\mb x_i,\mb c_i)$ and $(\sigma_{i,k},\mb\epsilon_{i,k})=(\sigma_i,\mb\epsilon_i)$
      \EndIf

      \State $\mathcal{L}\gets 0$
      \For{each tuple $(\mb x,\mb c,\tilde{\mb c},\sigma,\mb\epsilon)$}
        \State $\mathcal{L}\mathrel{+}= w(\sigma)\Bigl(
          \|\mathcal{D}_{\mb\theta}(\mb x+\sigma\mb\epsilon;\sigma,\mb c)-\mb x\|_2^2
          -\|\mathcal{D}_{\mb\theta}(\mb x+\sigma\mb\epsilon;\sigma,\tilde{\mb c})-\mb x\|_2^2
        \Bigr)$
      \EndFor

      \State $\mb{\theta}\gets \mb{\theta}-\alpha\nabla_{\mb{\theta}}\mathcal{L}$
    \EndFor
    \State \Return $\mathcal{D}_{\mb{\theta}}$
  \end{algorithmic}
\end{algorithm}

\begin{algorithm}[t]
  \caption{CC-DPO and CCA}\label{alg:CCA and CC-DPO}
  \begin{algorithmic}[1]            
    \Require Pre-trained base model $\mathcal{D}_{\mb \theta_0}$;
            noise-adaptive weight function $w(\cdot)$;
            training noise schedule $p(\sigma)$;
            learning rate $\alpha$;
            KL regularization strength $\beta$; CCA hyperparameter $\lambda$;
            training dataset representing $p(\mb x,\mb c)$;
            $K\geq 1$ (number of mismatched class labels per sample in Approach 2)            
    \State Initialize $\mb{\theta}\leftarrow\mb\theta_0$, $\mathcal{D}_\text{ref}\leftarrow \mathcal{D}_{\mb\theta_0}$
    \For{each training iteration}
      \State Sample a minibatch $\bigl\{ (\mathbf{x}_i, \mathbf{c}_i)\bigr\}_{i=1}^{N}$ with $(\mb x_i,\mb c_i)\sim p(\mb x,\mb c)$.
      \State Sample $\{\sigma_i\}_{i=1}^{N}$ with $\sigma_i \sim p(\sigma)$
      \State Sample $\{\mb\epsilon_i\}_{i=1}^{N}$ with $\mb\epsilon_i \sim \mathcal{N}(\mb 0,\mb I)$
      
      \State Construct contrastive tuples from the minibatch:
      \If{Approach 1}
        \State Set $M \gets N$
        \State For each $i$, sample $\tilde{\mb x}_i$ from minibatch samples with label $\tilde{\mb c}_i \neq \mb c_i$
        \State Define tuples $\{(\mb x_i,\tilde{\mb x}_i, \mb c_i,\sigma_i,\mb\epsilon_i)\}_{i=1}^{M}$
      \EndIf
      \If{Approach 2}
        \State Set $M \gets NK$
        \State For each $i$ and each $k=1,\dots,K$, sample $\tilde{\mb x}_{i,k}$ from minibatch samples with label $\tilde{\mb c}_{i,k}\neq \mb c_i$
        \State Define tuples $\{(\mb x_{i,k}, \tilde{\mb x}_{i,k},\mb c_{i,k}, \sigma_{i,k},\mb\epsilon_{i,k})\}$ where
        \Statex \hspace{\algorithmicindent} $(\mb x_{i,k},\mb c_{i,k})=(\mb x_i,\mb c_i)$ and $(\sigma_{i,k},\mb\epsilon_{i,k})=(\sigma_i,\mb\epsilon_i)$
      \EndIf

      \State $\mathcal{L}\gets 0$
      \For{each tuple $(\mb x,\tilde{\mb x}, \mb c,\sigma,\mb\epsilon)$}
      \State     $\Delta(\mb x_w, \sigma, \mb\epsilon,\mb c):= \|\mb x-\mathcal{D}_{\mb\theta}(\mb x+\sigma\mb\epsilon;\sigma,\mb c)\|_2^2-\|\mb x-\mathcal{D}_\text{ref}(\mb x+\sigma\mb\epsilon;\sigma,\mb c)\|_2^2$
      \State     $\Delta(\mb x_l, \sigma, \mb\epsilon,\mb c):= \|\tilde{\mb x}-\mathcal{D}_{\mb\theta}(\tilde{\mb x}+\sigma\mb\epsilon;\sigma,\mb c)\|_2^2-\|\tilde{\mb x}-\mathcal{D}_\text{ref}(\tilde{\mb x}+\sigma\mb\epsilon;\sigma,\mb c)\|_2^2$
      \If{CC-DPO}
              \State $\mathcal{L}\mathrel{+}= -\log\mathrm{Sigmoid}(\beta w(\sigma)(-\Delta(\mb x_w,\sigma,\mb\epsilon,\mb c)+\Delta(\mb x_l,\sigma,\mb\epsilon,\mb c)))$
      \EndIf
      \If{CCA}
              \State $\mathcal{L}\mathrel{-}= \log\mathrm{Sigmoid}\bigl(-\beta w(\sigma)\Delta(\mb x_w,\sigma,\mb\epsilon,\mb c)\bigl)+\lambda\log\mathrm{Sigmoid}\bigl(\beta w(\sigma)\Delta(\mb x_l,\sigma,\mb\epsilon,\mb c)\bigr)$
      \EndIf
      
      \EndFor

      \State $\mb{\theta}\gets \mb{\theta}-\alpha\nabla_{\mb{\theta}}\mathcal{L}$
    \EndFor
    \State \Return $\mathcal{D}_{\mb{\theta}}$
  \end{algorithmic}
\end{algorithm}

\subsection{Hyperparameters}
For each method, we carefully tune hyperparameters using grid search to ensure the best performance. The full set of hyperparameter configurations will be made publicly available upon publication.

\section{Additional Experimental Results}
\label{additional results appendix}
In this section, we present additional experimental results that complement and extend those in the main text. In~\cref{experiment results main text}, we focus on validating the following three claims:
\begin{itemize}
    \item MCLR induces progressive class separation, leading to a fidelity--diversity trade-off analogous to that produced by increasing the guidance strength in classifier-free guidance (CFG).
    \item MCLR matches or outperforms training-time baselines, including CC-DPO and CCA.
    \item MCLR achieves performance comparable to CFG.
\end{itemize}

Due to space constraints, the main text reports quantitative results only for ImageNet-512$\times$512 with EDM2-L and ImageNet-256$\times$256 with VAR-d24. Here, we provide the complete set of experimental results on ImageNet-64$\times$64 with EDM2-S, ImageNet-512$\times$512 with EDM2-L, and ImageNet-256$\times$256 with VAR-d24.

\subsection{Progressive Class Separation and the Fidelity-Diversity Trade-off.}
Training with MCLR induces notable \emph{progressive class separation}, which gives rise to a fidelity--diversity trade-off similar that observed when increasing the guidance scale in CFG. This behavior is quantitatively supported by the following observations.

\noindent\textbf{(i) Increased Precision and Inception Score.}
As shown in~\Cref{fig:ImageNet64 EDM2 quant}(b,d,g) for ImageNet-64,~\Cref{fig:ImageNet256 VAR quant}(c,f,i) for ImageNet-256 and~\Cref{fig:ImageNet512 EDM2 quant}(b,d,g) for ImageNet-512, continued training with MCLR leads to a progressive increase in Inception Score, indicating increasingly class-discriminative generations. This is accompanied by a corresponding increase in Precision, reflecting improved image fidelity.

\noindent\textbf{(ii) Decreased Recall.} Conversely, as shown in~\Cref{fig:ImageNet64 EDM2 quant}(e,h),~\Cref{fig:ImageNet512 EDM2 quant}(g,j), and~\Cref{fig:ImageNet256 VAR quant}(e,h), excessive training reduces within-class diversity, which manifests as a decrease in Recall (see also~\Cref{fig:imagenet512 quant}(e)).

Taken together, these effects result in FD--IS and Precision--Recall trade-offs that closely mirror those induced by CFG, as illustrated in~\Cref{fig:ImageNet64 EDM2 quant}(c,f,i),~\Cref{fig:ImageNet512 EDM2 quant}(c,f,i), and~\Cref{fig:ImageNet256 VAR quant}(d,e,h,k). Qualitatively, as shown in~\Cref{fig:imagenet512 qualitative teaser,fig:class_207_qua,fig:class_343_qua,fig:class_513_qua,fig:class_515_qua,fig:class_617_qua,fig:additional_64_qua_1,fig:additional_64_qua_2,fig:additional_64_qua_3,fig:additional_512_qua_1,fig:additional_512_qua_2,fig:additional_512_qua_3,fig:additional_512_qua_4}, continued MCLR training leads to gradually enhanced distinct, class-specific visual characteristics in generated samples.

\subsection{MCLR Outperforms Training-time Baselines.}

\paragraph{Diffusion Models.} As shown in~\Cref{fig:ImageNet64 EDM2 quant,fig:ImageNet512 EDM2 quant}, for EDM models trained on ImageNet, MCLR achieves substantially better best-case FD\textsubscript{DINOv2} scores than CCA, CC-DPO. Moreover, MCLR traverses significantly wider FD--IS and Precision--Inception trade-offs region with a faster training speed. In contrast, CCA and CC-DPO often converge early, exhibit zigzag training trajectories, and become trapped in suboptimal local minima, as evidenced by slowly improving or stalled learning curves.

\paragraph{Autoregressive Models.} As shown in~\Cref{fig:ImageNet256 VAR quant}, MCLR, CC-DPO, and CCA yield comparable performance improvement on VAR-d24. Specifically, we see MCLR consistently achieves higher precision (see~\Cref{fig:ImageNet256 VAR quant}(f,h,i,k)) and inception score (see~\Cref{fig:ImageNet256 VAR quant}(c)) in later stages of training.

\subsection{MCLR Achieves Comparable Performance as CFG.} 
\paragraph{Diffusion Models.} For EDM models trained on ImageNet, CFG generally exhibits a better FD--IS trade-off and achieves a lower best-case FD\textsubscript{DINOv2} than MCLR. Nevertheless, this gap is moderate for EDM2-L model,
where CFG attains a best-case FD\textsubscript{DINOv2} of 39.86 compared
to 42.50 for MCLR as shown in~\Cref{fig:ImageNet512 EDM2 quant}(c). Moreover, for EDM2-L model, when evaluated using Precision--Recall, MCLR demonstrates
competitive and in some regimes superior behavior.
As shown in~\Cref{fig:ImageNet512 EDM2 quant}(f,g), MCLR matches CFG in the
high-recall regime (early training stages) and achieves
a substantially higher best-case precision in the high-precision regime
(later training stages), where CFG begins to generate images with
oversaturated colors. 

Applying CFG on top of an MCLR-fine-tuned model further alleviates the performance gap between MCLR and CFG, yielding higher best-case Inception Score and Precision on both EDM2-S and EDM2-L models, as shown in~\Cref{fig:ImageNet64 EDM2 quant}(c,f,i) and~\Cref{fig:ImageNet512 EDM2 quant}(c,f,i).

Qualitatively, MCLR and CFG often produce similar effects, both significantly enhancing class-specific structures in the generated
images, as shown in~\Cref{fig:imagenet512 qualitative teaser,fig:class_207_qua,fig:class_343_qua,fig:class_513_qua,fig:class_515_qua,fig:class_617_qua,fig:additional_64_qua_1,fig:additional_64_qua_2,fig:additional_64_qua_3,fig:additional_512_qua_1,fig:additional_512_qua_2,fig:additional_512_qua_3,fig:additional_512_qua_4}.
This similarity is expected, as both methods improve conditional
modeling through inter-class contrastive signals.
The key distinction is that MCLR internalizes this mechanism during
training, whereas CFG applies it at inference time.
\paragraph{Autoregressive Models.}
For VAR-d24 model, MCLR achieves a similar FD--IS trade-off to
CFG in terms of both FD\textsubscript{DINOv2} and FID, with CFG exhibiting
a slightly better best-case FID.
Consistent with diffusion models, MCLR outperforms CFG in the
Precision--Recall trade-off and achieves a higher best-case precision.

\begin{figure*}[t]
    \centering
    \includegraphics[width=0.8\linewidth]{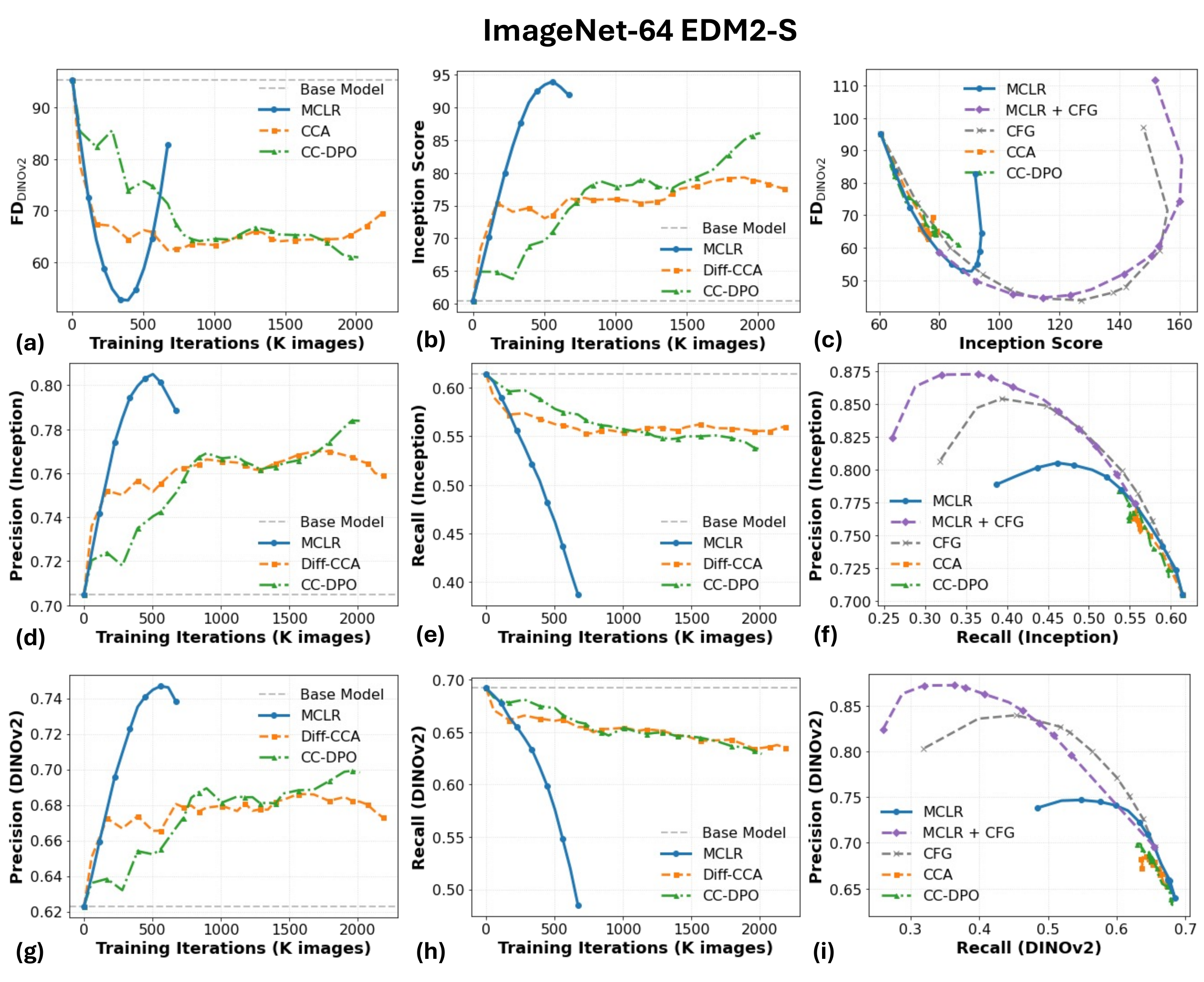}%
    \caption{\textbf{Quantitative Results for EDM2-S trained on ImageNet-64$\times$64.} (a), (b), (d), (e), (g), (h) show the evolution of FD, Inception Score, Precision (calculated with Inception features), and Recall (calculated with Inception features), Precision (calculated with \text{DINOv2} features), and Recall (calculated with \text{DINOv2} features), respectively, as functions of training iterations.
(c) shows the FD--IS trade-offs, while (f), (i) depict the Precision--Recall trade-offs calculated with Inception and \text{DINOv2} features respectively.
We evaluate classifier-free guidance (CFG) scales $\gamma \in \{0.1, 0.2, 0.3, 0.4, 0.5, 0.7, 0.9, 1, 1.5, 2.0, 3.0\}$.}
    \label{fig:ImageNet64 EDM2 quant}
\end{figure*}

\begin{figure*}[t]
    \centering
    \includegraphics[width=0.8\linewidth]{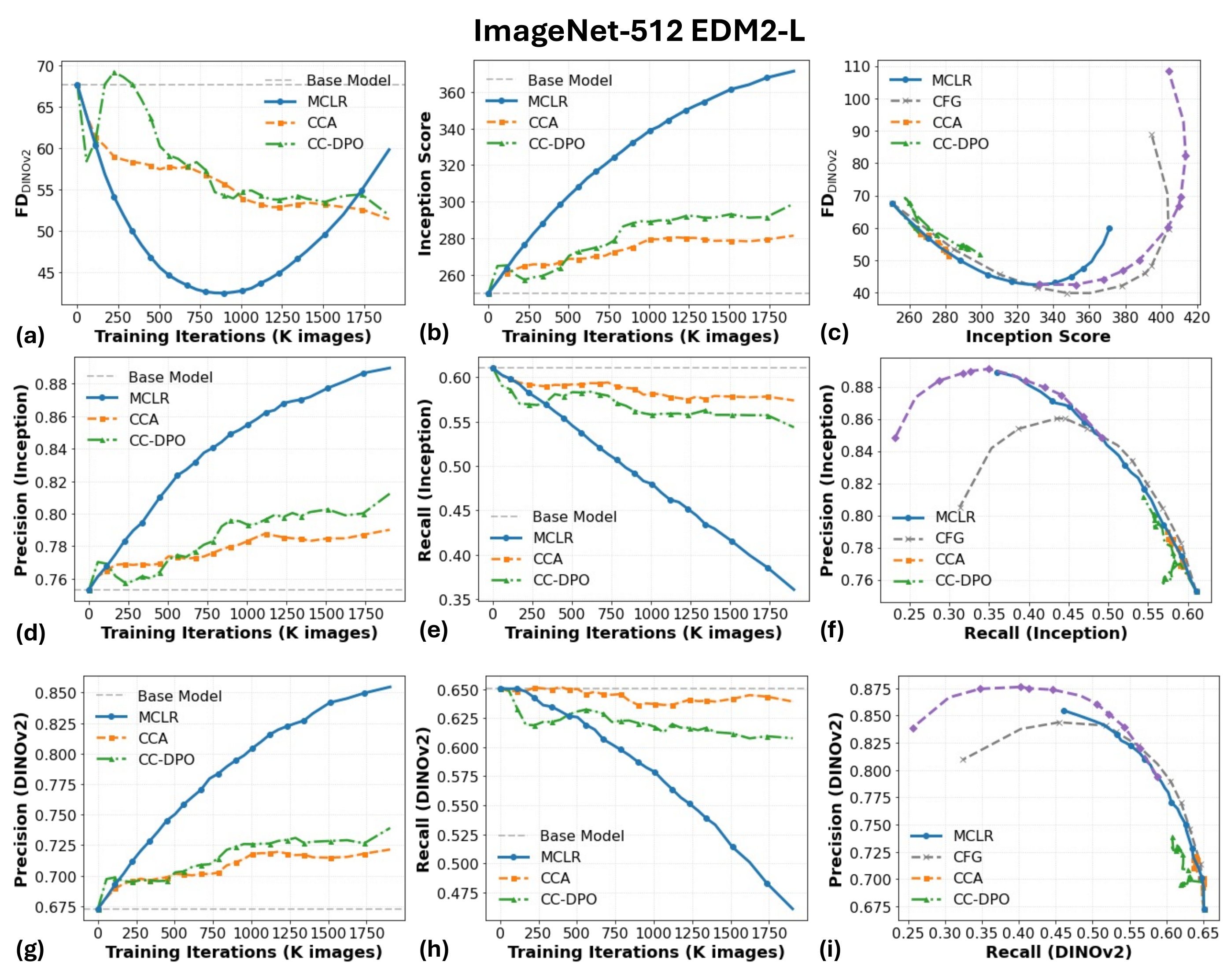}\caption{\textbf{Quantitative Results for EDM2-L trained on ImageNet-512$\times$512.} (a), (b), (d), (e), (g), (h) show the evolution of FD, Inception Score, Precision (calculated with Inception features), and Recall (calculated with Inception features), Precision (calculated with \text{DINOv2} features), and Recall (calculated with \text{DINOv2} features), respectively, as functions of training iterations.
(c) shows the FD--IS trade-offs, while (f), (i) depict the Precision--Recall trade-offs calculated with Incpetion and \text{DINOv2} features respectively.
We evaluate classifier-free guidance (CFG) scales $\gamma \in \{0.1, 0.2, 0.3, 0.4, 0.5, 0.7, 0.9, 1, 1.5, 2.0, 3.0\}$.}
    \label{fig:ImageNet512 EDM2 quant}
\end{figure*}

\begin{figure*}[t]
    \centering
    \includegraphics[width=0.8\linewidth]{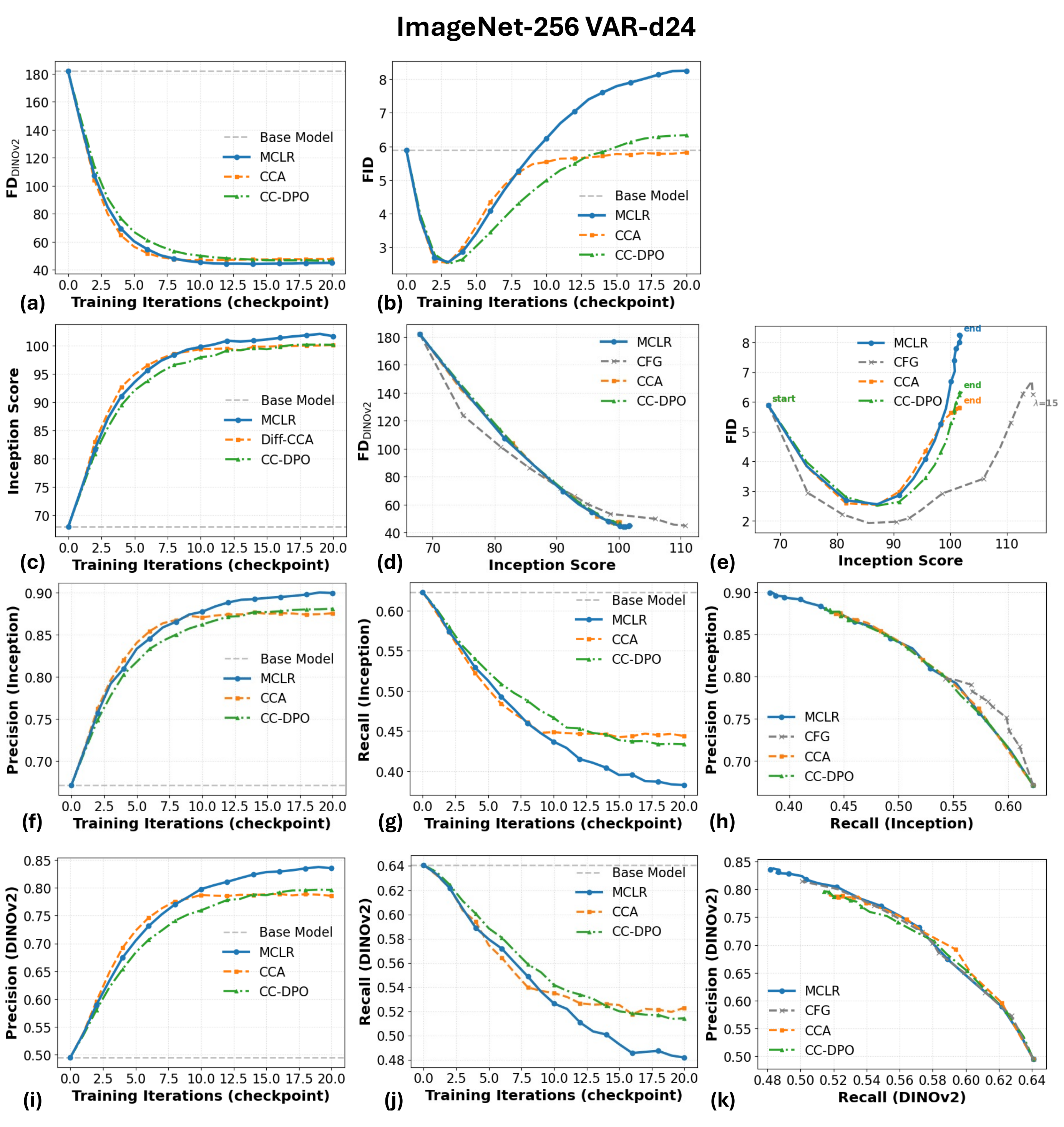}%
    \caption{\textbf{Quantitative Results for VAR-d24 trained on ImageNet-64$\times$64.} (a), (b), (c), (f), (g), (i), (j) show the evolution of FD\textsubscript{DINOv2}, FID, Inception Score, Precision (calculated with Inception features), and Recall (calculated with Inception features), Precision (calculated with \text{DINOv2} features), and Recall (calculated with \text{DINOv2} features), respectively, as functions of training iterations.
(d), (e) shows the FD--IS trade-offs, while (h), (k) depict the Precision--Recall trade-offs calculated with Inception and \text{DINOv2} features respectively.
We evaluate classifier-free guidance (CFG) scales $\gamma \in \{0.5, 0.8, 1.1, 1.5, 1.7, 2.0, 2.5, 3.0, 4.0, 5.0, 7.0, 10.0, 15.0\}$.}
    \label{fig:ImageNet256 VAR quant}
\end{figure*}

\begin{figure*}[t]
    \centering
    \includegraphics[width=0.8\linewidth]{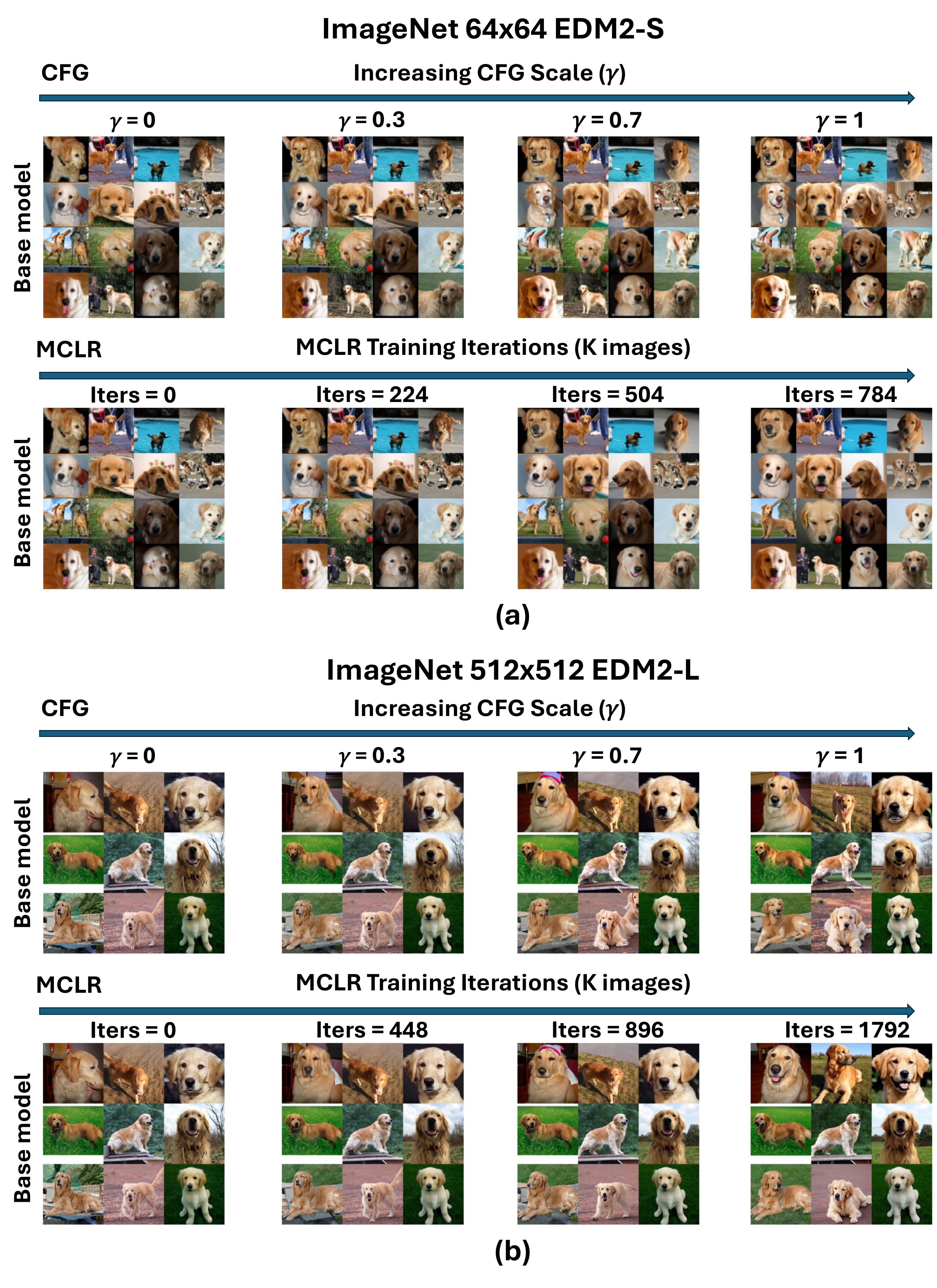}
    \caption{\textbf{Comparison between CFG and MCLR for Golden Retriever (Class 207).} (a,b) demonstrate the progressive evolution of generated samples on ImageNet-64×64 and ImageNet-512×512, respectively. Increasing the CFG scale $\gamma$ (top rows) and progressive MCLR training (bottom rows) produce similar effects, both enhancing class-specific structures in the generated images.}
    \label{fig:class_207_qua}
\end{figure*}

\begin{figure*}[t]
    \centering
    \includegraphics[width=0.8\linewidth]{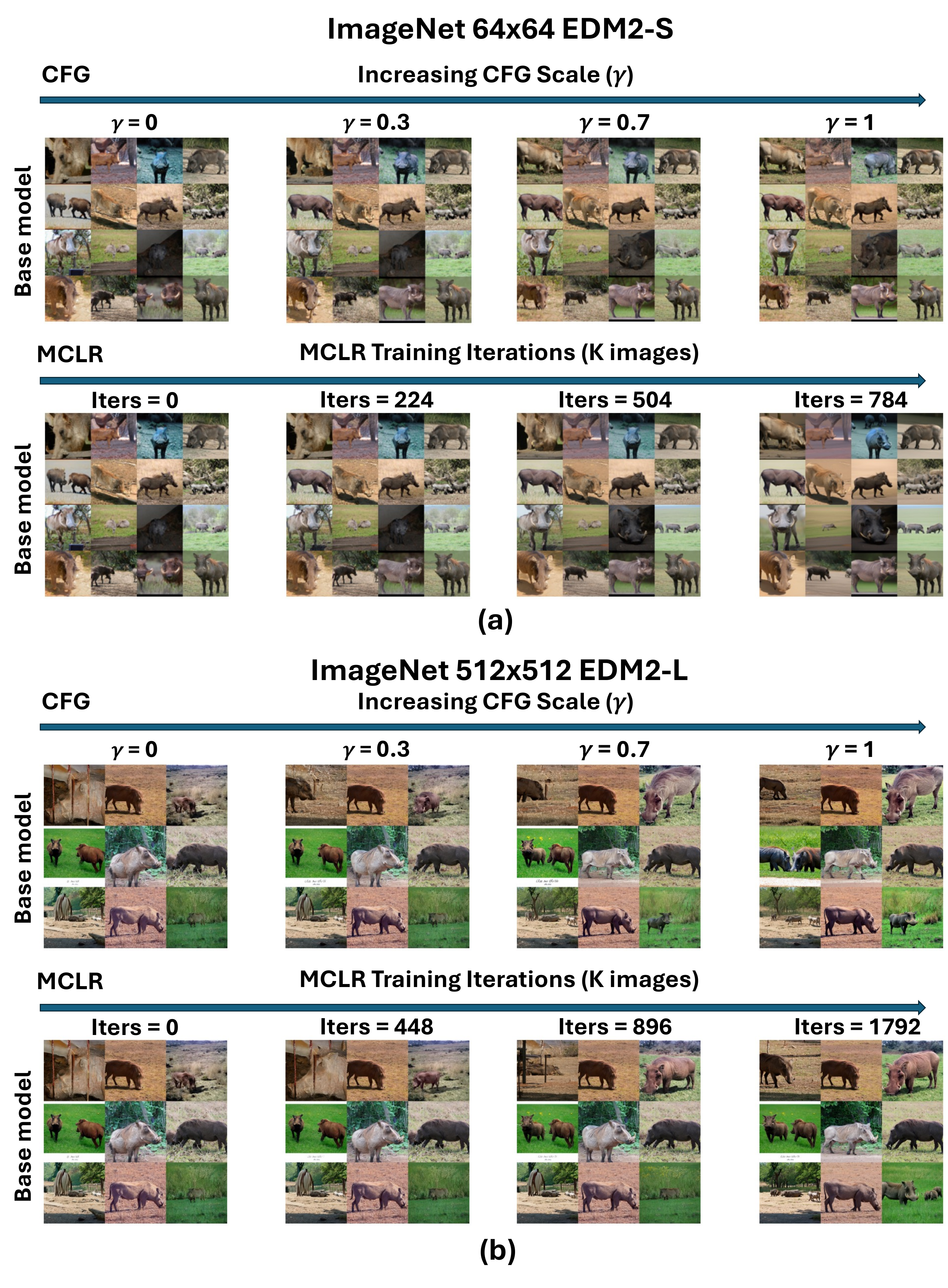}
    \caption{\textbf{Comparison between CFG and MCLR for Warthog (Class 343).} (a,b) demonstrate the progressive evolution of generated samples on ImageNet-64×64 and ImageNet-512×512, respectively. Increasing the CFG scale $\gamma$ (top rows) and progressive MCLR training (bottom rows) produce similar effects, both enhancing class-specific structures in the generated images.}    \label{fig:class_343_qua}
\end{figure*}

\begin{figure*}[t]
    \centering
    \includegraphics[width=0.8\linewidth]{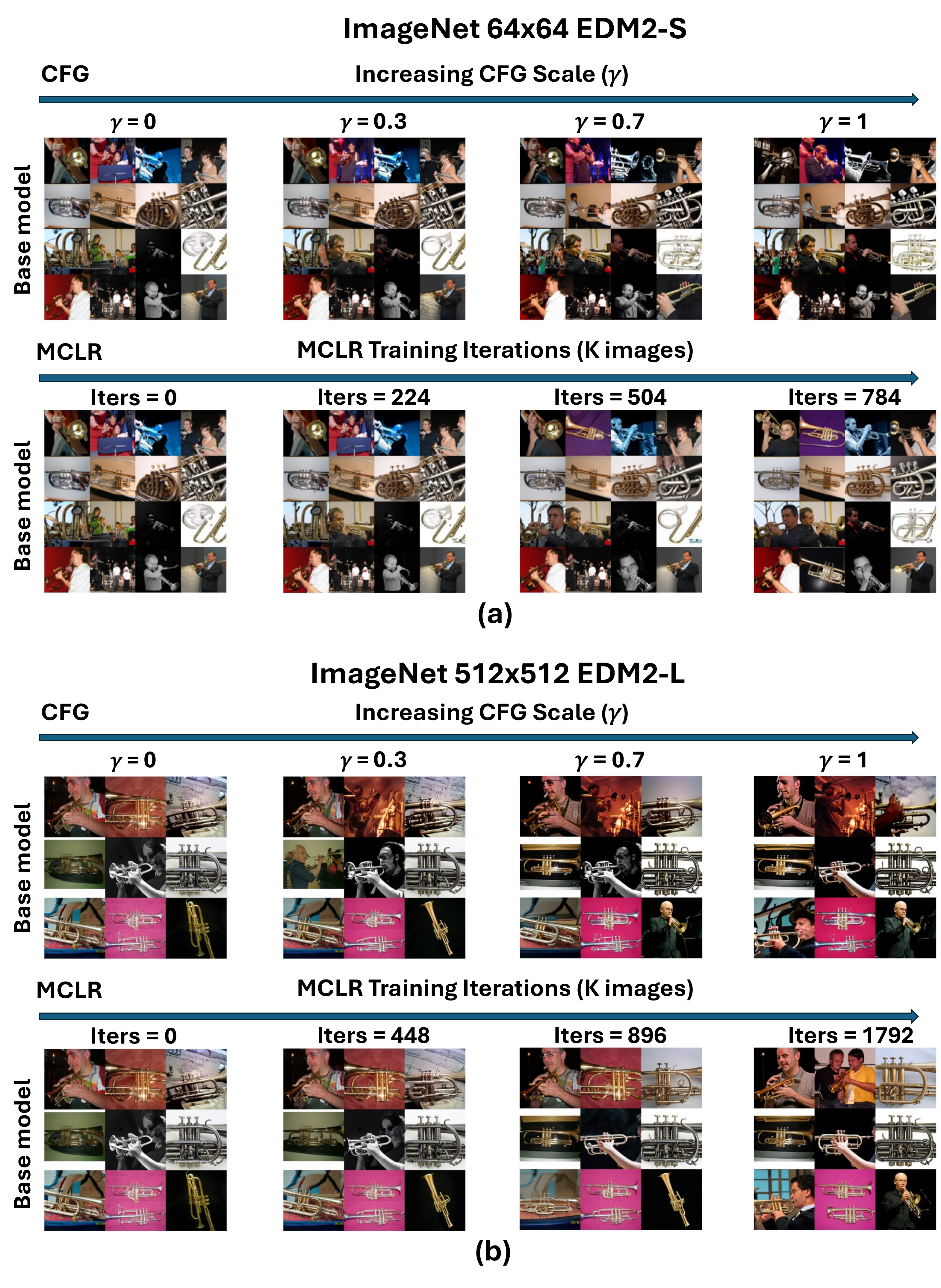}
    \caption{\textbf{Comparison between CFG and MCLR for Trumpet (Class 513).} (a,b) demonstrate the progressive evolution of generated samples on ImageNet-64×64 and ImageNet-512×512, respectively. Increasing the CFG scale $\gamma$ (top rows) and progressive MCLR training (bottom rows) produce similar effects, both enhancing class-specific structures in the generated images.}
    \label{fig:class_513_qua}
\end{figure*}

\begin{figure*}[t]
    \centering
    \includegraphics[width=0.8\linewidth]{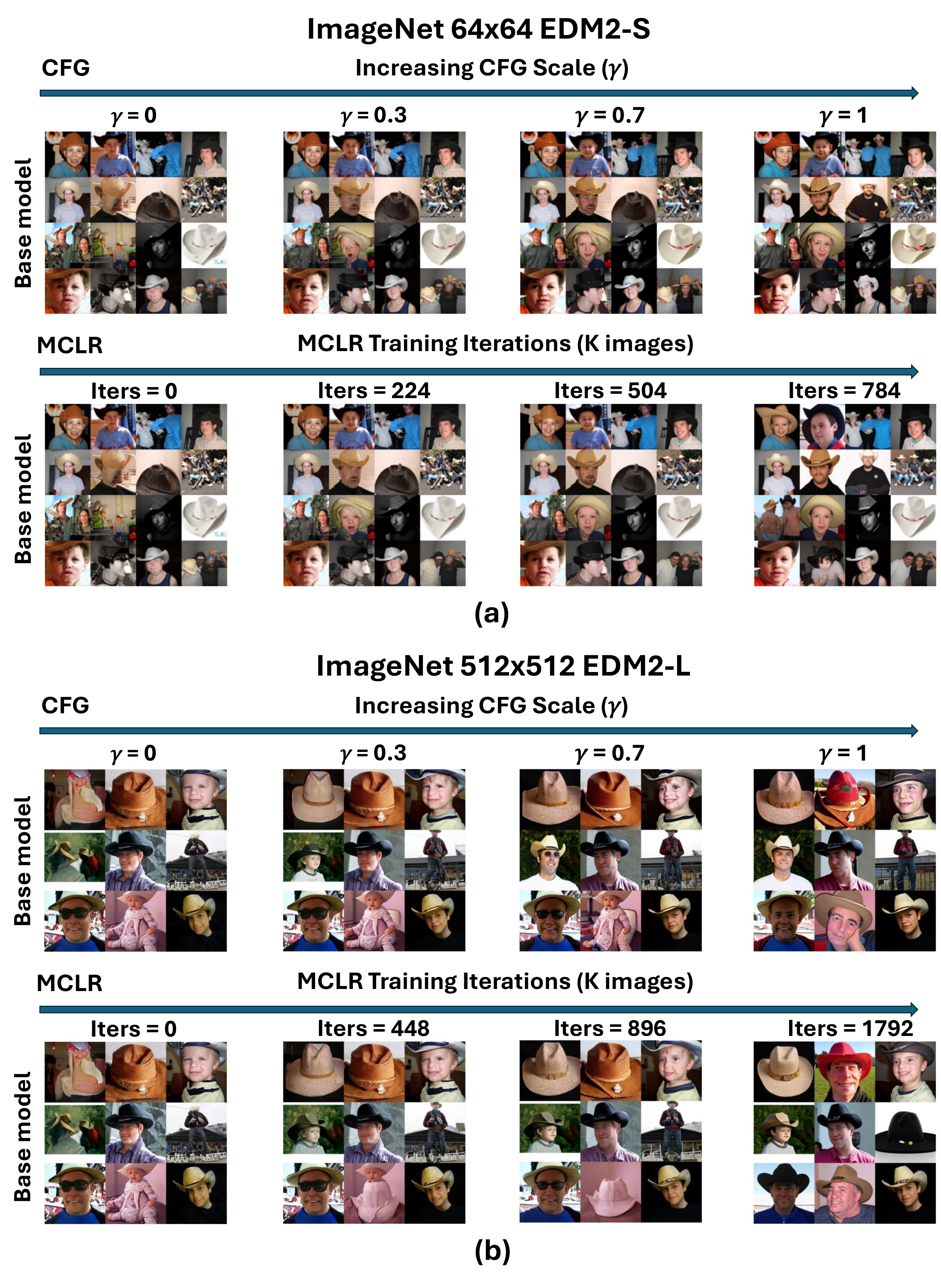}
\caption{\textbf{Comparison between CFG and MCLR for Cowboy Hat (Class 515).} (a,b) demonstrate the progressive evolution of generated samples on ImageNet-64×64 and ImageNet-512×512, respectively. Increasing the CFG scale $\gamma$ (top rows) and progressive MCLR training (bottom rows) produce similar effects, both enhancing class-specific structures in the generated images.}    \label{fig:class_515_qua}
\end{figure*}

\begin{figure*}[t]
    \centering
    \includegraphics[width=0.8\linewidth]{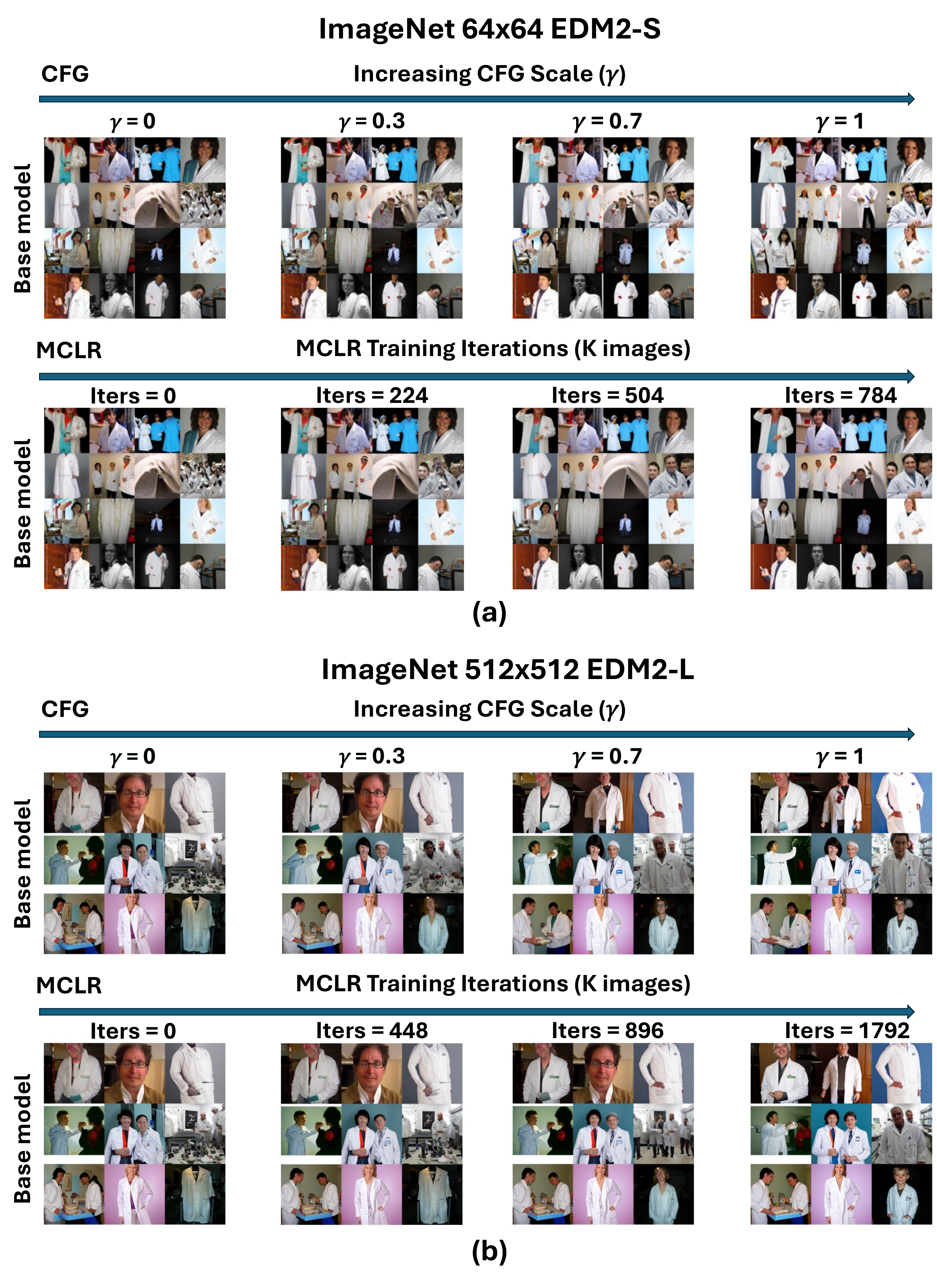}
\caption{\textbf{Comparison between CFG and MCLR for Lab Coat (Class 617).} (a,b) demonstrate the progressive evolution of generated samples on ImageNet-64×64 and ImageNet-512×512, respectively. Increasing the CFG scale $\gamma$ (top rows) and progressive MCLR training (bottom rows) produce similar effects, both enhancing class-specific structures in the generated images.}     
\label{fig:class_617_qua}
\end{figure*}

\begin{figure*}[t]
    \centering
    \includegraphics[width=1\linewidth]{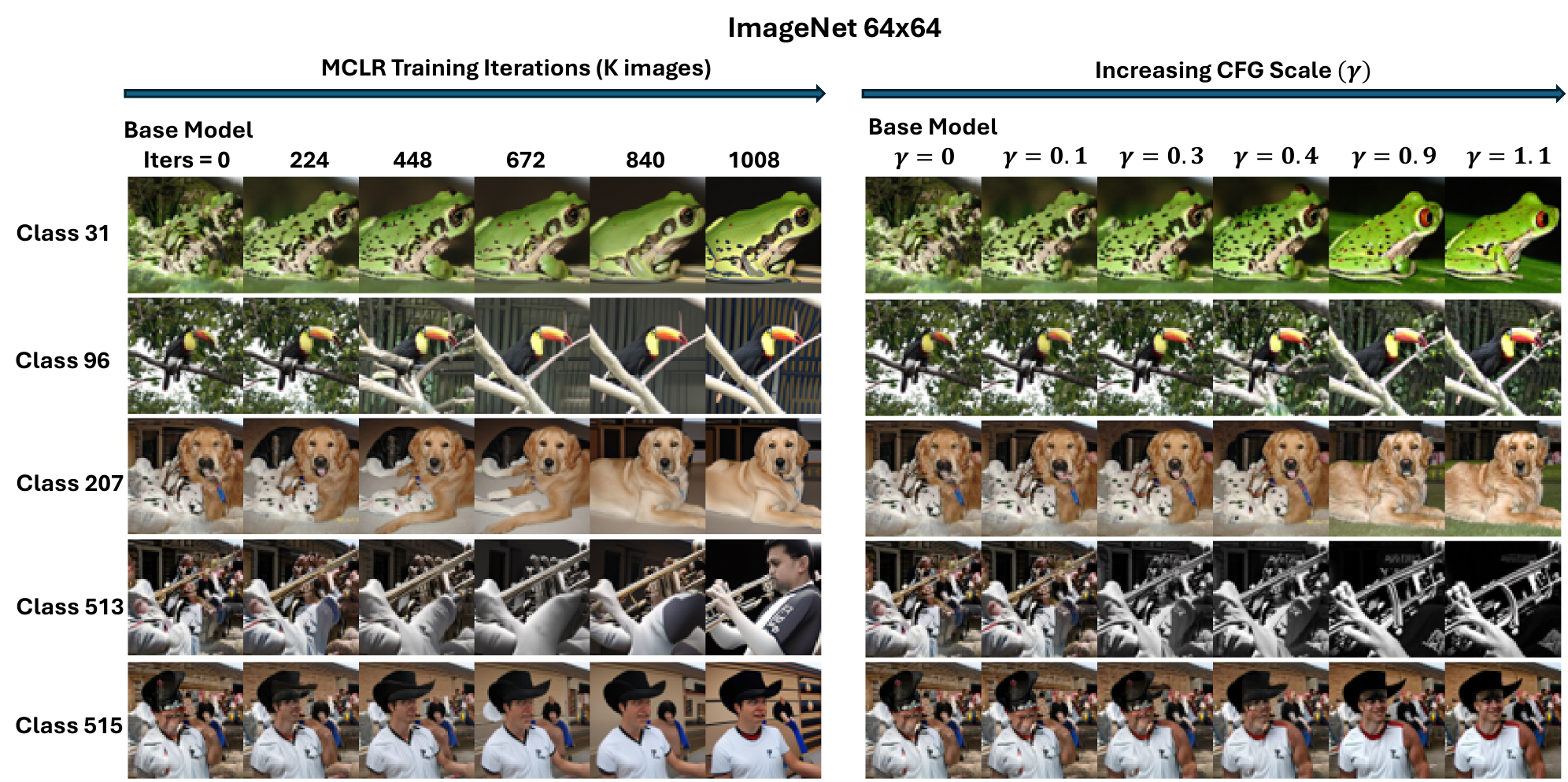}
\caption{\textbf{Comparison between CFG and MCLR.} Left and right figures demonstrate the progressive evolution of generated samples for MCLR and CFG on ImageNet-64×64, respectively, with all images initialized from the same random noise. Increasing the CFG scale $\gamma$ and progressive MCLR training produce similar effects, both enhancing class-specific structures in the generated images.}     
\label{fig:additional_64_qua_1}
\end{figure*}

\begin{figure*}[t]
    \centering
    \includegraphics[width=1\linewidth]{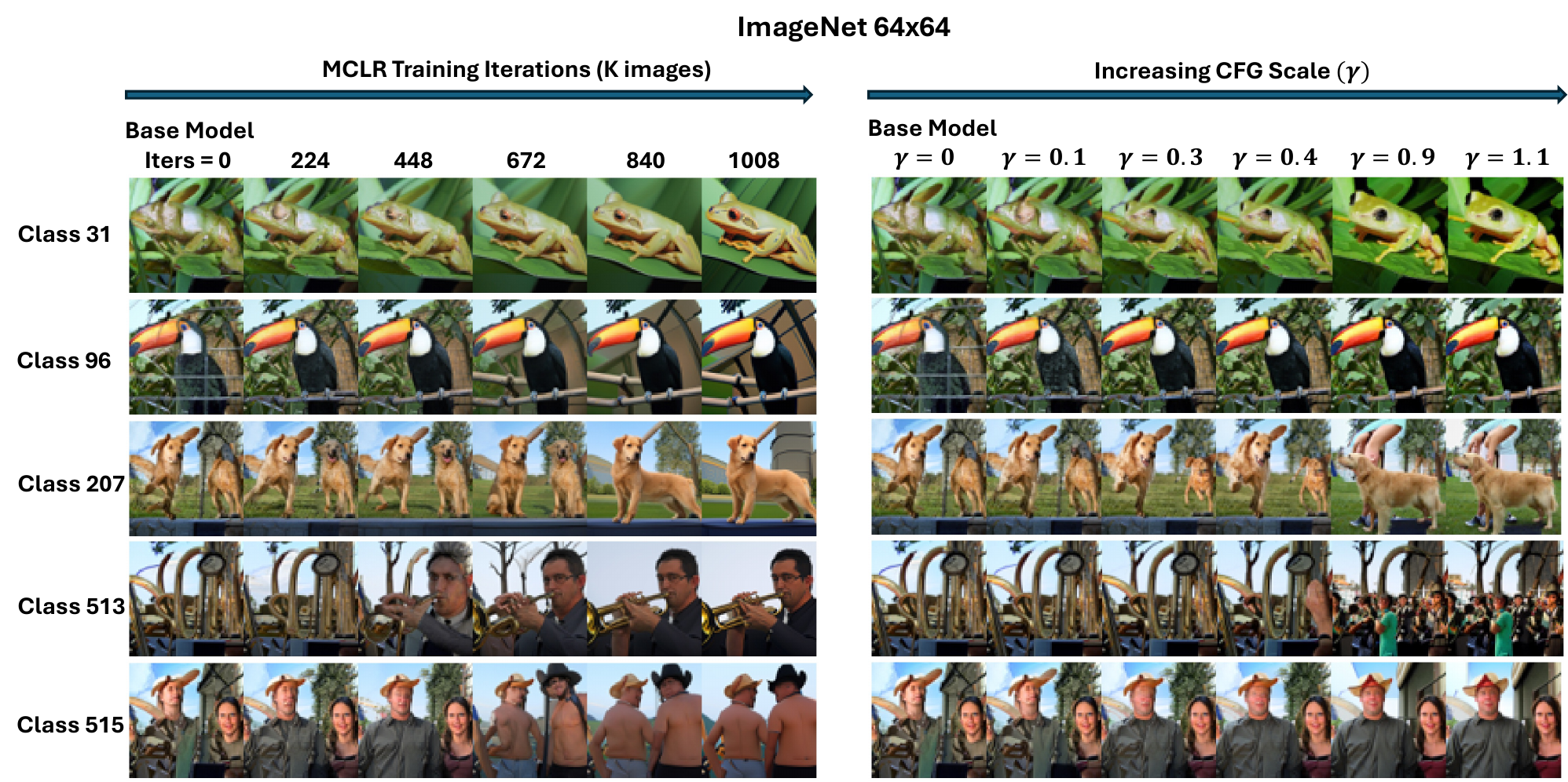}
\caption{\textbf{Comparison between CFG and MCLR.} Same as~\Cref{fig:additional_64_qua_1}, but with a different initial random noise.}     
\label{fig:additional_64_qua_2}
\end{figure*}

\begin{figure*}[t]
    \centering
    \includegraphics[width=1\linewidth]{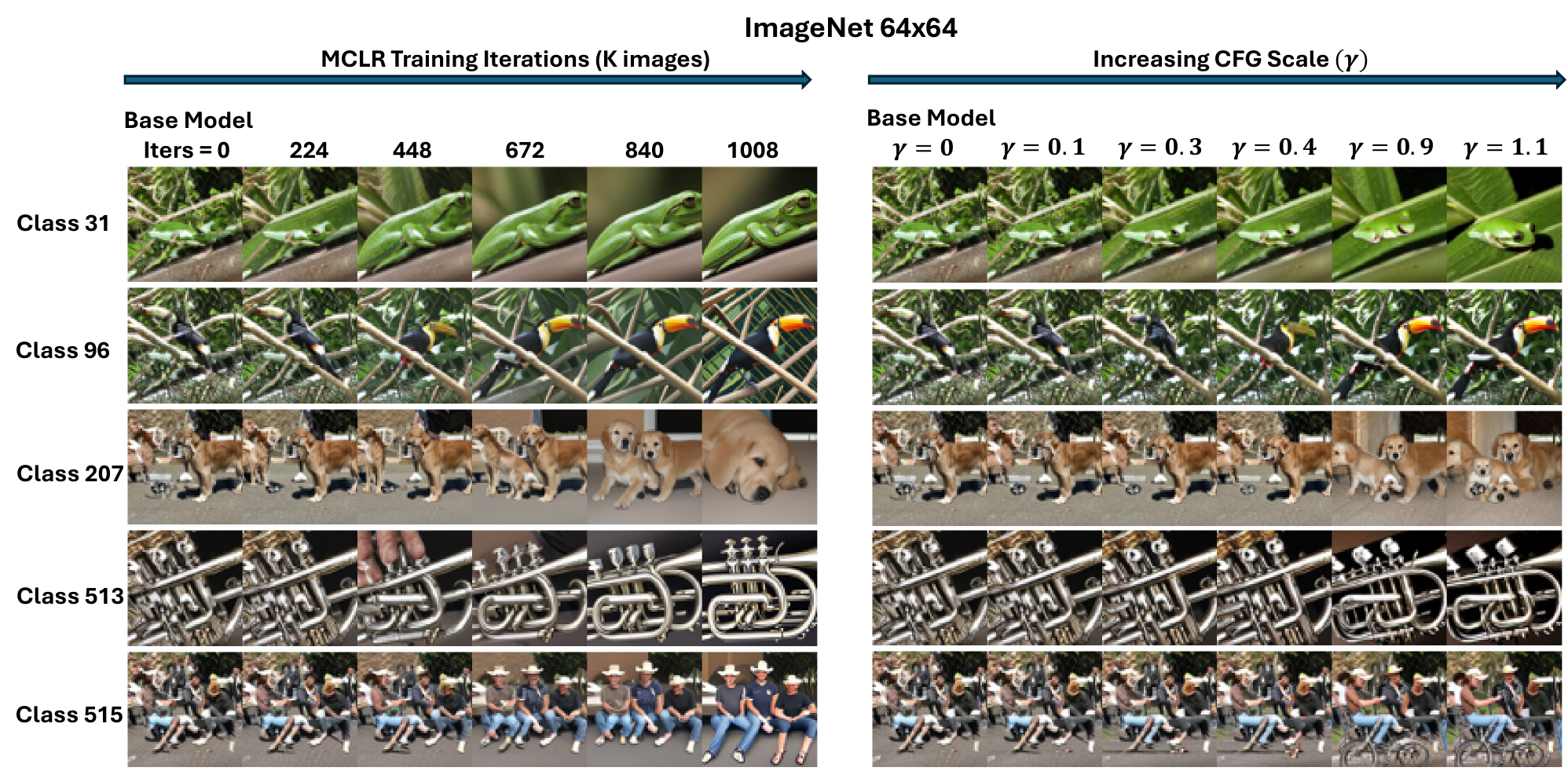}
\caption{\textbf{Comparison between CFG and MCLR.} Same as~\Cref{fig:additional_64_qua_1}, but with a different initial random noise.}    
\label{fig:additional_64_qua_3}
\end{figure*}

\begin{figure*}[t]
    \centering
    \includegraphics[width=1\linewidth]{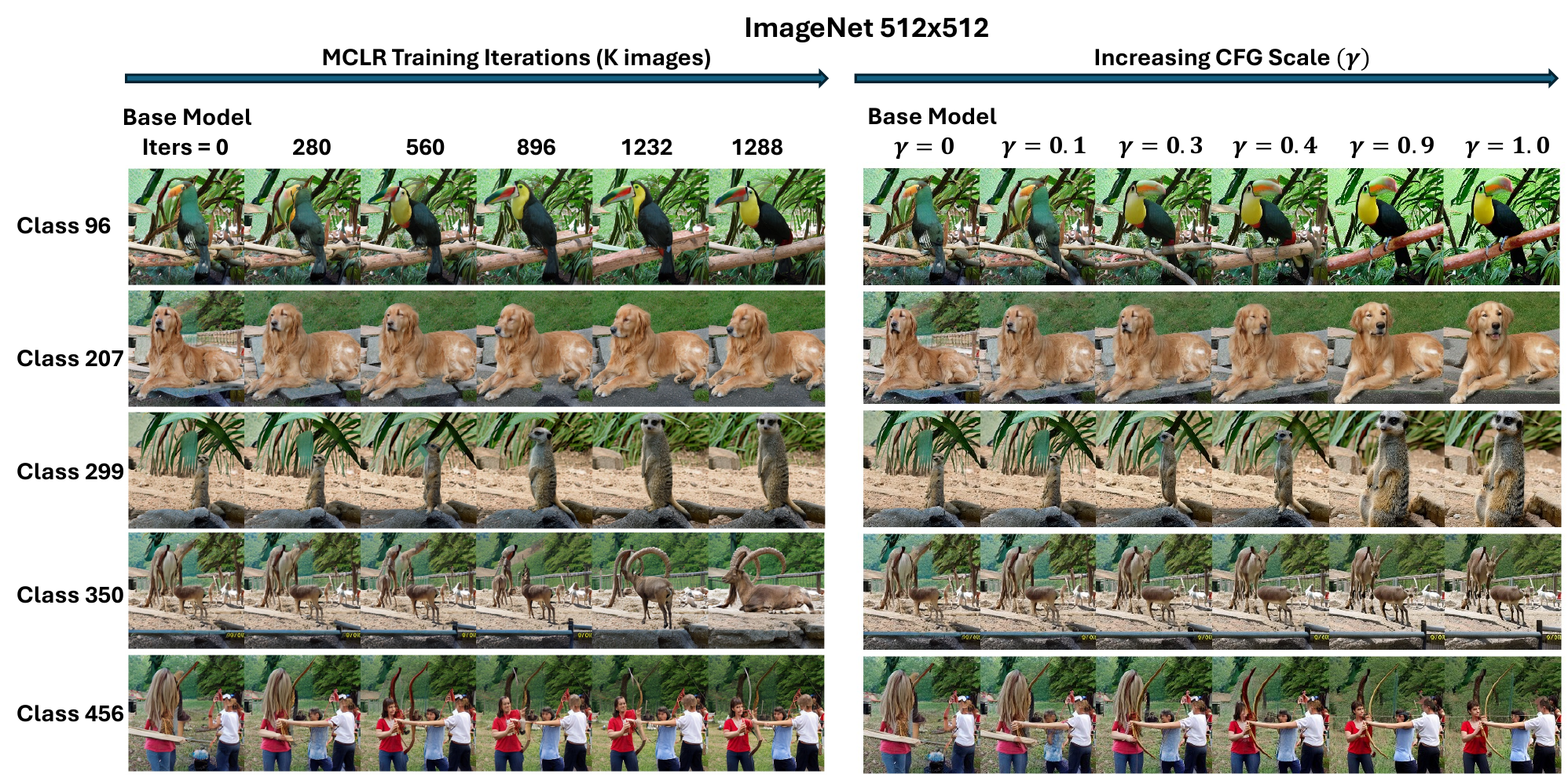}
\caption{\textbf{Comparison between CFG and MCLR.} Same as~\Cref{fig:additional_64_qua_1}, but for ImageNet-512x512 with a different initial random noise.}    
\label{fig:additional_512_qua_1}
\end{figure*}

\begin{figure*}[t]
    \centering
    \includegraphics[width=1\linewidth]{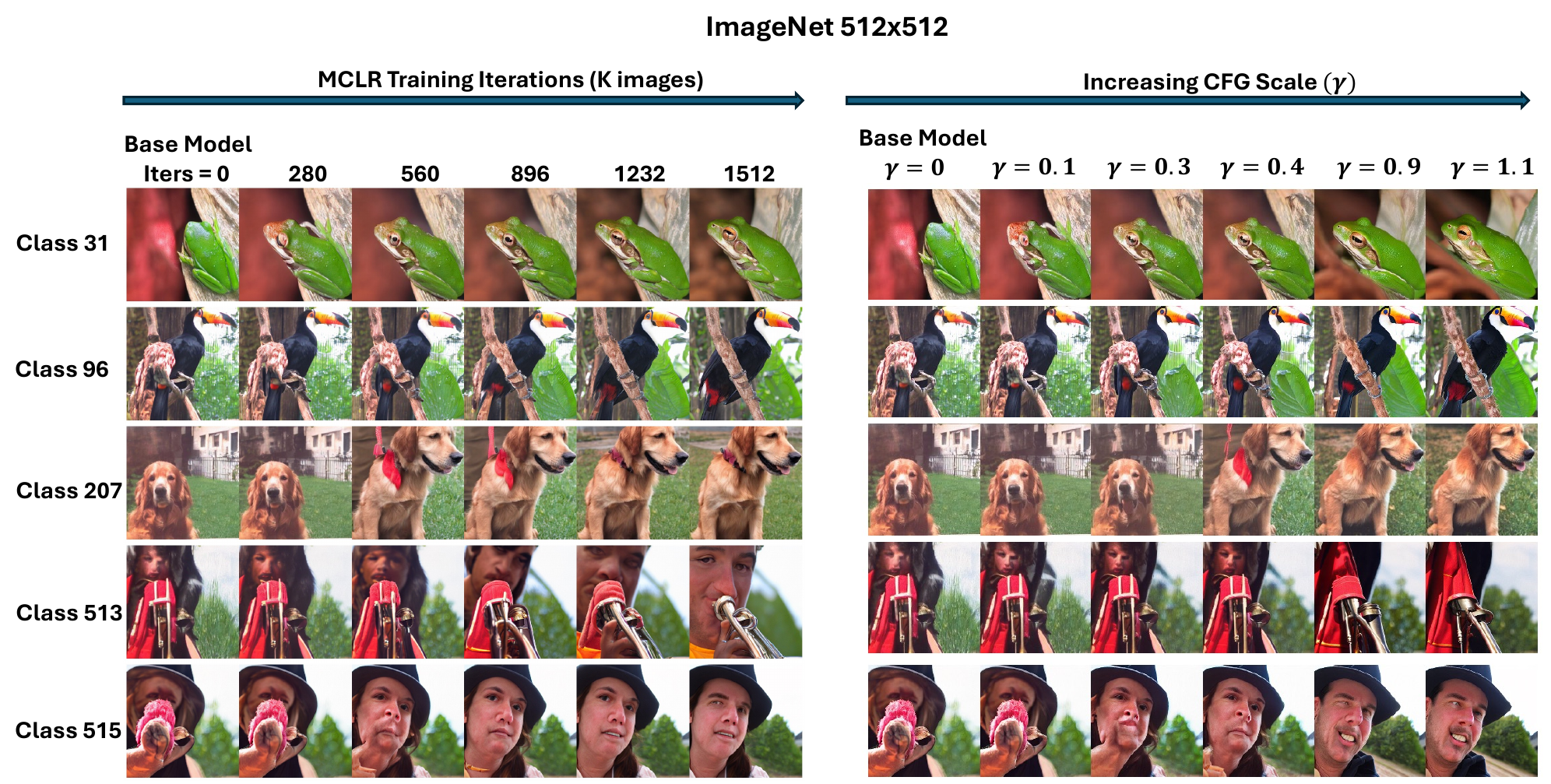}
\caption{\textbf{Comparison between CFG and MCLR.} Same as~\Cref{fig:additional_64_qua_1}, but for ImageNet-512x512 with a different initial random noise.}    
\label{fig:additional_512_qua_2}
\end{figure*}

\begin{figure*}[t]
    \centering
    \includegraphics[width=1\linewidth]{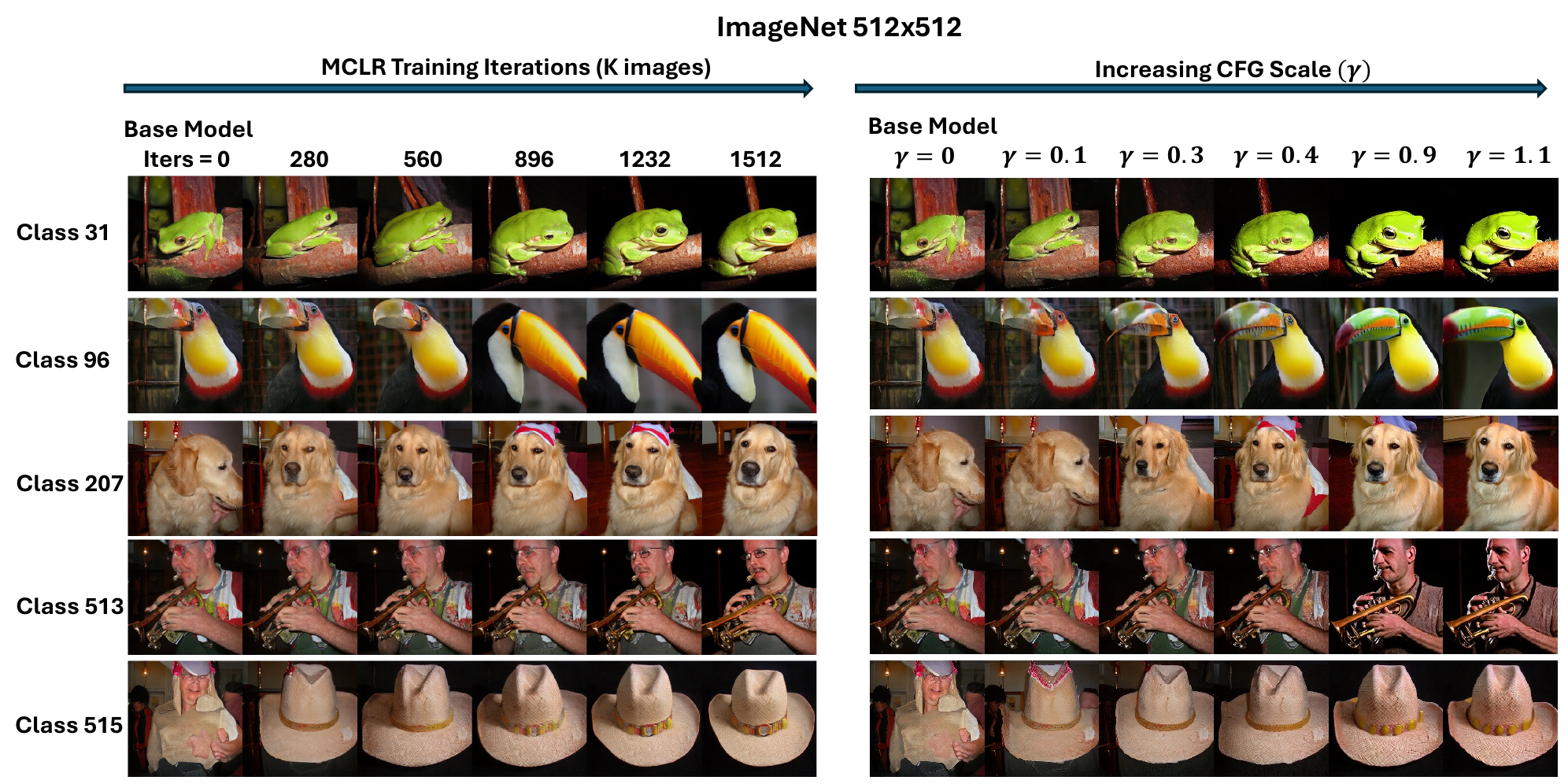}
\caption{\textbf{Comparison between CFG and MCLR.} Same as~\Cref{fig:additional_64_qua_1}, but for ImageNet-512x512 with a different initial random noise.}    
\label{fig:additional_512_qua_3}
\end{figure*}

\begin{figure*}[t]
    \centering
    \includegraphics[width=1\linewidth]{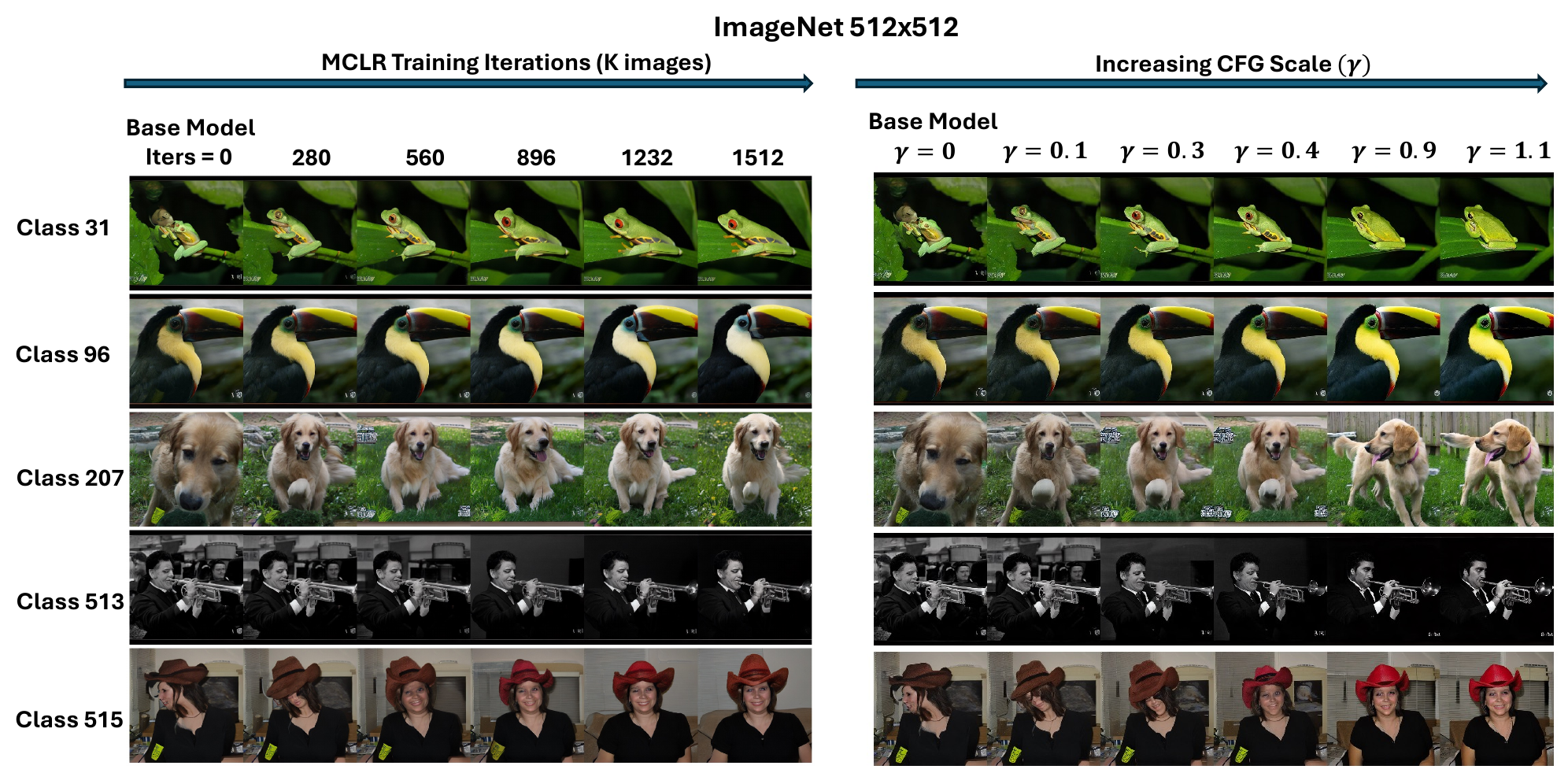}
\caption{\textbf{Comparison between CFG and MCLR.} Same as~\Cref{fig:additional_64_qua_1}, but for ImageNet-512x512 with a different initial random noise.}    
\label{fig:additional_512_qua_4}
\end{figure*}

\section{Ablation Study}
\label{ablation study appendix}

\subsection{Effect of DSM Regularization}
\label{effect of DSM Regularization}
In this section we demonstrate the effect of combining (i) DSM and (ii) a KL-style loss with MCLR. Concretely, we fine-tune SiT-XL/2 (with REPA) base model $\mathcal{D}_\text{ref}(\cdot)$ on ImageNet-$256\times256$ using the following objectives respectively with $\beta=1$:
\begin{equation}
\label{DSM regularized MCLR in appendix !!}
\begin{aligned}
&\beta \; \underbrace{\mathbb{E}_{\substack{\boldsymbol{c},\mb\epsilon\sim\mathcal{N}(\mb 0,\mb I)\,\\
                                \sigma\sim p(\sigma),
                               p(\boldsymbol{x}|\boldsymbol{c})}}
   \Bigl[
    w(\sigma)\bigl(
       \|\mb x-\mathcal{D}_{\mb\theta}(\mb x+\sigma\mb\epsilon;\sigma,\mb c)\|_2^2\bigr)
   \Bigr]}_{\text{DSM Objective}} \\
   &+ \underbrace{\mathbb{E}_{\substack{\boldsymbol{c}, \tilde{\boldsymbol{c}},\mb\epsilon\sim\mathcal{N}(\mb 0,\mb I)\,\\
                                \sigma\sim p(\sigma),
                               p(\boldsymbol{x}|\boldsymbol{c})}}
   \Bigl[
     w(\sigma)\bigl(
       \|\mb x-\mathcal{D}_{\mb\theta}(\mb x+\sigma\mb\epsilon;\sigma,\mb c)\|_2^2
       \;-  \;
       \|\mb x-\mathcal{D}_{\mb\theta}(\mb x+\sigma\mb\epsilon;\sigma,\tilde{\mb c})\|_2^2
     \bigr)
   \Bigr]}_{\text{MCLR Regularization}},
\end{aligned}
\end{equation}

\begin{equation}
\label{KL regularized MCLR appendix}
\begin{aligned}
&\beta \; \underbrace{\mathbb{E}_{\substack{\boldsymbol{c},\mb\epsilon\sim\mathcal{N}(\mb 0,\mb I)\,\\
                                \sigma\sim p(\sigma),
                               p(\boldsymbol{x}|\boldsymbol{c})}}
   \Bigl[
    w(\sigma)\bigl(
       \|\mathcal{D}_\text{ref}(\mb x+\sigma\mb\epsilon;\sigma,\mb c)-\mathcal{D}_{\mb\theta}(\mb x+\sigma\mb\epsilon;\sigma,\mb c)\|_2^2\bigr)
   \Bigr]}_{\text{KL Loss}} \\
   &+ \underbrace{\mathbb{E}_{\substack{\boldsymbol{c}, \tilde{\boldsymbol{c}},\mb\epsilon\sim\mathcal{N}(\mb 0,\mb I)\,\\
                                \sigma\sim p(\sigma),
                               p(\boldsymbol{x}|\boldsymbol{c})}}
   \Bigl[
     w(\sigma)\bigl(
       \|\mb x-\mathcal{D}_{\mb\theta}(\mb x+\sigma\mb\epsilon;\sigma,\mb c)\|_2^2
       \;-  \;
       \|\mb x-\mathcal{D}_{\mb\theta}(\mb x+\sigma\mb\epsilon;\sigma,\tilde{\mb c})\|_2^2
     \bigr)
   \Bigr]}_{\text{MCLR Regularization}},
\end{aligned}
\end{equation}
where~\eqref{KL regularized MCLR appendix} is an approximation of the KL-regularized MCLR in~\eqref{KL+MCLR objective main text}. While the KL loss is, in principle, defined over samples generated by the reference model, we approximate it using real data for simplicity.

The results of fine-tuning using~\eqref{DSM regularized MCLR in appendix !!},~\eqref{KL regularized MCLR appendix} and MCLR regularization only are presented in~\Cref{fig:SiT DSM KL ablation}. Note that while optimizing MCLR alone can lead to a better best case FD\textsubscript{DINOv2}, adding DSM or KL loss can improve the trade-off curves, in which case the performance of MCLR is comparable to CFG.

\begin{figure*}[t]
    \centering
    \includegraphics[width=0.9\linewidth]{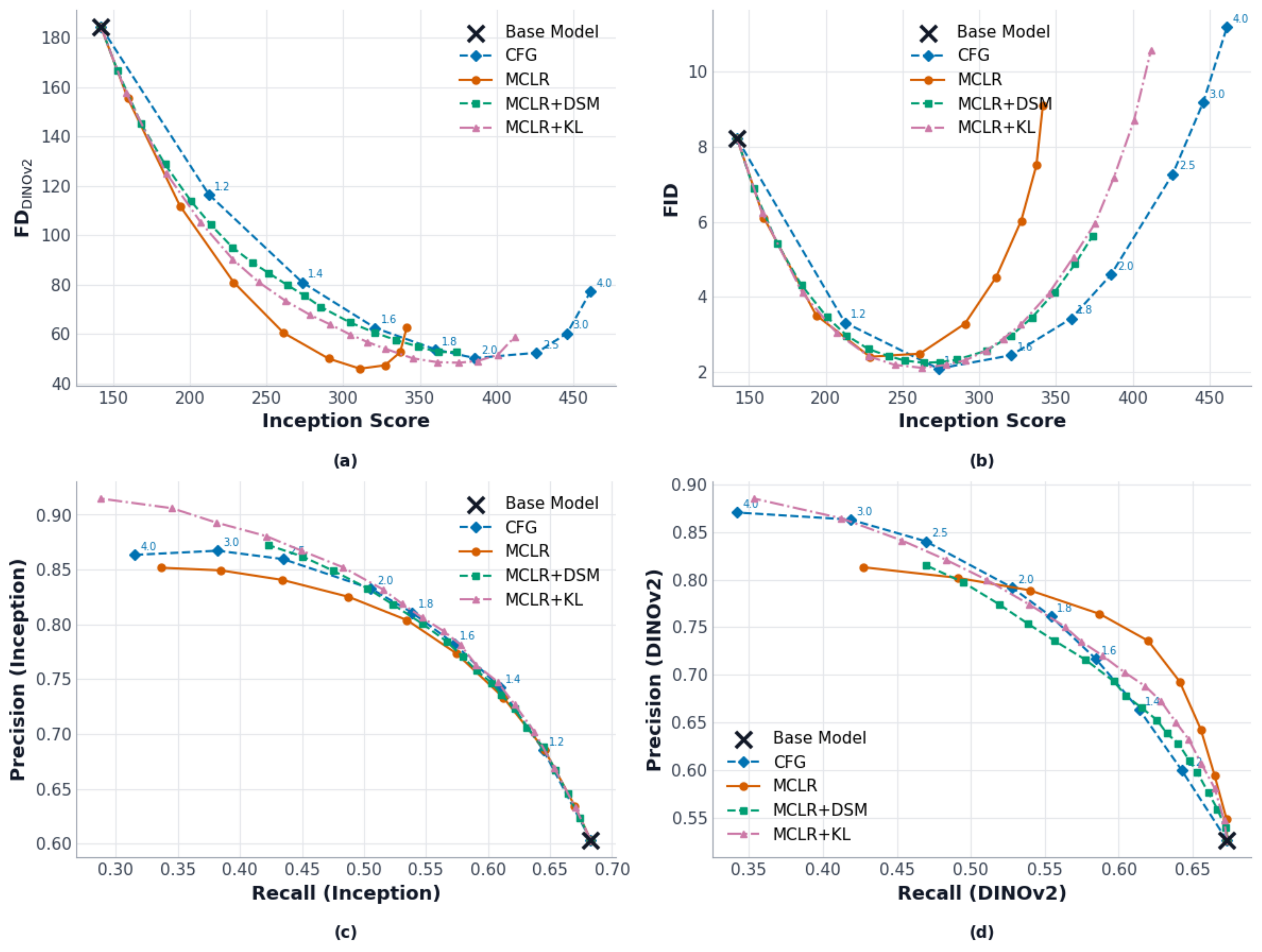}%
    \caption{\textbf{Quantitative Results for SiT-XL/2 trained on ImageNet-256$\times$256.} (a), (b), (c), (d) show the evolution of the FD--IS trade-offs and the Precision--Recall trade-offs calculated with Inception and \text{DINOv2} features respectively.
We evaluate classifier-free guidance (CFG) scales $\gamma \in \{0.1, 0.2, 0.3, 0.4, 0.5, 0.7, 0.9, 1, 1.5, 2.0, 3.0\}$.}
    \label{fig:SiT DSM KL ablation}
\end{figure*}

\begin{figure*}[t]
    \centering
    \includegraphics[width=0.9\linewidth]{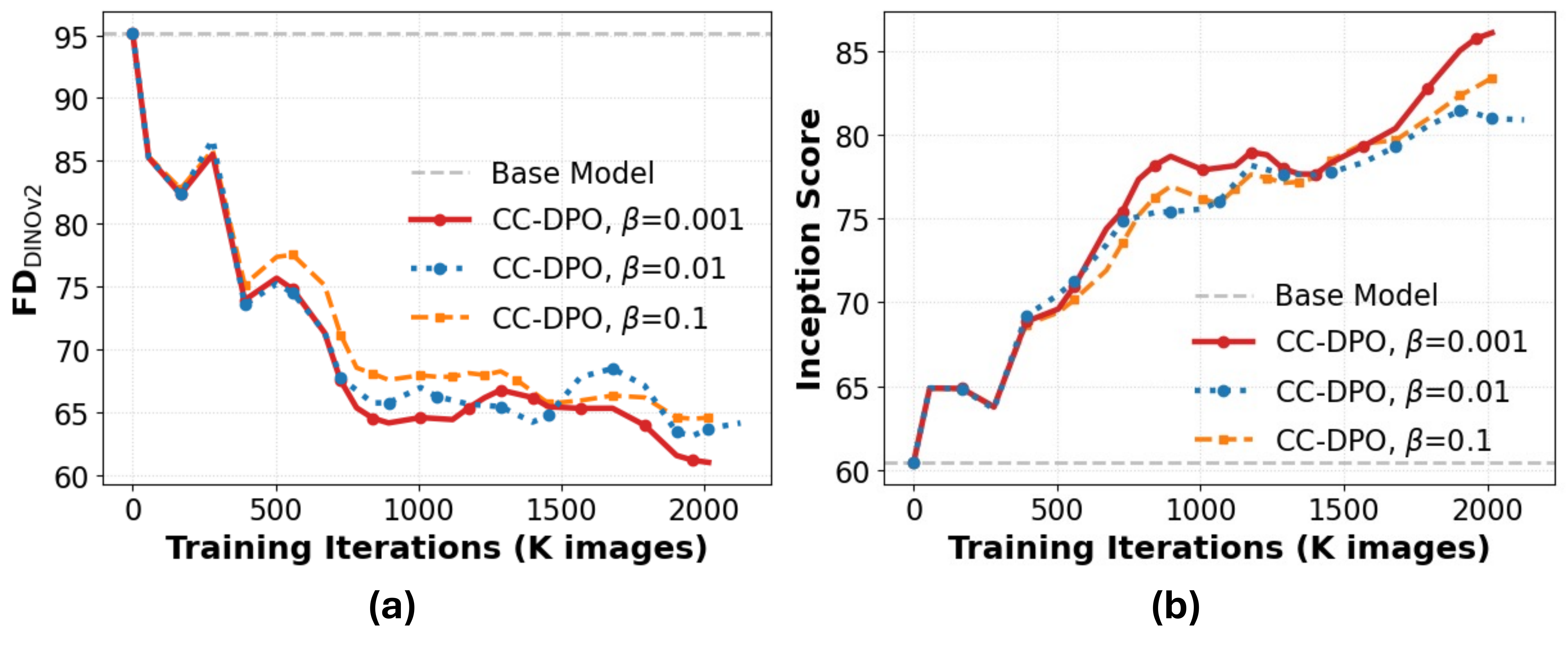}
\caption{\textbf{Effects of $\beta$ in CC-DPO.} (a,b) compare fine-tuning diffusion models using CC-DPO with different $\beta$ evaluated on the EDM2-S model trained on ImageNet-$64\times64$. Performance is reported in terms of FD\textsubscript{DINOv2} and Inception Score. CC-DPO achieves stable performance under different $\beta$, with a smaller $\beta$ leads to marginally better performance on selected metrics.}
\label{fig:ablation_DPO_beta}
\end{figure*}

\begin{figure*}[t]
    \centering
    \includegraphics[width=0.9\linewidth]{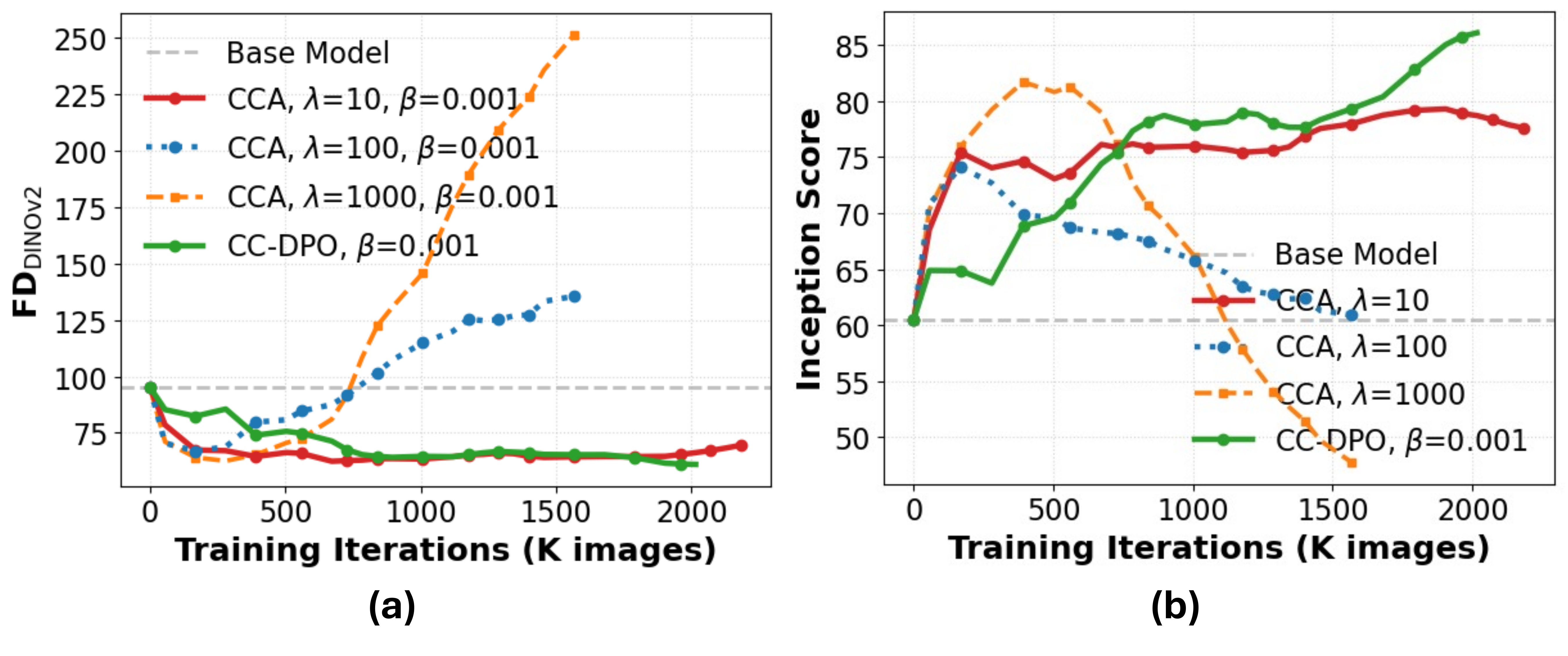}
\caption{\textbf{Effects of $\lambda$ in CCA.} (a,b) compare fine-tuning diffusion models using CCA with different $\lambda$ evaluated on the EDM2-S model trained on ImageNet-$64\times64$. Performance is reported in terms of FD\textsubscript{DINOv2} and Inception Score. Note with the same $\beta$, CC-DPO achieves comparable performance as CCA while at the same time has fewer hyperparameters.}
\label{fig:ablation_cca_lambda}
\end{figure*}

\subsection{Hyperparameter Sensitivity of CC-DPO}
The hyperparameter $\beta$ in the CC-DPO objective~\eqref{DPO objective-conditional} controls the strength of the KL regularization between the base model and the fine-tuned model. We study its effect on EDM2-S trained on ImageNet-$64\times64$. As shown in~\Cref{fig:ablation_DPO_beta}, CC-DPO exhibits relatively stable performance across a wide range of $\beta$ values. In particular, smaller $\beta$ values achieve marginally better best-case FD\textsubscript{DINOv2} and Inception Score.

\subsection{Hyperparameter Sensitivity of CCA}
\label{subsec:hyperparameter for CCA}
The hyperparameter $\lambda$ in the CCA objective~\eqref{CCA objective, lower} controls the strength of the contrastive term, with larger $\lambda$ placing greater emphasis on penalizing the non-preferred sample $\mb x_l$. We study its effect on EDM2-S trained on ImageNet-$64\times64$, fixing $\beta=0.001$. As shown in~\Cref{fig:ablation_cca_lambda}, $\lambda$ has little impact on the best achievable FD\textsubscript{DINOv2}; however, smaller $\lambda$ values lead to more stable and smoother convergence. In contrast, larger $\lambda$ values can yield slightly higher best-case Inception Scores, but often cause training instability and eventual collapse when training is prolonged.

In practice, jointly tuning $\beta$ and $\lambda$ is challenging. Although CCA and CC-DPO share the same theoretical optimal solution, CC-DPO requires tuning only a single hyperparameter $\beta$. Empirically, CCA does not outperform CC-DPO under careful tuning as shown in~\Cref{fig:ablation_cca_lambda}. Therefore, we recommend CC-DPO over CCA in practice.


\section{Discussion on Related Works}
\label{More related works}
Our work contributes to a growing line of research~\cite{chentoward,chenvisual,tang2025diffusion} that seeks to induce CFG-like effect by modifying the standard training objective, rather than applying CFG at inference time. The most closely related approach is CCA~\cite{chentoward}, which aims to learn the gamma-powered distribution~\eqref{optimal CC-DPO solution} in autoregressive models using \textbf{Noise Contrastive Estimation (NCE)}~\cite{gutmann2010noise}. While CCA also considers DPO as a baseline and reports inferior performance, our theoretical analysis demonstrates that CC-DPO and CCA admit the same optimal solution, and are therefore fundamentally equivalent at the population level. With a correct implementation, we find that CC-DPO consistently matches or exceeds the empirical performance of CCA while requiring fewer hyperparameters and simpler optimization. We further extend CCA to diffusion models by approximating log-likelihoods via the ELBO and include this variant as a baseline in our experiments. A detailed theoretical and empirical analysis of CCA is provided in~\cref{sec: theoretical analysis on CCA}. Another relevant direction is \textbf{Guidance-Free Training (GFT)}~\cite{chenvisual, tang2025diffusion}, which aims to directly train diffusion models to reproduce CFG-induced score functions. GFT relies on a reparameterization heuristic and inherits the same theoretical ambiguities associated with CFG itself. In contrast, the proposed MCLR objective provides a clearer mechanistic interpretation of CFG.

Besides these approaches, several works sought to enhance conditional generative modeling through class-wise contrastive mechanisms.~\cite{yan2024training} introduce a contrastive objective to improve the modeling of tail classes;~\cite{lee2025aligning} employ an InfoNCE-style loss~\cite{oord2018representation} to strengthen text-image alignment in diffusion models;~\cite{kadkhodaie2024feature} and~\cite{yun2025no} regularize diffusion models by encouraging class separation in feature space. In contrast to MCLR, CC-DPO and CCA, these approaches are primarily empirical and do not provide a theoretical characterization of the conditional distribution induced by their objectives.

Lastly, beyond class-wise contrast, another complementary strategy for improving generative modeling is to contrast high-quality (“real”) data against low-quality (“synthetic”) samples. Zheng et al.~\cite{zheng2025direct} propose Direct Discriminative Optimization (DDO), which employs the same contrastive objective as CCA, but replaces class-conditioned contrast with a real–synthetic discrimination signal. We include DDO as one of our baselines.


\end{document}